\newenvironment{proof_sketch}{%
  \proof}{\endproof}
\newtheorem{proposition}{Proposition}
\newcommand{\code}[1]{\texttt{#1}}
\newtheorem{definition}{Definition}
\newtheorem{theorem}{Theorem}
\newtheorem{lemma}{Lemma}
\newtheorem{assumption}{Assumption}
\icmltitlerunning{More Efficient Off-Policy Evaluation through Regularized Targeted Learning}
\begin{document}

\twocolumn[
\icmltitle{More Efficient Off-Policy Evaluation through Regularized Targeted Learning}

\icmlsetsymbol{equal}{*}

\begin{icmlauthorlist}
\icmlauthor{Aurélien F. Bibaut}{equal,ucberkeley}
\icmlauthor{Ivana Malenica}{equal,ucberkeley}
\icmlauthor{Nikos Vlassis}{netflix}
\icmlauthor{Mark J. van der Laan}{ucberkeley}
\end{icmlauthorlist}

\icmlaffiliation{ucberkeley}{University of California, Berkeley, CA}
\icmlaffiliation{netflix}{Netflix, Los Gatos, CA}

\icmlcorrespondingauthor{Aurélien F. Bibaut}{aurelien.bibaut@berkeley.edu}

\icmlkeywords{off-policy evaluation, TMLE, reinforcement learning}

\vskip 0.3in
]



\printAffiliationsAndNotice{\icmlEqualContribution} 

\begin{abstract}
We study the problem of off-policy evaluation (OPE) in Reinforcement Learning (RL), where the aim is to estimate the performance of a new policy given historical data that may have been generated by a different policy, or policies. In particular, we introduce a novel doubly-robust estimator for the OPE problem in RL, based on the Targeted Maximum Likelihood Estimation principle from the statistical causal inference literature. We also introduce several variance reduction techniques that lead to impressive performance gains in off-policy evaluation. We show empirically that our estimator uniformly wins over existing off-policy evaluation methods across multiple RL environments and various levels of model misspecification. Finally, we further the existing theoretical analysis of estimators for the RL off-policy estimation problem by showing their $O_P(1/\sqrt{n})$ rate of convergence and characterizing their asymptotic distribution.

\textbf{Note: We are uploading the full paper with the appendix as of 12/12/2019, as we noticed that, unlike the main text, the appendix has not been made available on PMLR's website. The version of the appendix in this document is the same that we have been sending by email since June 2019 to readers who solicited it.}
\end{abstract}

\section{Introduction}
\label{intro}

\textit{Off-policy evaluation} (OPE) is an increasingly important problem in reinforcement learning. Works on OPE address the pressing issue of evaluating the performance of a novel policy in a setting where actual enforcement might be too costly, infeasible, or even hazardous. This situation arises in many fields, including medicine, finance, advertising, and education, to name a few \cite{murphy2001,petersen2014ltmle,theocharous2015,hoiles2016}. The OPE problem can be treated as a counterfactual quantity estimation problem, as we inquire about the mean reward we would have accrued, had we, contrary to fact, implemented the policy $\pi_e$ at the time of data-collection. Estimating and inferring such counterfactual quantities is a well studied problem in statistical causal inference, and has led to many methodological developments. One of the things we aim to do in this work is to further earlier efforts \cite{Dudik2011} in bridging the gap between the reinforcement learning and causal inference fields.

There are roughly two predominant classes of approaches to off-policy value evaluation in RL \cite{Jiang2015}. The first is the \textit{direct method} (DM), analogous to the \textit{G-computation} procedure in causal inference \cite{Gcomp1999,Gcomp2000}. The direct method first fits a model of the system's dynamics and then uses the learned fit in order to estimate the mean reward of the target policy (evaluation policy). The estimators produced by this approach usually exhibit low variance, but suffer from high bias when the model fit is misspecified or the sample size is small relative to the complexity of the function class of the model \cite{mannor2007}. The second major avenue for off-policy value evaluation is \textit{importance sampling} methods, also termed \textit{inverse propensity score} methods in statistical causal inference \cite{rosenbaum1983}. Importance sampling (IS) attempts to correct the mismatch between the distributions produced by the behavior and target policies \cite{Precup2000,PrecupThesis}. IS estimators are unbiased under mild conditions, but their variance tends to be large when the evaluation and behavior policies differ significantly \cite{Fara2018}, and grows exponentially with the horizon, rendering them \cite{Fara2018} impractical for many RL settings. A third class of estimators, \textit{Doubly Robust} (DR) estimators, obtained by combining a DM estimator and an IS estimator, are becoming standard in OPE \cite{Fara2018,Jiang2015,thomas2016}. These originate from the statistics literature \cite{robins1994,robins1995,bang2005,tmle2006,book2011,book2018}, and were introduced in the RL literature by \citet{Dudik2011}. Combining a DM and an IS estimator under the form of a DR estimator leads to lower bias than DM alone, and lower variance than IS alone.

Our contribution to OPE in RL is multifold. First we adapt a doubly robust estimator from statistical causal inference, the Longitudinal Targeted Maximum Likelihood Estimator (LTMLE) to the OPE in RL setting. We show that our adapted estimator converges at rate $O_P(1/\sqrt{n})$ to the true policy value. Deriving the LTMLE requires us to identify a mathematical object known in semiparametric statistics as the \textit{efficient influence function} (EIF) of the estimand (policy value). To the best of our knowledge, this article is the first one to explicitly derive the EIF of the policy value for the OPE problem in RL. Knowledge of the EIF allows us to prove that both our estimator (the LTMLE) and recently proposed DR estimators \citep{Jiang2015, thomas2016} are optimal in the sense that they achieve the generalized Cramer-Rao lower bound.

Second, we introduce an idea from statistics to make better use of the data than prior OPE works \citep{Jiang2015, thomas2016}. We noticed that most OPE papers, at least in theory, use sample splitting: the $Q$-function is fitted on a split of the data, while the DR estimator is obtained by evaluating the fitted $Q$-function on another split. We propose a cross-validation-based technique that allows to essentially average the $Q$-function over the entire sample, leading to a constant-factor gain in risk.

Finally, and most importantly for practice, we propose several regularization techniques for the LTMLE estimators, out of which some, but not all, apply to other DR estimators. Using the MAGIC ensemble method from \citet{thomas2016}, we construct an estimator that combines various regularized LTMLEs. We call our estimator RLTMLE (TMLE for RL). Our experiments demonstrate that RLTMLE outperforms all considered competing off-policy methods, uniformly across multiple RL environments and levels of model misspecification.

\section{Statistical Formulation of the Problem}\label{section-statistical_formulation}
\subsection{Markov Decision Process}
Consider a Markov Decision Process (MDP) defined as a tuple $(\mathcal{S},\mathcal{A}, \mathcal{R}, P_1, P, \gamma)$, where $\mathcal{S}$ and $\mathcal{A}$ are the state and action spaces, and $\gamma \in (0,1]$ is a discount factor. A trajectory $H$ is a succession of states $S_t$, actions $A_t$ and rewards $R_t$, observed from $t=1$ to the horizon $t=T$: $H=(S_1,A_1,R_1,...,S_T, A_T, R_T)$. For all $(s,a,r,s') \in \mathcal{S} \times \mathcal{A}\times \mathcal{R} \times \mathcal{S}$,  $P(s', r|s, a)$ is the probability of collecting reward $r$ and transitioning to state $s'$, conditional on starting in state $s$ and taking action $a$, and $P_1(s)$ is the probability that the initial is $s$. A policy $\pi$ is a sequence of conditional distributions $(\pi_1, \pi_2,... )$ that stochastically map a state to an action: for all $t$, $A_t|S_t \sim \pi_t$. 

Suppose we are given $n$ i.i.d. $T$-step trajectories of the MDP, $D=(H_1,...,H_n)$, collected under the behavior policy $\pi_b=(\pi_{b,1},....,\pi_{b,T})$. We assume all trajectories have the same initial state $s_1$, allowing for the data-generating mechanism to be fully characterized by $(P, \pi_b)$.

\subsection{Estimation Target}
The goal of OPE is to estimate the average cumulative discounted reward we would have obtained by carrying out the target policy $\pi_e$ instead of policy $\pi_b$. That is, we want to estimate the following counterfactual quantity:
\begin{equation}\label{counterfactual_target}
    V_1^{\pi_e}(s_1) := E_{P, \pi_e}\left[\sum_{t=1}^T \gamma^{t} R_t |S_1=s_1\right]. 
\end{equation}
Consider the following common assumption from the causal inference literature.
\begin{assumption}[Absolute continuity]\label{assumption_absolute_continuity}
For all $s, a \in \mathcal{S} \times \mathcal{A}$, if $\pi_b(a|s) = 0$, then $\pi_e(a|s) = 0$ too.
\end{assumption}
Under assumption \ref{assumption_absolute_continuity} and the Markov assumption of the MDP model,  $V_1^{\pi_e}(s_1)$  can be written as an expectation under the data-generating mechanism $(P, \pi_b)$:
\begin{align}\label{identified_target}
    V_1^{\pi_e}(s_1) = E_{P, \pi_b}\left[\prod_{t=1}^T \frac{\pi_{e,t}(A_t|S_t)}{\pi_{b,t}(A_t|S_t)} \sum_{t=1}^T \gamma^{t} R_t \bigg| S_1 = s_1\right]. 
\end{align}
For $t=1,...,T$, define $\bar{R}_{t:T} := \sum_{\tau=t}^T \gamma^{\tau-t} R_\tau$ as the total reward from step $t$ to step $T$. For all $1\leq t_1 \leq t_2 \leq T$, define $\rho_{t_1 : t_2} := \prod_{\tau=t_1}^{t_2} \pi_{e,\tau}(A_\tau|S_\tau) / \pi_{b, \tau}(A_\tau|S_\tau)$. For all $t = 1,...,T$, we will use the shortcut notation $\rho_t := \rho_{1:t}$. We use the convention that $\rho_0 = 0$. Denote $\bar{R}^{(i)}_{t:T}$, $\rho_t^{(i)}$, $\rho_{t_1:t_2}^{(i)}$ the corresponding quantities for a sample trajectory $H_i$. Consistently with \eqref{counterfactual_target} and \eqref{identified_target}, we define, for any $t=1,...,T$, and $s \in \mathcal{S}$, the value function (or reward-to-go) from time point $t$ and state $s$, as
\begin{align}\label{whyim}
   V_t^{\pi_e}(s) :&= E_{P, \pi_e} [ \bar{R}_{t:T} |S_t = s] \\
   &= E_{P, \pi_b}\left[ \rho_{t:T} \bar{R}_{t:T}|S_t=s \right].
\end{align} 
For every $t=1,...,T$, $s\in \mathcal{S}$, $a\in \mathcal{A}$, we further define the action-value function from time step $t$ as
\begin{align}
    Q_t^{\pi_e}(s,a) &:= E_{P, \pi_e}\left[ \bar{R}_{t:T}|S_t = s, A_t= a \right] \nonumber \\
    &= E_{P, \pi_b}\left[\rho_{t:T} \bar{R}_{t:T} | S_t = s, A_t=a \right].
\end{align}

\section{An existing state-of-the art approach}\label{section_existing_work}

Our method can be seen as building upon and improving on \citet{thomas2016}. We believe it helps understanding our contribution to first briefly describe their estimators. For a detailed review of OPE methods, we refer the interested reader to the vast and excellent literature on the topic \cite{Precup2000,ThomasThesis,Jiang2015,Fara2018}.

\subsection{Weighted Doubly Robust Estimator}

\citet{Jiang2015} were the first authors to propose a doubly robust estimator for off-policy evaluation in the MDP setting. \citet{thomas2016} propose a stabilized version of the DR estimator of \citet{Jiang2015}, termed Weighted Doubly Robust (WDR) estimator, which they obtain by replacing the importance sampling weights by stabilized importance sampling weights. The stabilized importance sampling weight for observation $i$ at time step $t$ is defined as $w_t^{(i)} = \rho_t^{(i)} / \sum_{i=1}^n \rho_t^{(i)}$. The WDR estimator is thus defined as
\begin{align}
    &WDR := \sum_{i=1}^n \bigg\{ \frac{1}{n} V_1^{\pi_e}(S_1^{(i)}) \\
    &+ \sum_{t=1}^{T} \gamma^t w_t^{(i)} \left[R_t^{(i)} - Q^{\pi_e}_t(S_t^{(i)},A_t^{(i)}) 
    + \gamma V_{t+1}^{\pi_e}(S_{t+1}^{(i)})\right] \bigg\}. \label{eq_WDR_definition}
\end{align}

\subsection{MAGIC}

While WDR has low bias and converges at rate $O_P(1/\sqrt{n})$ to the truth, its reliance on importance weights can make it highly variable. As a result, in some settings, especially if model misspecification is not too strong, DM estimators can beat WDR \cite{thomas2016}. This motivates the construction of an estimator that interpolates between DM and WDR, so as to benefit from the best of both worlds. \citet{thomas2016} propose the \textit{partial importance sampling} estimators, which correspond to essentially cutting off the sum in \eqref{eq_WDR_definition} the terms with index $t \geq j$ for some $0 \leq j \leq T$. Formally, they define their partial importance sampling estimator as the average $g_j := \sum_{i=1}^n g_j^{(i)}$ of the so-called \textit{off-policy $j$-step return}, that they define, for each trajectory $i$, as
\begin{align}
    g_i^{(j)} &:= \sum_{t=1}^{j} \underbrace{\gamma^t w_t^i R_t^{(i)}}_\text{a} + \underbrace{\gamma^{j+1} w_j^i V_{j+1}^{\pi_e}(S_{j+1}^i)}_\text{b} \nonumber \\
    &- \sum_{t=1}^j
    \underbrace{\gamma^t[w_t^i Q_t^{\pi_e}(S_t^{(i)},A_t^{(i)}) - 
    w_{t-1}^i V_t^{\pi_e}(S_t^{(i)})]}_\text{c},
\end{align}
Note that $g_0$ is equal to the DM estimator. Note that the last component, (c), represents the combined control variate for the importance sampling (a) and model based term (b). Hence, as $j$ increases, we expect bias to decrease, at the expense of an increase in variance.

\citet{thomas2016}'s final estimator is a convex combination of the partial importance sampling estimators $g_j$. Ideally, we would like this convex combination to minimize mean squared error (MSE), that is we would like to use as estimator $(\textbf{x}^*)^\top \bm{g}$, with $\bm{g} =  (g_0,...,g_T)$, where 
\begin{align}
    \bm{x}^* &= \arg \min_{\substack{0 \leq \bm{x} \leq 1 \atop \sum_{j=0}^T x_j = 1}} \text{MSE}(\bm{x}^\top \bm{g}, V^{\pi_e}_1)\\
    &= \arg \min_{\substack{0 \leq \bm{x} \leq 1 \atop \sum_{j=0}^T x_j = 1}} \bigg\{ \text{Bias}^2(\bm{x}^\top \bm{g}, V^{\pi_e}_1) \\
    &\qquad \qquad + \text{Var}(\bm{x}^\top \bm{g}) \bigg\}. \label{magic_x_star}
\end{align}
As we do not have access to the true variance and bias, \citet{thomas2016} propose to use as estimator $\hat{\bm{x}}^\top \bm{g}$, where $\hat{\bm{x}}$ is a minimizer, over the convex weights simplex, of an estimate of the MSE. The covariance matrix of $\bm{g}$, which we will denote $\bm{\Omega}_n$, can be estimated as the empirical covariance matrix $\bm{\hat{\Omega}}_n$ of the $\bm{g}^{(i)}$'s. Bias estimation is a more involved. For each $j=1,...,T$, \citet{thomas2016} estimate the bias of the partial importance sampling estimator $g_j$ by its distance to a $\delta$-confidence interval for $g_T$ obtained by bootstrapping it, for some $\delta \in (0,1)$. They named the resulting ensemble estimator MAGIC, standing for \textit{model and guided importance sampling combining}. For further details, we refer the reader to the very clear presentation of their algorithm by \citet{thomas2016}.

\section{Longitudinal TMLE for MDPs}

\subsection{High level description}
Our proposed estimator extends the longitudinal Targeted Maximum Likelihood Estimation (TMLE) methodology, initially developed in the statistics causal inference literature, to the MDP setting \cite{tmle2006,vdl2011,book2011,book2018}. In order to build intuition on our estimator, we start with a high-level description. Targeted Maximum Likelihood Estimation is a general framework that allows to construct efficient nonparametric estimators of low-dimensional characteristics of the data-generating distribution, given machine learning based estimators of high-dimensional characteristics. Let us illustrate on an example what these low-dimensional and high-dimensional characteristics can be. Suppose we want to estimate an average treatment effect (ATE), and that we have pre-treatment covariates $X$, a treatment $T$ and an outcome $Y$, with $(X,T,Y) \sim P$. In this situation, the low-dimensional characteristic is the ATE $E_P[E_P[Y|T=1,X] - E_P[Y|T=0,X]]$, while the high-dimensional characteristics of $P$ are the outcome regression function $x,a \mapsto E_P[Y|A=a, X=x]$ and the propensity score function $x \mapsto E_P[T|X=x]$.

\subsection{Simplified sample-splitting based algorithm} 
In the following sections we present a simplified version of the algorithm that constructs our Longitudinal Targeted Maximum Likelihood Estimator. The full-blown version of the algorithm is presented in the appendix, with the corresponding theoretical justifications.

Suppose we are provided with $n$ i.i.d. trajectories, $D=(H_1,...,H_n)$. Make two splits of the sample: for some $0 < p <1$, let $D^{(0)}=(H_1,...,H_{(1-p)n})$ and $D^{(1)}=(H_{(1-p)n + 1},...,H_n)$. Use $D^{(0)}$ to fit estimators $\hat{Q}^{\pi_e}_1,\cdots,\hat{Q}^{\pi_e}_T$ of the action value functions $Q_1^{\pi_e},\cdots,Q_T^{\pi_e}$ We will call $\hat{Q}^{\pi_e}_1,\cdots,\hat{Q}^{\pi_e}_T$ the \textit{initial estimators}. Such estimators can be obtained for instance by fitting a model of the dynamics of the MDP, or by SARSA, among other methods \cite{Sutton1998}. Estimators fitted in such a way tend to exhibit low variance but often suffer from misspecification bias. As mentioned in section \ref{section_existing_work}, doubly-robust estimators take such initial estimators as input, and evaluate on $D^{(1)}$ and then average a certain function of them to produce an unbiased estimator of $V^{\pi_e}_1(s_1)$. These doubly-robust estimators rely on the addition of terms weighted by the importance sampling (IS) ratios $\rho^{(i)}_{i:t}$, $i=1,\cdots,n$, $t=1,\cdots,n$. The TMLE methodology takes another route: for each $t$, it defines, on top of the initial estimator fit, a parametric model, which we will call a \textit{second-stage parametric model} $\hat{Q}_t^{\pi_e}$, and achieves bias reduction by fitting this parametric model by maximum likelihood, on the sample split $D^{(1)}$.

\subsection{Formal presentation of the simplified algorithm}\label{section_formal_presentation_simplified_algo}

To formally describe our algorithm, it suffices to define the second-stage parametric models and describe the loss used for the fit. For all $x \in \mathbb{R}$, we define $\sigma(x) = 1 / (1 + e^{-x})$ as the logistic function, and we denote $\sigma^{-1}$ its inverse. Observe that bounding the range of rewards where $\forall t, R_t \in [r_{min}, r_{max}]$, implies that $\forall t$ and $\forall (s,a) \in \mathcal{S} \times \mathcal{A}$, $Q_t(s,a) \in [-\Delta_t, \Delta_t]$ with $\Delta_t := \sum_{\tau=t}^T \gamma^{\tau-t} \max(r_{max},|r_{min}|)$. We further denote $\tilde{Q}^{\pi_e}_t(s, a):= (\hat{Q}^{\pi_e}_t + \Delta_t) / (2 \Delta_t)$ as the normalized initial estimator. In addition, $\forall \delta \in (0, 1/2)$ and $\forall (s,a)$, we define the following thresholded version of $\tilde{Q}_t^{\pi_e}$:
\begin{align}
    \tilde{Q}_t^{\pi_e, \delta}(s, a) := \begin{cases}
    1-\delta & \text{ if }  \tilde{Q}_t^{\pi_e}(s, a) > 1 - \delta,\\
    \tilde{Q}_t^{\pi_e}(s, a) &\text{ if } \tilde{Q}_t^{\pi_e}(s, a) \in [\delta, 1- \delta], \\
    \delta & \text{ if } \tilde{Q}_t^{\pi_e}(s, a) < \delta.
    \end{cases}
\end{align}
For all $\epsilon \in \mathbb{R}$, we can now define the normalized version of our second-stage parametric model as:
\begin{equation}
    \tilde{Q}^{\pi_e, \delta}_t(\epsilon)(s,a) := \sigma(\sigma^{-1}(\tilde{Q}_t^{\pi_e, \delta}(s, a)) + \epsilon). \label{update_t}
\end{equation}
Finally, we denote $\hat{Q}_t^{\pi_e, \delta}(\epsilon) = 2 \Delta_t (\tilde{Q}^{\pi_e, \delta}_t(\epsilon) - 1/2)$ as the rescaled version of $\tilde{Q}^{\pi_e, \delta}_t(\epsilon).$

The normalization, thresholding and rescaling steps in the definition of the parametric second-stage model ensure that (1) $\tilde{Q}^{\pi_e, \delta}_t(\epsilon) \in [\delta, 1- \delta] \subset (0,1)$ for all $\epsilon$, and that (2) $\hat{Q}_t^{\pi_e, \delta}(\epsilon)$ always stays in the allowed range of rewards $[-\Delta_t, \Delta_t]$. The definition of $\tilde{Q}^{\pi_e, \delta}_t(\epsilon)$ as a logistic transform of $\epsilon$ that lies in $(0,1)$ makes the fitting of $\epsilon$ possible through maximum likelihood for a logistic likelihood. For $t=T$, since $Q_T^{\pi_e}(s,a) = E_{P, \pi_b}[\rho_{1:T} R_T|S_T=s, A_T=a]$, it is natural to consider the log likelihood,
\begin{align}
    \mathcal{R}^{\delta}_{n,T}&(\epsilon) = \frac{1}{n} \sum_{i=1}^n \rho^{(i)}_{1:T} \bigg(\tilde{U}_T^{(i)} \log(\tilde{Q}^{\pi_e, \delta}_T(\epsilon)(S_T^{(i)}, A_T^{(i)})) \nonumber \\
    &+(1-\tilde{U}_T^{(i)}) \log(1-\tilde{Q}^{\pi_e, \delta}_T(\epsilon)(S_T^{(i)}, A_T^{(i)})) \bigg), \label{log_likelihood_T}
\end{align}
where $\tilde{U}_T^{(i)} := (R_T^{(i)} + \Delta_T) /  (2 \Delta_T)$ is the normalized reward at time $T$. Normalization of the reward is necessary since we are using logistic regression to optimize $\epsilon$, and to keep the definition of $\tilde{U}_T^{(i)}$ and $\tilde{Q}^{\pi_e, \delta}_T(s,a)$ consistent. The thresholding step that defines $\tilde{Q}^{\delta}_t(s,a)$ prevents the log likelihood from taking on non-finite values. In order to make the bias introduced by thresholding vanish as the sample size grows, we use a vanishing sequence $\delta_n \downarrow 0$ of thresholding values.

Let $\epsilon_{n,T}$ be the minimizer over $\mathbb{R}$ of the log likelihood $\mathcal{R}_{n,t}^{\delta}$ for step $T$. We fit the second-stage models for $t=T-1,...,1$ by backward recursion, a procedure which we describe in more detail in this paragraph.
Start with observing that for all $t=1,...,T$, and for all $(s,a) \in \mathcal{S} \times \mathcal{A}$, $Q^{\pi_e}_t(s,a) = E_{\pi_b}[\rho_{1:t} (R_t + \gamma V_{t+1}^{\pi_e}(S_{t+1}))|S_t=s, A_t=a]$. This motivates defining, as outcome of the rescaled logistic regression model for time step $t$, the normalized reward-to-go:
\begin{equation}
   \tilde{U}^{(i)}_{t,n} := (R^{(i)}_t + \gamma \hat{V}_{t+1}^{\pi_e}(\epsilon_{n, t+1})(S^{(i)}_{t+1}) + \Delta_t) / (2 \Delta_t). 
\end{equation}
Define $\hat{V}_{t}^{\pi_e}(\epsilon)$ as the value function corresponding to the action-value function $\hat{Q}_t^{\pi_e, \delta_n}(\epsilon)$, that is, for all $s \in \mathcal{S}$, set $\hat{V}_{t}^{\pi_e}(\epsilon)(s) = \sum_{a' \in \mathcal{A}} \pi_e(a'|s) \hat{Q}^{\pi_e, \delta_n}(\epsilon)(s, a')$. We define the second-stage model log likelihood for each $t=T-1,...,1$ as
\begin{align}
    \mathcal{R}_{t,n}^{\delta}(\epsilon) &= \frac{1}{n} \sum_{i=1}^n \rho^{(i)}_{1:t} \bigg(\tilde{U}_t^{(i)} \log(\tilde{Q}_t^{\pi_e, \delta}(\epsilon)(S_t^{(i)}, A_t^{(i)})) \nonumber \\
    &+(1-\tilde{U}_t^{(i)}) \log(1-\tilde{Q}_t^{\pi_e, \delta}(\epsilon)(S_t^{(i)}, A_t^{(i)})) \bigg). \label{log_likelihood_t}
\end{align}  
The fact that the outcome in the second-stage logistic model at time step $t$ depends on the second-stage model fit at time step $t+1$ is why we have to proceed backwards in time. This is why we say this procedure is a \textit{backward recursion}.

Finally, once all of the $T$ second-stage models have been fitted, we define the  LTMLE estimator of $V_1^{\pi_e}(s_1)$ as follows:
\begin{align}
     \hat{V}_1^{\pi_e, LTMLE}(s_1) := \hat{V}_1^{\pi_e}(\epsilon_{n, 1})(s_1).
\end{align}
This idea of backward recursion we just exposed was initially introduced in \cite{bang2005}. They called it \textit{sequential regression}.

We present the pseudo-code of the procedure as Algorithm \ref{alg:ltmle}.

\begin{algorithm}[tb]
   \caption{Longitudinal TMLE for MDPs}
   \label{alg:ltmle}
\begin{algorithmic}
   \STATE {\bfseries Input:} Logged data split $D^{(1)}$, target policy $\pi_e$, initial estimators $\hat{Q}^{\pi_e}_1,...,\hat{Q}^{\pi_e}_T$, discount factor $\gamma$.
   \STATE Set $\Delta_{T}=0$ and $\hat{V}_{T+1}^{\pi_e}=\bm{0}$.
   \FOR{$t=T$ {\bfseries to} $1$}
   \STATE Set $\Delta_t= \max_{t,i} |R_t| + \gamma\Delta_{t}$.
   \STATE Set $\tilde{U}_t = (R_t + \gamma 
                                 \hat{V}_{t+1}^{\pi_e} + \Delta_t) / 2\Delta_t$.
    \STATE Set $\tilde{Q}_t^{\pi_e, \delta_n} = \text{threshold}(\delta_n, (\hat{Q}^{\pi_e}_t + \Delta_t) / 2\Delta_t)$.
    \STATE Compute $\epsilon_{n,t} = \arg \min_{\epsilon} \mathcal{R}_{n,t}^{\delta_n}(\epsilon)$.
    \STATE Set $\hat{Q}_t^{\pi_e, \delta_n} = 2 \Delta_t (\tilde{Q}_t^{\pi_e, \delta_n} - 0.5)$.
    \STATE Set, for all $s \in \mathcal{S}$, $$\hat{V}_t^{\pi_e}(s) = \sum_{a' \in \mathcal{A}} \pi_e(a'|s) \hat{Q}_t^{\pi_e, \delta_n}(s, a').$$
   \ENDFOR
   \STATE \textbf{return} $\hat{V}_{1}^{\pi_e}(\epsilon_{n,1})(s_1)$.
\end{algorithmic}
\end{algorithm}

\subsection{Guarantees and benefits}

It might at first appear surprising that fitting the second-stage models, which amounts to simply fitting the intercept of a logistic regression model, suffices to fully remove the bias. We nevertheless prove that it does so in theorem \ref{theorem_simplified_algorithm} under mild assumptions. Theorem \ref{theorem_simplified_algorithm} requires assumption \ref{assumption_absolute_continuity} stated in section \ref{section-statistical_formulation} and assumptions 2-4 stated below.

\begin{assumption}\label{assumption_bounded_rewards}
For all $t=1,....,T$, $r_t \in [r_{min},r_{max}]$ almost surely.
\end{assumption}

\begin{assumption}\label{assumption_convergence_initial_estimator}
For all $t=1,...,T$, the initial estimator $\hat{Q}^{\pi_e}_{t,n}$ converges in probability to some limit $Q_{t, \infty}: \mathcal{S} \times \mathcal{A} \rightarrow \mathbb{R}$, that is $\| \hat{Q}^{\pi_e}_{t,n} - Q_{t, \infty} \|_{P,2} = o_P(1)$.
\end{assumption}

\begin{assumption}\label{assumption_bounds_limit_initial_estimator}
For all $t=1,...,T$, let $Q_{t, \infty}$ be the limit as defined in Assumption \ref{assumption_convergence_initial_estimator}. Assume there exists a (small) positive constant $\eta \in (0,1/2)$ such that $\forall t$ and $\forall (s,a) \in \mathcal{S} \times \mathcal{A}$, $Q_{t, \infty}(s,a)\in [\eta, 1-\eta]$.
\end{assumption}

\begin{assumption}\label{assumption_bound_IS_ratios} Suppose there exists a finite positive constant $M$ such that $\forall t$,
$\rho_{1:t} \leq M$ almost surely.
\end{assumption}

We can now state our main theoretical result, for the algorithm presented in section \ref{section_formal_presentation_simplified_algo}. 
\begin{theorem}\label{theorem_simplified_algorithm}
Suppose assumptions \ref{assumption_bounded_rewards}, \ref{assumption_convergence_initial_estimator}, \ref{assumption_bounds_limit_initial_estimator}, and \ref{assumption_bound_IS_ratios} hold. Then the LTMLE estimator has bias $o(1 / \sqrt{n})$, that is $$E_{P, \pi_b}[\hat{V}_1^{\pi_e, LTMLE}(s_1)] - V^{\pi_e}_1(s_1) = o(1 /\sqrt{n}).$$
In addition, the LTMLE estimator converges in probability at rate $\sqrt{n}$, that is
\begin{align}
\hat{V}_1^{\pi_e, LTMLE}(s_1) - V^{\pi_e}_1(s_1) = O_P(1 / \sqrt{n}).
\end{align}
\end{theorem}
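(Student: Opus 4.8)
The plan is to exploit the defining property of TMLE: fitting each intercept $\epsilon_{n,t}$ by weighted logistic maximum likelihood forces a \emph{score equation} to hold, and this score equation is precisely one coordinate of the empirical efficient influence function. First I would differentiate the logistic loss $\mathcal{R}^{\delta}_{n,t}$ in $\epsilon$. Because $\tilde Q^{\pi_e,\delta}_t(\epsilon) = \sigma(\sigma^{-1}(\tilde Q^{\pi_e,\delta}_t) + \epsilon)$ and $\sigma' = \sigma(1-\sigma)$, the stationarity condition at $\epsilon_{n,t}$ reduces, after rescaling by $2\Delta_t$, to
\begin{equation}
\frac{1}{n}\sum_{i}\rho^{(i)}_{1:t}\Big(R^{(i)}_t + \gamma \hat V^{\pi_e}_{t+1}(\epsilon_{n,t+1})(S^{(i)}_{t+1}) - \hat Q^{\pi_e,\delta}_t(\epsilon_{n,t})(S^{(i)}_t,A^{(i)}_t)\Big) = 0 .
\end{equation}
Writing $\mathbb{P}_n$ for the empirical average over the estimation split $D^{(1)}$ and setting $\bar V_t(s) = \sum_{a}\pi_e(a|s)\bar Q_t(s,a)$, $\bar V_{T+1}\equiv 0$, I define the correction functional $D_{\bar Q}(H) = \sum_{t=1}^T\gamma^t\rho_{1:t}(R_t + \gamma\bar V_{t+1}(S_{t+1}) - \bar Q_t(S_t,A_t))$. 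Multiplying each score equation by $\gamma^t$ and summing shows $\mathbb{P}_n D_{\hat Q(\epsilon_n)} = 0$, so the estimator is the empirical plug-in of a doubly robust functional: $\hat V^{\pi_e,LTMLE}_1(s_1) = \hat V^{\pi_e}_1(\epsilon_{n,1})(s_1) + \mathbb{P}_n D_{\hat Q(\epsilon_n)} =: \Psi_n(\hat Q(\epsilon_n))$.

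Next I would establish the exact population identity $\Psi(\bar Q) := \bar V_1(s_1) + P D_{\bar Q} = V^{\pi_e}_1(s_1)$ for \emph{every} fixed $\bar Q$ --- the manifestation of double robustness when the importance ratios are known exactly. The argument is a conditional-mean-plus-telescoping computation: conditioning the $\bar Q_t$ term on $S_t$ and using Assumption~\ref{assumption_absolute_continuity} gives $E_{P,\pi_b}[\rho_{1:t}\bar Q_t(S_t,A_t)] = E_{P,\pi_b}[\rho_{1:t-1}\bar V_t(S_t)]$, so the $\bar V_{t+1}$ term from index $t$ cancels the $\bar Q_{t+1}$ term from index $t+1$; the sum collapses to $V^{\pi_e}_1(s_1) - \bar V_1(s_1)$, and adding back $\bar V_1(s_1)$ yields the claim. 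Since this is an algebraic identity in $\bar Q$, it holds even at the data-dependent $\hat Q(\epsilon_n)$, provided the population expectation is taken over a fresh trajectory. Subtracting, the two $\bar V_1(s_1)$ terms cancel and I obtain the single clean representation
\begin{equation}
\hat V^{\pi_e,LTMLE}_1(s_1) - V^{\pi_e}_1(s_1) = (\mathbb{P}_n - P)\,D_{\hat Q(\epsilon_n)} .
\end{equation}

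Both conclusions then follow from controlling this empirical-process term. Assumptions~\ref{assumption_bounded_rewards} and \ref{assumption_bound_IS_ratios} make $D_{\bar Q}$ uniformly bounded (bounded rewards bound $\Delta_t$, $\bar Q$ and $\bar V$; bounded ratios give $\rho_{1:t}\le M$), and $\{D_{\hat Q(\epsilon)}:\epsilon\in\mathbb{R}^T\}$ is a finite-dimensional, uniformly bounded, hence Donsker, class. For the $O_P(1/\sqrt n)$ rate I would split $(\mathbb{P}_n-P)D_{\hat Q(\epsilon_n)} = (\mathbb{P}_n-P)D_{\hat Q(\epsilon_\infty)} + (\mathbb{P}_n-P)(D_{\hat Q(\epsilon_n)}-D_{\hat Q(\epsilon_\infty)})$, where $\epsilon_\infty$ is the population risk minimizer (a function of $D^{(0)}$ only); the first term is a bounded-variance, mean-zero average, hence $O_P(1/\sqrt n)$ by the CLT, and the second is $o_P(1/\sqrt n)$ by asymptotic equicontinuity over the Donsker class. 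For the bias, sample splitting is decisive: conditional on $D^{(0)}$ the function $D_{\hat Q(\epsilon_\infty)}$ is fixed, so $E[(\mathbb{P}_n-P)D_{\hat Q(\epsilon_\infty)}\mid D^{(0)}] = 0$ exactly; the equicontinuity remainder is $o_P(1/\sqrt n)$ and, being uniformly bounded, upgrades to $o(1/\sqrt n)$ in expectation by uniform integrability.

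The hard part will be the two ingredients feeding the equicontinuity bound: (i) consistency $\epsilon_{n,t}\to\epsilon_{\infty,t}$, which I would prove by backward induction on $t$ --- the step-$t$ logistic loss is strictly convex in $\epsilon$, its minimizer is finite and well-separated because Assumption~\ref{assumption_bounds_limit_initial_estimator} keeps the limit initial estimator in $[\eta,1-\eta]$ so the vanishing threshold $\delta_n\downarrow 0$ is eventually inactive, and the step-$t$ pseudo-outcome converges once the induction hypothesis gives $\epsilon_{n,t+1}\to\epsilon_{\infty,t+1}$; and (ii) showing $\|D_{\hat Q(\epsilon_n)}-D_{\hat Q(\epsilon_\infty)}\|_{P,2}\to 0$ so that asymptotic equicontinuity applies. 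The genuinely delicate points are handling the coupling of the fits across time created by the backward recursion (the step-$t$ pseudo-outcome depends on all later data-dependent fits) and ensuring the thresholding bias induced by $\delta_n$ is itself $o(1/\sqrt n)$; Assumptions~\ref{assumption_convergence_initial_estimator}--\ref{assumption_bound_IS_ratios} are exactly what is needed to push both through.
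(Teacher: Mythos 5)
Your proposal is correct and follows essentially the same route as the paper's proof: your score equation, population doubly-robust identity, and empirical-process representation $(\mathbb{P}_n - P)D_{\hat Q(\epsilon_n)}$ correspond exactly to the paper's Lemmas \ref{lemma_simple_algo_EIC_eq}, \ref{lemma_simple_algo_1st_order_expansion}, and the resulting decomposition, while your split around $\epsilon_\infty$ with a conditional CLT plus Donsker/equicontinuity control of the remainder matches Lemmas \ref{lemma_simple_algo_convergnce_epsilon_n} and \ref{lemma_simple_algo_equicontinuity} (which the paper proves via covering numbers and Pollard's maximal inequality). Your identification of the hard parts --- consistency of $\epsilon_{n,t}$ by backward induction using strong convexity of the limiting logistic risk, and the coupling across time steps --- is also precisely how the paper handles them.
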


With a little extra work, we can also characterize the asymptotic distribution and the asymptotic variance of the LTMLE estimator. In particular, we show in the appendix that, provided that $\hat{Q}^{\pi_e}$ is consistent, our estimator attains the generalized Cramer-Rao bound and is therefore \textit{locally efficient}. We also argue that it is asymptotically equivalent with the doubly robust estimator \cite{thomas2016,Jiang2015}.

\section{RLTMLE}

In this section, we (1) present regularizations that can be applied to the LTMLE estimator, and (2) describe our ``final estimator'', which we call RLTMLE (standing for \textit{LTMLE for RL}), and which consists of a convex combination of regularized LTMLE estimators. The weights in the RLTMLE convex combination are obtained following a variant of the ensembling procedure of the MAGIC estimator.

\subsection{Regularization and base estimators} \label{regularization}

We present three regularization techniques that allow to stabilize the variance of the LTMLE estimator. The first two have a clear WDR analogue, while the third one only applies to LTMLE.

\begin{enumerate}
    \item \textbf{Weights softening.} For $\alpha \in [0,1]$, $x \in \mathbb{R}^d$, define $\linebreak\code{soften}(x, \alpha) := (x_k^\alpha / \sum_{l=1}^d x_l^\alpha : k = 1,...,d)$. The LTMLE algorithm corresponding to softening level $\alpha$ is obtained by replacing, in the second-stage log likelihoods \eqref{log_likelihood_T} and \eqref{log_likelihood_t}, the IS ratios $(\rho^{(i)}_{1:t}:i=1,...,n)$ by $\code{soften}( (\rho^{(i)}_{1:t}:i=1,...,n), \alpha)$. The same operation can be applied as well to the importance weights of the WDR estimator.
    
    \item \textbf{Partial horizon.} The LTMLE with partial horizon $\tau < T$ is obtained by setting to zero the coefficients $\epsilon_{n,{\tau_1}},..., \epsilon_{n,T}$ before fitting the other second-stage coefficients. This enforces that the importance sampling ratios $\rho_{1:t}$ for $t \geq j$ have no impact on the estimator. The WDR equivalent is to use the $\tau$-step return $g_\tau$.

    \item \textbf{Penalization.} The penalized LTMLE is obtained by adding a penalty $\lambda |\epsilon_{n,t}|$ for some $\lambda \geq 0$ to the the log-likelihoods \eqref{log_likelihood_T} and \eqref{log_likelihood_t} of the second-stage models.
\end{enumerate}

The three regularizations can be applied simultaneously. A regularized LTMLE estimator can therefore be indexed by a triple $(\alpha, \tau, \lambda)$, where $\alpha$, $\tau$ and $\lambda$ denote the level of softening, the partial horizon, and the level of likelihood penalization.

\subsection{Ensemble estimator}

Our final estimator is an ensemble of a pool of regularized LTMLE estimators, which we will denote $g_1,...,g_K$, that correspond to a sequence of triples $(\alpha_1, \tau_1, \lambda_1),..., (\alpha_K, \tau_K, \lambda_K)$ of regularization levels. We set $g_K$ to be the unregularized LTMLE, that is we set $(\alpha_K, \tau_K, \lambda_K) = (1, T, 0)$. We ensemble the regularized LTMLE estimators $g_1,...,g_K$ by taking a convex combination of them that minimizes an estimate of MSE. The ensembling step closely follows that of the MAGIC procedure. We propose two variants of it, which we call RLTMLE 1 and RLTMLE 2, differing in how we estimate the covariance matrix $\bm{\Omega}_n$ (defined in section \ref{section_existing_work}) of base estimators $g_1,...,g_K$.

\paragraph{RLTMLE 1.} In this variant of RLTMLE, covariance estimation relies on the following property of the LTMLE estimator. As we show in the appendix, the difference between a regularized LTMLE estimator with regularization parameters $(\alpha, \tau, \lambda)$, and its asymptotic limit is given by
$n^{-1} \sum_{i=1}^n \text{EIF}(\bm{\hat{Q}}, \alpha, \tau, \lambda) (H_i) + o_P(n^{-1/2})$, where $\text{EIF}$ is the efficient influence function, presented in the appendix, whose expression is given by
\begin{align}
    &\text{EIF}(\bm{\hat{Q}}^{\pi_e}, \alpha, \lambda, \tau)(h)\\
    &= \sum_{t=1}^T \gamma^t \rho_t \times 
     \big( r_t + \gamma \hat{V}_{t+1}^{\pi_e}(\epsilon_{n,t+1})(s_{t+1}) \\  &-\hat{Q}_{t}^{\pi_e}(\epsilon_{n,t})(s_t,a_t)\big) \label{EIF},
\end{align}
where, for all $t$, $\epsilon_{n,t}$ is the maximizer of the regularized version of the log-likelihood \eqref{log_likelihood_t}, that is expression \eqref{log_likelihood_t} where $\rho_t$ is replaced with $\text{soften}(\rho_t, \alpha)$ and penalized by $\lambda |\epsilon|$. Denote $\text{EIF}_k(h) = \text{EIF}(\bm{\hat{Q}}, \alpha_k, \lambda_k, \tau_k)(h)$, the EIF corresponding to estimator $g_k$. We use as estimate of the covariance matrix $\bm{\Omega}_n$ the empirical covariance matrix $\bm{\hat{\Omega}}_n$ of $(\text{EIF}_1(H),..., \text{EIF}_K(H))$.

\paragraph{RLTMLE 2.} In this variant of RLTMLE, an estimate of the covariance matrix $\bm{\Omega}_n$ of the base estimators $\bm{g} =(g_1,...,g_K)$ is obtained by computing bootstrapped values $\bm{g}^{(1)},...,\bm{g}^{(B)}$, of $\bm{g}$, for a large enough number of bootstrap samples $B$, and computing the empirical covariance $\bm{\hat{\Omega}}_n$ matrix of $\bm{g}^{(1)},...,\bm{g}^{(B)}$.

\paragraph{Bias estimation.} We follow closely \citet{thomas2016} for bias estimation. For $k=1,...,K$, denote $b_{n,k}$ the bias of estimator $g_K$, and $\bm{b}_n := (b_{n,1},...,b_{n,K})$. Denote $\text{CI}(\alpha)$ the $\alpha$-percentile bootstrap confidence interval for the LTMLE estimator. In both RLTMLE 1 and RLTMLE 2, for each $k=1,...,K$, estimate the bias $b_{n,k}$ with $\hat{b}_{n,k} := \text{dist}(g_k, \text{CI}(\alpha))$. Denote $\bm{\hat{b}}_n := (\hat{b}_{n,1}, ... , \hat{b}_{n,K})$.

Because of space limitation, we only give a pseudo-code description of RLTMLE 2, which is our most performant algorithm, as we will see in the next section.

\begin{algorithm}[tb]
   \caption{RLTMLE 2}
   \label{alg:rltmle}
\begin{algorithmic}
\STATE {\bfseries Input:} Logged data split $D^{(1)}$, target policy $\pi_e$, initial estimator $\bm{\hat{Q}}^{\pi_e} := (\hat{Q}^{\pi_e}_1,...,\hat{Q}^{\pi_e}_T)$, discount factor $\gamma$, triples of regularization levels $(\alpha_1, \tau_1, \lambda_1),...,(\alpha_K, \tau_K, \lambda_K)$, number of bootstrap samples $B$.

   \FOR{$b=1$ {\bfseries to} $B$}
   \STATE Sample with replacement from $D^{(1)}$  a bootstrap sample $D^{*, (b)}$.
    \FOR{$k=1$ {\bfseries to} $K$} 
    \STATE Compute $g_k^{(b)}$ by running algorithm \ref{alg:ltmle} with inputs $D^{*, (b)}$, $\bm{\hat{Q}}^{\pi_e}$, $\pi_e$, $\gamma$, using regularizations levels $(\alpha_k, \tau_k, \lambda_k)$.
    \ENDFOR
   \ENDFOR
   \FOR{$k = 1$ {\bfseries to} $K$}
   \STATE Compute $g_k$ by running algorithm \ref{alg:ltmle} with inputs $D^{(1)}$, $\bm{\hat{Q}}^{\pi_e}$, $\pi_e$, $\gamma$, using regularizations levels $(\alpha_k, \tau_k, \lambda_k)$.
    \FOR{$l = 1$ {\bfseries to} $K$}
       \STATE $\bm{\hat{\Omega}}_{k,l} \gets n^{-1} \sum_{b=1}^B g_k^{(b)} g_l^{(b)} - \left(n^{-1} \sum_{b=1}^B g_k^{(b)}\right) \left( n^{-1} \sum_{b=1}^B g_l^{(b)} \right)$.
    \ENDFOR
    \STATE $\text{CI}(\alpha) \gets \big[\text{percentile}(\{g^{(b)}_k : b\}, \alpha), \text{percentile}(\{g^{(b)}_k : b\}, 1-\alpha)\big]$.
    \STATE $\hat{b}_{n,k} \gets \text{distance}(g_k, \text{CI}(\alpha))$.
   \ENDFOR
  \STATE $$\bm{\hat{x}} \gets \arg \min_{\substack{\bm{0} \leq \bm{x} \leq \bm{1} \atop \bm{x}^\top \bm{1} = 1 } } \frac{1}{n} \bm{x}^\top \bm{\hat{\Omega}}_n \bm{x} + (\bm{x}^\top \bm{\hat{b}}_n)^2 .$$
  \STATE \textbf{return} $\bm{\hat{x}}^\top \bm{g}$.
\end{algorithmic}
\end{algorithm}

\section{Experiments}

\begin{figure*}[ht!]
\centering
\includegraphics[width = 0.82\textwidth, height = 0.45\textheight]{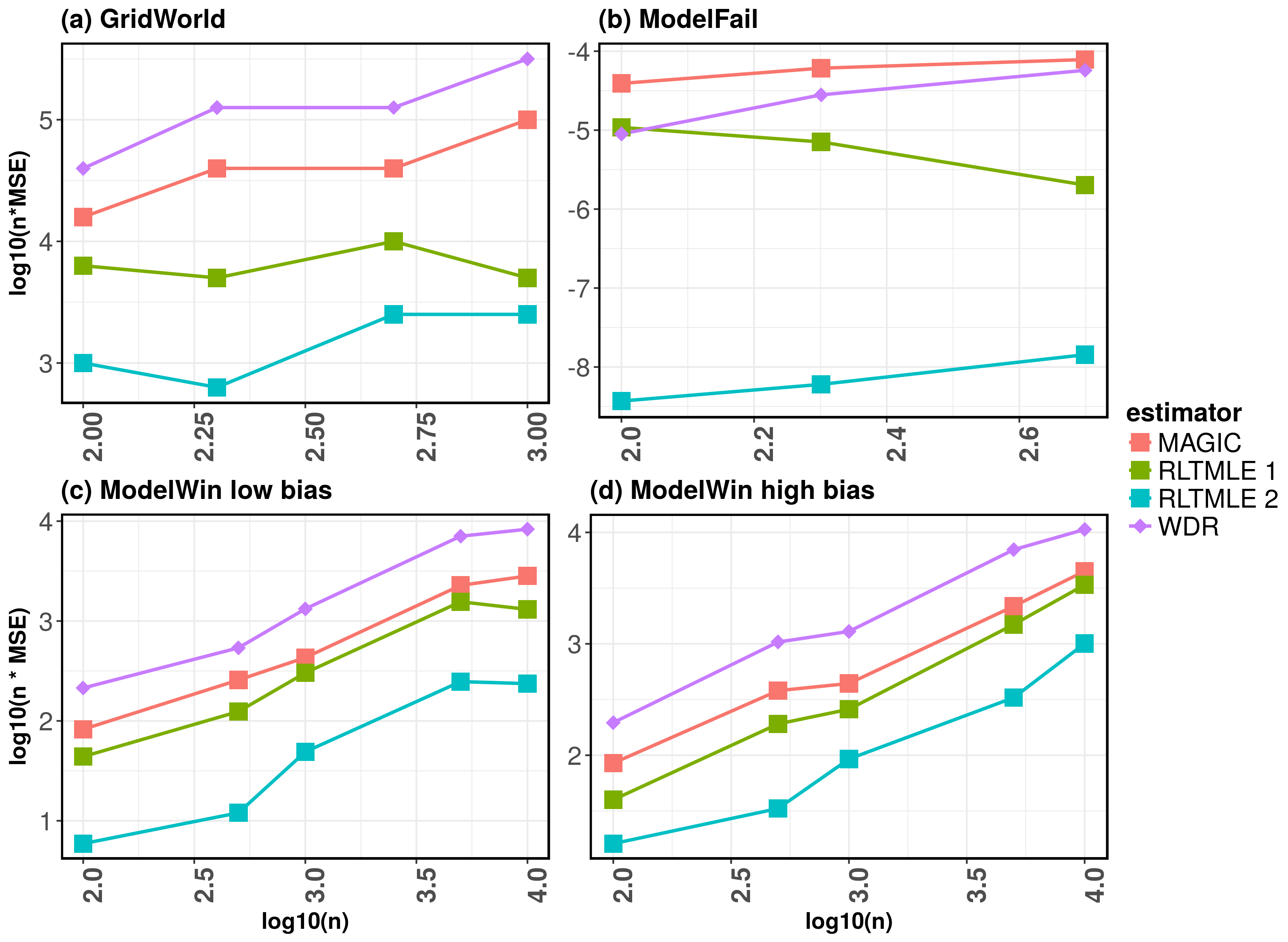}
\caption[]{Empirical results for three different environments and varying level of model misspecification. \textbf{(a)} GridWorld MSE across varying sample size $n=(100, 200, 500, 1000)$ and bias equivalent to $b_0=0.005 \times \text{Normal}(0,1)$ over 71 trials; \textbf{(b)} ModelFail MSE across varying sample size $n=(100, 200, 500, 1000)$ and bias equivalent to $b_0=0.005 \times \text{Normal}(0,1)$ over 71 trials; \textbf{(c)} ModelWin MSE across varying sample size $n=(100, 500, 1000, 5000, 10000)$ and bias equivalent to $b_0=0.005 \times \text{Normal}(0,1)$ over 63 trials; \textbf{(d)} ModelWin MSE across varying sample size $n=(100, 500, 1000, 5000, 10000)$ and bias equivalent to $b_0=0.05 \times \text{Normal}(0,1)$ over 63 trials.}
\label{all_results}
\end{figure*}

\begin{figure*}[ht!]
\centering
\includegraphics[width = 0.82\textwidth, height = 0.27\textheight]{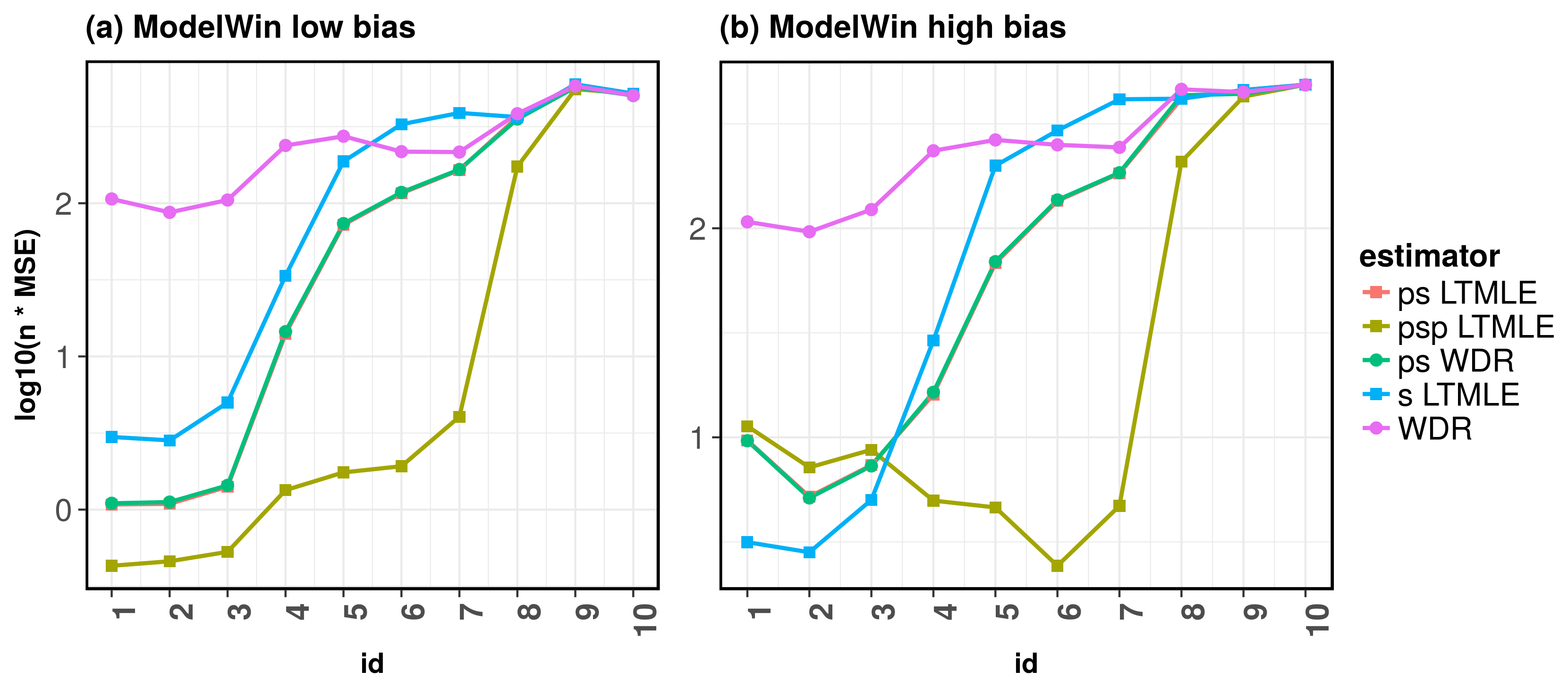}
\caption[]{Comparison of WDR and LTMLE base estimators across various regularization methods in ModelWin at low ($b_0=0.005\times \text{Normal}(0,1)$) and high ($b_0=0.05 \times \text{Normal}(0,1)$) model misspecification.
Regularized base estimators include ps LTMLE (partial, softened LTMLE), ps WDR (partial, softened WDR), psp LTMLE (partial, softened, penalized LTMLE), s LTMLE (softened LTMLE) and WDR (no regularization). The x-axis indicates the id of the $k^{th}$ estimator, corresponding to $(\alpha_k,\lambda_k,\tau_k)$. \textbf{(a)} ModelWin MSE for sample size $n=1000$ and low bias over 315 trials; \textbf{(b)} ModelWin MSE for sample size $n=1000$ and high bias over 315 trials.}
\label{base_est_results}
\end{figure*}

In this section, we demonstrate the effectiveness of RLTMLE by comparing it with other state-of-the-art methods used for OPE problem in various RL benchmark environments. We used three main domains, with detailed description of each allocated to the Appendix. We implement the same behavior and evaluation policies as in previous work \cite{thomas2016, Fara2018}.
\begin{enumerate}
    \item \textbf{ModelFail}: a partially observable, deterministic domain with $T=3$. Here the approximate model is incorrect, even asymptotically, due to three of the four states appearing identical to the agent. 
    \item \textbf{ModelWin}: a stochastic MDP with $T=10$, where the approximate model can perfectly represent the MDP.
    \item \textbf{GridWorld}: a $4 \times 4$ grid used for evaluating OPE methods, with an episode ending at $T=100$ or when a final state ($s16$) is reached. 
\end{enumerate}

We omit benefits of RLTMLE over IS, PDIS (per-decision IS), WIS (weighted IS), CWPDIS (consistent weighted per-decision IS) and DR (doubly robust) estimators due to the extensive empirical studies performed by Thomas and Brunskill \cite{thomas2016}. Instead, we compare our estimator to WDR and MAGIC, as they demonstrate improved performance over all simulations in benchmark RL environments considered \cite{thomas2016}.

In evaluating our estimator, we also explore how various degree of model misspecification and sample size can affect the performance of considered methods. We start with small amount of bias, $b_0=0.005*\text{Normal}(0,1)$, where most estimators should do well. Consequently, we increase model misspecification to $b_0=0.05*\text{Normal}(0,1)$ at the same sample size, and consider the performance of all estimators. In addition, we test sensitivity to the number of episodes in $D$ with $n=\{100, 200, 500, 1000)$ for GridWorld and ModelFail, and $n=\{100, 500, 1000, 5000, 10000)$ for ModelWin.  

In addition, we consider the benefits of adding few regularization techniques as opposed to all three described in subsection \ref{regularization}. In particular, we concentrate on RLTMLE with only weight softening and partial LTMLE (RLTMLE 1) as opposed to using penalized LTMLE as well (RLTMLE 2). The goal of these experiments was to demonstrate the improved performance of our estimator when fully exploiting all the variance reduction techniques in a clever way. The MSE across varying sample size and model misspecification for GridWorld, ModelFail and ModelWin can be found in Figure \ref{all_results}. We can see that RLTMLE 2 outperforms all other estimators for all RL environments and varying levels of model misspecification. 

Finally, we compare WDR and LTMLE base estimators augmented with various regularization methods before the ensemble step in Figure \ref{base_est_results}. In particular, for ModelWin, we look at the MSE of $\hat{V}_{1}^{\pi_e, j}(\epsilon_{n,1})(s_1)$ and $g_k$ for each $k$, where the $k^{th}$ estimator corresponds to regularization $(\alpha_k,\lambda_k,\tau_k)$. Regularized base estimators considered include ps LTMLE (partial, softened LTMLE), ps WDR (partial, softened WDR), psp LTMLE (partial, softened, penalized LTMLE), s LTMLE (softened LTMLE) and WDR (no regularization). We note the vast improvement of WDR just by adding weight softening across all base estimators, evident for both low and high model misspecification setting. For the low bias environment of ModelWin, psp LTMLE (RLTMLE 2) uniformly outperforms all competitors for all $k$. High bias setting loses to s LTMLE for low $k$, but still outperforms majority of the time, including having the best ensemble MSE. While uniform win over all $k$ is not necessary, we note that this behavior stems from the fact that for $k<3$, $(\alpha_k,\lambda_k,\tau_k)$ used had very small $\tau_k$ and $\alpha_k$. As such, with no strong debiasing effect of LTMLE, minimizing variance becomes more effective with respect to minimizing MSE. 

\section{Conclusion}

In this paper, we proposed a new doubly robust estimator for off-policy value evaluation in reinforcement learning. In particular, we present a convex combination of regularized LTMLE estimators which aim at minimizing the MSE. We showed that our estimator is consistent and asymptotically optimal, achieving the Cramer-Rao lower bound. We prove the $O_P(1/\sqrt{n})$ rate of convergence of our estimator, and characterize its asymptotic distribution. The LTMLE is guaranteed to lie in the allowed rewards domain, both for discrete and continuous state, and is amenable to several regularization techniques. Finally, our experiments demonstrate uniform win of RLTMLE over all considered off-policy methods across multiple RL environments and various levels of model misspecification. 

The RLTMLE enjoys multiple distinguishing features that contribute to its finite sample performance. First, its base estimator is a substitution estimator, therefore it inherently respects the reward domain for the RL problem. While this is true for DR if states and actions are discrete, our estimator by design produces estimates that lie in the allowed reward domain for both discrete and continuous state space. Our estimator also allows for clever usage of importance weights, instead of explicitly summing over IS terms. This property strives from using LTMLE as a base estimator, where stabilized IS ratios can be used as weights of the observations in the log likelihood of the second-stage models. This is an important feature of RLTMLE, that greatly contributes to its stability without introducing bias. Finally, LTMLE is amenable to many regularization methods, with RLTMLE enjoying a rich family of regularized base estimators. Our experiments show impressive performance gains from utilizing variance reduction techniques for both RLTMLE and WDR.

Finally, our method does not refit the entire reward-to-go model for each new target policy as the More Robust Doubly Robust estimator, demonstrating some practical advantages. Since refitting the reward-to-go model can be quite computationally expensive, our estimator might be beneficial in situations where one wants to scan through many candidate target policies.

\section*{Acknowledgements}

The authors would like to thank Philip Thomas and the reviewers for valuable comments.

\newpage

\bibliography{RLTMLE}

\begin{thebibliography}{24}
\providecommand{\natexlab}[1]{#1}
\providecommand{\url}[1]{\texttt{#1}}
\expandafter\ifx\csname urlstyle\endcsname\relax
  \providecommand{\doi}[1]{doi: #1}\else
  \providecommand{\doi}{doi: \begingroup \urlstyle{rm}\Url}\fi

\bibitem[Bang \& Robins(2005)Bang and Robins]{bang2005}
Bang, H. and Robins, J.~M.
\newblock {{D}oubly robust estimation in missing data and causal inference
  models}.
\newblock \emph{Biometrics}, 61\penalty0 (4):\penalty0 962--973, Dec 2005.

\bibitem[Dudik et~al.(2011)Dudik, Langford, and Li]{Dudik2011}
Dudik, M., Langford, J., and Li, L.
\newblock Doubly robust policy evaluation and learning.
\newblock In \emph{Proceedings of the 28th International Conference on
  International Conference on Machine Learning}, ICML'11, pp.\  1097--1104,
  USA, 2011. Omnipress.
\newblock ISBN 978-1-4503-0619-5.
\newblock URL \url{http://dl.acm.org/citation.cfm?id=3104482.3104620}.

\bibitem[Farajtabar et~al.(2018)Farajtabar, Chow, and Ghavamzadeh]{Fara2018}
Farajtabar, M., Chow, Y., and Ghavamzadeh, M.
\newblock More robust doubly robust off-policy evaluation.
\newblock \emph{CoRR}, abs/1802.03493, 2018.
\newblock URL \url{http://arxiv.org/abs/1802.03493}.

\bibitem[Hoiles \& Van Der~Schaar(2016)Hoiles and Van Der~Schaar]{hoiles2016}
Hoiles, W. and Van Der~Schaar, M.
\newblock Bounded off-policy evaluation with missing data for course
  recommendation and curriculum design.
\newblock In \emph{Proceedings of the 33rd International Conference on
  International Conference on Machine Learning - Volume 48}, ICML'16, pp.\
  1596--1604. JMLR.org, 2016.
\newblock URL \url{http://dl.acm.org/citation.cfm?id=3045390.3045559}.

\bibitem[{Jiang} \& {Li}(2015){Jiang} and {Li}]{Jiang2015}
{Jiang}, N. and {Li}, L.
\newblock {Doubly Robust Off-policy Value Evaluation for Reinforcement
  Learning}.
\newblock \emph{arXiv e-prints}, art. arXiv:1511.03722, November 2015.

\bibitem[Mannor et~al.(2007)Mannor, Simester, Sun, and Tsitsiklis]{mannor2007}
Mannor, S., Simester, D., Sun, P., and Tsitsiklis, J.~N.
\newblock Bias and variance approximation in value function estimates.
\newblock \emph{Management Science}, 53\penalty0 (2):\penalty0 308--322, 2007.
\newblock \doi{10.1287/mnsc.1060.0614}.
\newblock URL \url{https://doi.org/10.1287/mnsc.1060.0614}.

\bibitem[Murphy et~al.(2001)Murphy, van~der Laan, and Robins]{murphy2001}
Murphy, S.~A., van~der Laan, M.~J., and Robins, J.~M.
\newblock {{M}arginal {M}ean {M}odels for {D}ynamic {R}egimes}.
\newblock \emph{J Am Stat Assoc}, 96\penalty0 (456):\penalty0 1410--1423, Dec
  2001.

\bibitem[Petersen et~al.(2014)Petersen, Schwab, Gruber, Blaser, Schomaker, and
  M]{petersen2014ltmle}
Petersen, M., Schwab, J., Gruber, S., Blaser, N., Schomaker, M., and M, v.
\newblock Targeted maximum likelihood estimation for dynamic and static
  longitudinal marginal structural working models.
\newblock \emph{Journal of Causal Inference}, 2\penalty0 (2):\penalty0
  147--185, 2014.
\newblock PMCID: PMC4405134.

\bibitem[Precup(2000)]{PrecupThesis}
Precup, D.
\newblock \emph{Temporal abstraction in reinforcement learning}.
\newblock PhD thesis, University of Massachusetts Amherst, 2000.
\newblock https://scholarworks.umass.edu/dissertations/AAI9978540.

\bibitem[Precup et~al.(2000)Precup, Sutton, and Singh]{Precup2000}
Precup, D., Sutton, R.~S., and Singh, S.~P.
\newblock Eligibility traces for off-policy policy evaluation.
\newblock In \emph{Proceedings of the Seventeenth International Conference on
  Machine Learning}, ICML '00, pp.\  759--766, San Francisco, CA, USA, 2000.
  Morgan Kaufmann Publishers Inc.
\newblock ISBN 1-55860-707-2.
\newblock URL \url{http://dl.acm.org/citation.cfm?id=645529.658134}.

\bibitem[Robins \& Rotnitzky(1995)Robins and Rotnitzky]{robins1995}
Robins, J.~M. and Rotnitzky, A.
\newblock Semiparametric efficiency in multivariate regression models with
  missing data.
\newblock \emph{Journal of the American Statistical Association}, 90\penalty0
  (429):\penalty0 122--129, 1995.
\newblock ISSN 01621459.
\newblock URL \url{http://www.jstor.org/stable/2291135}.

\bibitem[Robins et~al.(1994)Robins, Rotnitzky, and Zhao]{robins1994}
Robins, J.~M., Rotnitzky, A., and Zhao, L.~P.
\newblock Estimation of regression coefficients when some regressors are not
  always observed.
\newblock \emph{Journal of the American Statistical Association}, 89\penalty0
  (427):\penalty0 846--866, 1994.
\newblock ISSN 01621459.
\newblock URL \url{http://www.jstor.org/stable/2290910}.

\bibitem[Robins et~al.(1999)Robins, Greenland, and Hu]{Gcomp1999}
Robins, J.~M., Greenland, S., and Hu, F.-C.
\newblock Estimation of the causal effect of a time-varying exposure on the
  marginal mean of a repeated binary outcome.
\newblock \emph{Journal of the American Statistical Association}, 94\penalty0
  (447):\penalty0 687--700, 1999.
\newblock \doi{10.1080/01621459.1999.10474168}.

\bibitem[Robins et~al.(2000)Robins, Hernan, and Brumback]{Gcomp2000}
Robins, J.~M., Hernan, M.~A., and Brumback, B.
\newblock {{M}arginal structural models and causal inference in epidemiology}.
\newblock \emph{Epidemiology}, 11\penalty0 (5):\penalty0 550--560, Sep 2000.

\bibitem[Rosenbaum \& Rubin(1983)Rosenbaum and Rubin]{rosenbaum1983}
Rosenbaum, P.~R. and Rubin, D.~B.
\newblock The central role of the propensity score in observational studies for
  causal effects.
\newblock \emph{Biometrika}, 70\penalty0 (1):\penalty0 41--55, 1983.
\newblock \doi{10.1093/biomet/70.1.41}.
\newblock URL \url{http://dx.doi.org/10.1093/biomet/70.1.41}.

\bibitem[Sutton \& Barto(1998)Sutton and Barto]{Sutton1998}
Sutton, R.~S. and Barto, A.~G.
\newblock \emph{Introduction to Reinforcement Learning}.
\newblock MIT Press, Cambridge, MA, USA, 1st edition, 1998.
\newblock ISBN 0262193981.

\bibitem[Theocharous et~al.(2015)Theocharous, Thomas, and
  Ghavamzadeh]{theocharous2015}
Theocharous, G., Thomas, P.~S., and Ghavamzadeh, M.
\newblock Personalized ad recommendation systems for life-time value
  optimization with guarantees.
\newblock In \emph{Proceedings of the 24th International Conference on
  Artificial Intelligence}, IJCAI'15, pp.\  1806--1812. AAAI Press, 2015.
\newblock ISBN 978-1-57735-738-4.
\newblock URL \url{http://dl.acm.org/citation.cfm?id=2832415.2832500}.

\bibitem[Thomas(2015)]{ThomasThesis}
Thomas, P.
\newblock \emph{Safe Reinforcement Learning}.
\newblock PhD thesis, University of Massachusetts Amherst, 2015.

\bibitem[Thomas \& Brunskill(2016)Thomas and Brunskill]{thomas2016}
Thomas, P. and Brunskill, E.
\newblock Data-efficient off-policy policy evaluation for reinforcement
  learning.
\newblock In Balcan, M.~F. and Weinberger, K.~Q. (eds.), \emph{Proceedings of
  The 33rd International Conference on Machine Learning}, volume~48 of
  \emph{Proceedings of Machine Learning Research}, pp.\  2139--2148, New York,
  New York, USA, Jun 2016. PMLR.

\bibitem[{van der Laan} \& Gruber(2011){van der Laan} and Gruber]{vdl2011}
{van der Laan}, M.~J. and Gruber, S.
\newblock Targeted minimum loss based estimation of an intervention specific
  mean outcome.
\newblock Technical report, U.C. Berkeley Division of Biostatistics Working
  Paper Series, https://biostats.bepress.com/ucbbiostat/paper290/, 2011.

\bibitem[{van der Laan} \& Rose(2011){van der Laan} and Rose]{book2011}
{van der Laan}, M.~J. and Rose, S.
\newblock \emph{Targeted Learning: Causal Inference for Observational and
  Experimental Data (Springer Series in Statistics)}.
\newblock Springer, 2011.

\bibitem[{van der Laan} \& Rose(2018){van der Laan} and Rose]{book2018}
{van der Laan}, M.~J. and Rose, S.
\newblock \emph{Targeted Learning in Data Science: Causal Inference for Complex
  Longitudinal Studies}.
\newblock Springer Science \& Business Media, 2018.

\bibitem[{van der Laan} \& Rubin(2006){van der Laan} and Rubin]{tmle2006}
{van der Laan}, M.~J. and Rubin, D.
\newblock Targeted maximum likelihood learning.
\newblock Technical Report Working Paper 213, U.C. Berkeley Division of
  Biostatistics Working Paper Series, 10 2006.

\bibitem[{van der Vaart} \& {Wellner}(1996){van der Vaart} and
  {Wellner}]{vdV-Wellner1996}
{van der Vaart}, A.~W. and {Wellner}, J.~A.
\newblock \emph{Weak Convergence and Empirical Processes)}.
\newblock Springer, 1996.

\end{thebibliography}
\bibliographystyle{icml2019}


\newpage 

\appendix

\section{Appendix organization}

This appendix is organized as follows. In section \ref{section_theory_simplified_algorithm}, we prove theorem \ref{theorem_simplified_algorithm}, which characterizes the statistical properties of the simplified algorithm presented in the main text of the article. Although we have also derived a more advanced and efficient version of the LTMLE estimator, which we introduce in section \ref{section_CV_LTMLE} of this appendix, we choose to present the simplified version first, so as to convey the key ideas of the theoretical analysis without burdening our reader with too many technicalities.

In section B, we derive the EIF of the policy value, a necessary preliminary to establishing the semiparametric efficiency of DR estimators.

In section \ref{section_CV_LTMLE}, we present a more advanced version of the LTMLE estimator, which makes better use of the data. This results in a constant factor speed-up of the convergence rate. This more advanced algorithm also relies on sample splitting, but fits each second stage model using the full sample, insteasd of just using a split of the full sample.

\paragraph{Note on notation.} So as to lighten notation, we will drop the $\pi_e$ superscript.

\section{Theoretical analysis of the simplified sample-splitting-based algorithm}\label{section_theory_simplified_algorithm}

In this section, we walk our reader through the theoretical analysis for the algorithm derived in section \ref{section_formal_presentation_simplified_algo}. We outline the steps of the proof in the proof sketch below. We then state the four main lemmas on which our proof relies, and then present the formal proof.

\begin{proof_sketch}
The first fact underpinning our proof is that for any of candidate action-value $Q'=(Q'_1,...,Q'_T)$ and corresponding value functions $V'=(V'_1,...,V'_T)$, the difference between the candidate and the true value function at time point $t=1$ can be decomposed as follows:
\begin{align}
    V'_1(s_1) - V(s_1) = -\int D(Q')(h) dP^{\pi_b}(h), \label{proof_sketch_thm1_eq1}
\end{align}
where $D(Q')(h) = \sum_{t=1}^T D_t(Q')(h)$, with $D_t(Q')(h) = \rho_{1:t}(h)(r_t + \gamma V'_{t+1}(s_{t+1}) - Q'_t(s_t, a_t))$. This is formally stated in lemma \ref{lemma_simple_algo_1st_order_expansion} below. For non-random functions $Q'$ and $V'$ note that the RHS of \eqref{proof_sketch_thm1_eq1} is equal to $-E_{P, \pi_b}[D(Q')]$.

The second fact our proof relies on is that the estimators $\hat{Q}(\epsilon_n)$ resulting from the fitting of the parametric second stages verify the following equation:
\begin{align}
    \frac{1}{n} \sum_{i=1}^n D(\hat{Q}(\epsilon_n))(H_i) = 0. \label{proof_sketch_thm1_eq2}
\end{align}
This is formally stated in lemma \ref{lemma_simple_algo_EIC_eq} below. The argument in the proof of lemma \ref{lemma_simple_algo_EIC_eq} can be simply summarized as follows. For each $t$, $D_t(\hat{Q}(\epsilon_{n,t}))$ is the score function of the log likelihood of the second-stage logistic model for time point $t$.

The third fact we use in our proof is that $\epsilon_{n}$ converges in probability to some limit $\epsilon_\infty$.  Heuristically, the reason why this is the case is that, due to the convergence of $\hat{Q}_n$ to $Q_\infty$, the log likelihoods of the second stage models converge to a limit, which in turns implies that their arg min $\epsilon_n$ converge to the arg min of their limit. We make this rigorous in lemma \ref{lemma_simple_algo_convergnce_epsilon_n} below.

Using the first two facts stated above, we obtain, by adding up equations \eqref{proof_sketch_thm1_eq1} and \eqref{proof_sketch_thm1_eq2}, that the difference between our estimator $\hat{V}_1^{LTMLE}(\epsilon_n)(s_1)$ and the truth $V_1(s_1)$ is
\begin{align}
    &\hat{V}_1^{LTMLE}(\epsilon_n)(s_1) - V_1(s_1) \\
    &= \frac{1}{n} \sum_{i=1}^n D(\hat{Q}(\epsilon_n))(H_i) - \int D(\hat{Q}(\epsilon_n))(h) dP^{\pi_b}(h). \label{proof_sketch_thm1_eq3}
\end{align}
Using the third fact stated above, that $\epsilon_n$ converges to some $\epsilon_{\infty}$, motivates rewriting the above display as
\begin{align}
    &\hat{V}_1^{LTMLE}(\epsilon_n)(s_1) - V_1(s_1) \\
    &= \frac{1}{n} \sum_{i=1}^n D(\hat{Q}(\epsilon_\infty))(H_i) - \int D(\hat{Q}(\epsilon_\infty))(h) dP^{\pi_b}(h) \\
    &+\frac{1}{n} \sum_{i=1}^n D(\hat{Q}(\epsilon_n))(H_i) - D(\hat{Q}(\epsilon_\infty))(H_i)\\
    &- \int D(\hat{Q}(\epsilon_\infty))(h) - D(\hat{Q}(\epsilon_n))(h) dP^{\pi_b}(h).  \label{proof_sketch_thm1_eq4}
\end{align}
Denote $\mathcal{T}$ the sample split on which the initial estimators are fitted. Since $h \mapsto D(\hat{Q}(\epsilon_\infty))(h)$ is a non-random function conditional on $\mathcal{T}$, $\linebreak\int D(\hat{Q}(\epsilon_\infty))(h) dP^{\pi_b}(h) = E_{P, \pi_b}[D(\hat{Q}(\epsilon_\infty))|\mathcal{T}]$. Therefore, applying the Central Limit theorem conditional on $\mathcal{T}$ gives us that the first line of the RHS in the above display is asymptotically normally distributed and is of order $O_P(1/\sqrt{n})$. As we will show in the formal proof, this also holds after marginilazing w.r.t. $\mathcal{T}$.

The term formed by the second and third lines in the RHS of the above display can be shown to be $o_P(1/\sqrt{n)}$. This is formally stated in lemma \ref{lemma_simple_algo_maximal_inequality} below.
\end{proof_sketch}

The following lemma gives a useful decomposition of the difference between any candidate state-value function $V'_1$ and the true state-value function $V_1$. 
\begin{lemma}[First order expansion]\label{lemma_simple_algo_1st_order_expansion}
Consider $Q'=(Q'_1,...,Q'_T)$ a candidate vector of action-value functions $\mathcal{S} \times \mathcal{A} \rightarrow \mathbb{R}$ for polict $\pi_e$, and let $V'=(V_1',...,V'_T)$ the corresponding vector of state-value functions under $\pi_e$, that is, for all $t$, $s\in \mathcal{S}$, $V_t'(s) = \sum_{a'\in \mathcal{A}} \pi_e(a'|s) Q'_t(s,a')$. Denote $Q=(Q_1,...,Q_T)$ and $V=(V_1,...,V_T)$ the true action-value and state value functions under $\pi_e$. For all $t$, for all $h \in \mathcal{H}$, denote $\rho_{1:t}'(h)$ an importance sampling ratio for time point $t$ and trajectory $h$, not necessarily equal to the true importance sampling ratio. Denote $\rho=(\rho_1,...,\rho_T)$ and $\rho'=(\rho'_1,...,\rho'_T)$. We have that
\begin{align}
    V'_1(s_1) - V_1(s_1) = &- \int D(\rho', Q')(h) dP^{\pi_b}(h)\\
    & - \int Rem(\rho, \rho', Q, Q')(h) dP^{\pi_b}(h),
\end{align}
where 
\begin{align}
D(\rho', Q')(h) = \sum_{t=1}^T D_t(\rho', Q')(h)
\end{align} 
and 
\begin{align}
Rem(\rho, \rho', Q, Q')(h)\linebreak = \sum_{t=1}^T Rem_t(\rho, \rho', Q, Q')(h)
\end{align}
with
\begin{align}
    D_t(\rho', Q')(h) = \gamma^{t-1}\rho'_{1:t}(h) \big(&r_t + \gamma V'_{t+1}(s_{t+1})\\
    & - Q'_t(s_t, a_t) \big),
\end{align}
and 
\begin{align}
    &Rem_t(\rho, \rho', Q, Q')(h) \\
    &= \gamma^{t-1} \big(\rho_{1:t}(h) - \rho'_{1:t}(h)\big) \big(Q_t(s_t,a_t) - Q'_t(s_t,a_t) \\
    &\qquad \qquad \qquad+ (V_{t+1}(s_{t+1}) - V'_{t+1}(s_{t+1})) \big) .
\end{align}
From the expression in the RHS of the above display, it is immediately clear that $Rem_t(\rho, \rho', Q, Q')(h)=0$ if $\rho=\rho'$ or $Q = Q'$.
\end{lemma}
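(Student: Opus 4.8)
The plan is to prove the identity in two stages: first at the true importance-sampling ratio, where the remainder is absent, and then to absorb the effect of replacing $\rho$ by an arbitrary $\rho'$ into $Rem$. Two elementary facts drive everything. The first is an \emph{importance-sampling averaging identity}: since $\rho_{1:t}=\rho_{1:t-1}\,\pi_{e,t}(A_t|S_t)/\pi_{b,t}(A_t|S_t)$, conditioning on the trajectory through $S_t$ gives $E_{P,\pi_b}[\rho_{1:t}Q'_t(S_t,A_t)]=E_{P,\pi_b}[\rho_{1:t-1}V'_t(S_t)]$ for any candidate pair linked by $V'_t(s)=\sum_a \pi_e(a|s)Q'_t(s,a)$. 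The second is the \emph{Bellman property of the truth}: because $Q_t(s,a)=E_P[R_t+\gamma V_{t+1}(S_{t+1})\mid S_t=s,A_t=a]$, conditioning on $(S_t,A_t)$ yields $E_{P,\pi_b}[\rho_{1:t}(R_t+\gamma V_{t+1}(S_{t+1})-Q_t(S_t,A_t))]=0$.

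In Stage 1 I would run a telescoping sum using the first fact together with the boundary conditions $V'_{T+1}=0$ and the empty-product convention $\rho_{1:0}=1$ (so that the $t=1$ boundary term equals $V'_1(s_1)$). This gives $V'_1(s_1)=\sum_{t=1}^T \gamma^{t-1}E_{P,\pi_b}[\rho_{1:t}(Q'_t(S_t,A_t)-\gamma V'_{t+1}(S_{t+1}))]$. Applying the same telescoping to the true $(Q,V)$ and collapsing the result with the Bellman property gives $V_1(s_1)=\sum_{t=1}^T \gamma^{t-1}E_{P,\pi_b}[\rho_{1:t}R_t]$. Subtracting the two displays term by term yields $V'_1(s_1)-V_1(s_1)=-\int D(\rho,Q')\,dP^{\pi_b}$, which is exactly the lemma in the special case $\rho'=\rho$, where $Rem$ vanishes.

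In Stage 2 I would write $D(\rho,Q')=D(\rho',Q')+\big(D(\rho,Q')-D(\rho',Q')\big)$, reducing the task to showing $\int \big(D(\rho,Q')-D(\rho',Q')\big)\,dP^{\pi_b}=\int Rem\,dP^{\pi_b}$. Term by term, $D_t(\rho,Q')-D_t(\rho',Q')=\gamma^{t-1}(\rho_{1:t}-\rho'_{1:t})\big(R_t+\gamma V'_{t+1}(S_{t+1})-Q'_t(S_t,A_t)\big)$. The crucial observation is that $\rho_{1:t}-\rho'_{1:t}$ is measurable with respect to the history through $(S_t,A_t)$, so I condition on $(S_t,A_t)$ and substitute $E_P[R_t\mid S_t,A_t]=Q_t(S_t,A_t)-\gamma E_P[V_{t+1}(S_{t+1})\mid S_t,A_t]$ from the true Bellman equation. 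The raw reward residual is thereby traded for differences of true and candidate action-value and value functions, which is precisely the claimed product structure for $Rem_t$. The concluding sentence of the lemma is then immediate: the factor $\rho_{1:t}-\rho'_{1:t}$ kills $Rem$ when $\rho=\rho'$, while the companion factor collects only $Q-Q'$ and $V-V'$ terms, which vanish when $Q=Q'$ (hence $V=V'$ by the policy-averaging definition of $V'$).

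I expect the main obstacle to be Stage 2. When $\rho'\neq\rho$ the averaging identity no longer applies to $\rho'$, so the clean telescoping of Stage 1 is unavailable; instead one must condition on the history up to $(S_t,A_t)$ and invoke the true Bellman equation to convert the reward residual into value-function errors. This conditioning step is exactly what manufactures the second-order, doubly-robust product form of the remainder, and care is needed with the discount powers and the two boundary conventions $V'_{T+1}=0$ and $\rho_{1:0}=1$; the rest is routine bookkeeping.
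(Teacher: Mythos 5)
Your proposal is correct and is essentially the paper's own argument: both rest on the same two pillars --- the change-of-measure identity $E_{P,\pi_b}[\rho_{1:t}f(S_t,A_t)\mid S_t, \mathcal{F}_{t-1}]=E_{P,\pi_b}[\rho_{1:t-1}E_{P,\pi_e}[f(S_t,A_t)\mid S_t]\mid \mathcal{F}_{t-1}]$ and the Bellman property of the true $(Q,V)$ --- assembled into a telescoping sum with the boundary conventions $V'_{T+1}=0$ and $\rho_{1:0}=1$, and both isolate the $\rho-\rho'$ contribution as a separate remainder term (the paper's Step 1 is exactly your Stage 2 split, and its Step 2 is your Stage 1 with the subtraction carried out inside the expectation rather than across two identities). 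One remark: carrying out your Stage 2 conditioning honestly gives $Rem_t=\gamma^{t-1}\bigl(\rho_{1:t}-\rho'_{1:t}\bigr)\bigl(Q_t(s_t,a_t)-Q'_t(s_t,a_t)-\gamma\,(V_{t+1}(s_{t+1})-V'_{t+1}(s_{t+1}))\bigr)$ under the integral, not the $+(V_{t+1}-V'_{t+1})$ term displayed in the lemma; this is a sign/discount typo in the paper's stated formula (whose own proof handles Step 3 only by saying ``similarly''), and it affects neither the second-order product structure nor the key vanishing property of $Rem_t$ when $\rho=\rho'$ or $Q=Q'$, so your argument stands as written.
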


The lemma below shows that the maximum likelihood fits $\epsilon_{n,t}$ of the second-stage parametric models solve a certain equation, termed score equation in statistics.
\begin{lemma}[Score equation]\label{lemma_simple_algo_EIC_eq}
Consider the simplified LTMLE algorithm described in section \ref{section_formal_presentation_simplified_algo}.
For each $t=1,...,T$, the maximum likelihood fit $\epsilon_{n,t}$ satisfies
\begin{align}
    \sum_{i=1}^n D_t(\rho_{1:t}, \hat{Q}(\epsilon_{n,t}))(H_i) = 0.
\end{align}
\end{lemma}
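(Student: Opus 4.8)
The plan is to observe that, for each fixed $t$, the coefficient $\epsilon_{n,t}$ is an unconstrained optimizer over all of $\mathbb{R}$ of the smooth scalar objective $\mathcal{R}_{t,n}^{\delta}$ defined in \eqref{log_likelihood_T} (for $t=T$) and \eqref{log_likelihood_t} (for $t<T$), so it satisfies the first-order stationarity condition $\frac{d}{d\epsilon}\mathcal{R}_{t,n}^{\delta}(\epsilon)\big|_{\epsilon=\epsilon_{n,t}}=0$. The lemma then reduces to computing this derivative in closed form and recognizing it, up to a nonzero multiplicative constant, as $\frac{1}{n}\sum_i D_t(\rho_{1:t},\hat{Q}(\epsilon_{n,t}))(H_i)$. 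Since $\mathcal{R}_{t,n}^{\delta}$ is a weighted logistic log-likelihood in the single intercept parameter $\epsilon$, the computation is exactly the classical logistic-regression score calculation.

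First I would differentiate under the finite sum. Writing $\tilde{q}_t^{(i)}(\epsilon):=\tilde{Q}_t^{\pi_e,\delta}(\epsilon)(S_t^{(i)},A_t^{(i)})=\sigma(\sigma^{-1}(\tilde{Q}_t^{\pi_e,\delta}(S_t^{(i)},A_t^{(i)}))+\epsilon)$, the defining identity of the logistic function $\sigma'(x)=\sigma(x)(1-\sigma(x))$ yields $\frac{d}{d\epsilon}\tilde{q}_t^{(i)}(\epsilon)=\tilde{q}_t^{(i)}(\epsilon)(1-\tilde{q}_t^{(i)}(\epsilon))$. Substituting this into the derivative of the $i$-th summand collapses the two logarithmic terms: the factor $\tilde{q}(1-\tilde{q})$ cancels against the $1/\tilde q$ and $1/(1-\tilde q)$ produced by differentiating the logs, leaving the canonical logistic score $\tilde{U}_t^{(i)}-\tilde{q}_t^{(i)}(\epsilon)$. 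Hence the stationarity condition reads $\frac{1}{n}\sum_{i=1}^n \rho_{1:t}^{(i)}\big(\tilde{U}_t^{(i)}-\tilde{q}_t^{(i)}(\epsilon_{n,t})\big)=0$.

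The final step is to undo the affine normalization so as to match the definition of $D_t$. By construction $\tilde{U}_t^{(i)}=(R_t^{(i)}+\gamma \hat{V}_{t+1}^{\pi_e}(\epsilon_{n,t+1})(S_{t+1}^{(i)})+\Delta_t)/(2\Delta_t)$ and $\hat{Q}_t^{\pi_e,\delta}(\epsilon)=2\Delta_t(\tilde{q}_t(\epsilon)-1/2)$, so the two $\Delta_t$ offsets cancel and $\tilde{U}_t^{(i)}-\tilde{q}_t^{(i)}(\epsilon)=\frac{1}{2\Delta_t}\big(R_t^{(i)}+\gamma \hat{V}_{t+1}^{\pi_e}(\epsilon_{n,t+1})(S_{t+1}^{(i)})-\hat{Q}_t^{\pi_e,\delta}(\epsilon)(S_t^{(i)},A_t^{(i)})\big)$. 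Comparing the parenthesized expression with the definition of $D_t$ in Lemma \ref{lemma_simple_algo_1st_order_expansion}, it is precisely the innovation $r_t+\gamma V'_{t+1}(s_{t+1})-Q'_t(s_t,a_t)$ evaluated at the fitted functions, so that $\sum_{i=1}^n D_t(\rho_{1:t},\hat{Q}(\epsilon_{n,t}))(H_i)=\gamma^{t-1}\,2\Delta_t\, n\, \frac{d}{d\epsilon}\mathcal{R}_{t,n}^{\delta}(\epsilon_{n,t})$. Because $\gamma^{t-1}>0$ and $\Delta_t>0$ (the degenerate all-zero-reward case being trivial) and the derivative vanishes at the optimizer, the right-hand side is zero, proving the claim; the case $t=T$ is identical with $\hat{V}_{T+1}^{\pi_e}\equiv 0$.

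The one step genuinely worth flagging is the justification that the optimizer $\epsilon_{n,t}$ is attained at a \emph{finite} interior point, so that the first-order condition actually holds rather than the optimizer escaping to $\pm\infty$ (the usual logistic-regression separation phenomenon). This is where thresholding $\tilde{Q}_t^{\pi_e,\delta}$ into $[\delta,1-\delta]$ and the bounded, normalized rewards $\tilde{U}_t^{(i)}\in[0,1]$ enter: each summand is smooth with a finite base predictor $\sigma^{-1}(\tilde{Q}_t^{\pi_e,\delta})$, the objective is concave in $\epsilon$, and under nondegeneracy of the weighted outcomes it diverges as $\epsilon\to\pm\infty$, guaranteeing an interior stationary point. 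Everything else is the mechanical cancellation described above.
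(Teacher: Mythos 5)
Your proof is correct and takes essentially the same route as the paper's: the paper's own formal proof (given in the appendix for the cross-validated version, Lemma \ref{lemma_targeting_solves_EICeq}) performs exactly this computation, differentiating the weighted logistic log-likelihood to get the score $\rho_{1:t}\left(\sigma(a+\epsilon)-\tilde{U}_t\right) = (2\Delta_t)^{-1}\rho_{1:t}\left(\hat{Q}_t(\epsilon)(S_t,A_t)-R_t-\gamma\hat{V}_{t+1}(\epsilon_{n,t+1})(S_{t+1})\right)$ and invoking first-order stationarity of the fit. Your closing paragraph on ruling out the logistic-separation phenomenon, so that the optimizer is attained at a finite point and stationarity genuinely applies, addresses a gap the paper passes over silently and is a worthwhile addition.
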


The following lemma shows that the vector $\epsilon_n=(\epsilon_{n,1},...,\epsilon_{n,T})$ of the maximum likelihood fits of the second stage models converges in probability to a limit.
\begin{lemma}[Convergence of $\epsilon_n$]\label{lemma_simple_algo_convergnce_epsilon_n}
Make assumptions \ref{assumption_bounded_rewards}, \ref{assumption_convergence_initial_estimator}, \ref{assumption_bounds_limit_initial_estimator} and \ref{assumption_bound_IS_ratios}. Then, there exists $\epsilon_\infty \in \mathcal{R}^T$ such that
\begin{align}
    \epsilon_n - \epsilon_\infty = o_P(1).
\end{align}
\end{lemma}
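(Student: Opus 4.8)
The plan is to prove convergence by backward induction on $t$, recognizing at each step that $\epsilon_{n,t}$ is the minimizer of a smooth, strictly convex empirical criterion $\mathcal{R}_{t,n}^{\delta_n}(\cdot)$ and invoking a standard argmin-consistency argument. For each $t$ I would introduce the limiting population criterion
\begin{align}
\mathcal{R}_{t,\infty}(\epsilon) = E_{P,\pi_b}\big[\rho_{1:t}\,\ell\big(\tilde{U}_{t,\infty}, \sigma(\sigma^{-1}(\tilde{Q}_{t,\infty}) + \epsilon)\big)\big],
\end{align}
where $\ell(u,q) = -u\log q - (1-u)\log(1-q)$, $\tilde{Q}_{t,\infty}$ is the normalized limit of $\hat{Q}_t^{\pi_e}$ (lying in $[\eta,1-\eta]$ by Assumption \ref{assumption_bounds_limit_initial_estimator}), and $\tilde{U}_{t,\infty}$ is the limiting normalized reward-to-go built from $\hat{V}_{t+1}(\epsilon_{\infty,t+1})$. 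Since the logistic loss has second derivative $\sigma(\theta)(1-\sigma(\theta))>0$ in its linear predictor $\theta$, which here is affine in $\epsilon$, and the weights $\rho_{1:t}\ge 0$ are positive on a set of positive measure, each $\mathcal{R}_{t,\infty}$ is strictly convex; under mild nondegeneracy of the weighted response (so that $\tilde{U}_{t,\infty}$ is not $P$-a.s.\ equal to $0$ or $1$) it is also coercive, hence has a unique, well-separated minimizer $\epsilon_{\infty,t}$.

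The base case $t=T$ is cleanest, since $\tilde{U}_T = (R_T + \Delta_T)/(2\Delta_T)$ does not depend on any previously fitted coefficient. The only discrepancies between $\mathcal{R}_{T,n}^{\delta_n}$ and $\mathcal{R}_{T,\infty}$ are then (i) the initial estimator $\hat{Q}_T^{\pi_e}$, which converges to $Q_{T,\infty}$ in $L^2(P)$ by Assumption \ref{assumption_convergence_initial_estimator}; (ii) the vanishing threshold $\delta_n \downarrow 0$, which is asymptotically inactive because $Q_{T,\infty}\in[\eta,1-\eta]$; and (iii) the empirical average, which I would replace by its expectation via a uniform law of large numbers. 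The bounds $\rho_{1:T}\le M$ (Assumption \ref{assumption_bound_IS_ratios}) and bounded rewards (Assumption \ref{assumption_bounded_rewards}) keep every integrand and its $\epsilon$-derivative uniformly dominated, and $\epsilon\mapsto\sigma(\sigma^{-1}(q)+\epsilon)$ is smooth with bounded derivatives on any compact set, so the integrands indexed by $\epsilon\in K$ form a Glivenko--Cantelli class. This yields $\sup_{\epsilon\in K}|\mathcal{R}_{T,n}^{\delta_n}(\epsilon)-\mathcal{R}_{T,\infty}(\epsilon)|=o_P(1)$; coercivity confines $\epsilon_{n,T}$ to such a $K$ with probability tending to one, and the classical M-estimator consistency theorem gives $\epsilon_{n,T}-\epsilon_{\infty,T}=o_P(1)$.

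For the inductive step I would assume $\epsilon_{n,s}-\epsilon_{\infty,s}=o_P(1)$ for all $s>t$ and rerun the argument for $\mathcal{R}_{t,n}^{\delta_n}$. The genuinely new ingredient is that the logistic ``outcome'' $\tilde{U}_{t,n}^{(i)} = (R_t^{(i)} + \gamma\hat{V}_{t+1}^{\pi_e}(\epsilon_{n,t+1})(S_{t+1}^{(i)}) + \Delta_t)/(2\Delta_t)$ is itself random and estimated, depending on the already-fitted $\epsilon_{n,t+1}$. To handle this I would show by continuous mapping that $\hat{V}_{t+1}^{\pi_e}(\epsilon_{n,t+1})\to V_{t+1,\infty}(\epsilon_{\infty,t+1})$ in $L^2(P)$: the map $(\hat{Q}_{t+1},\epsilon_{n,t+1})\mapsto\hat{V}_{t+1}(\epsilon_{n,t+1})$ factors through $\sigma$ and a convex combination with the fixed weights $\pi_e(\cdot|s)$, so it is Lipschitz on the relevant bounded range, and the $L^2$ convergence of $\hat{Q}_{t+1}$ together with the inductive hypothesis propagate through it. This delivers $\tilde{U}_{t,n}\to\tilde{U}_{t,\infty}$ in $L^2(P)$, after which the uniform-convergence / coercivity / argmin-consistency chain from the base case applies verbatim.

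The hard part will be controlling this propagation of errors rigorously: one must combine two simultaneous sources of randomness in the time-$t$ outcome --- the estimation error of $\hat{Q}_{t+1}$ and the M-estimation error of $\epsilon_{n,t+1}$ --- while still obtaining \emph{uniform} (in $\epsilon$) convergence of the empirical criterion, and while verifying that $\delta_n$ injects no asymptotic bias. Assumption \ref{assumption_bounds_limit_initial_estimator} is exactly what secures the last point, since it keeps the limiting normalized $Q$-function in the interior $[\eta,1-\eta]$ so that thresholding at any level $\delta_n<\eta$ is eventually vacuous, while Assumptions \ref{assumption_bounded_rewards} and \ref{assumption_bound_IS_ratios} provide the uniform domination that legitimizes both the uniform law of large numbers and the interchange of limits in the continuous-mapping step.
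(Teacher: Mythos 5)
Your overall framework is a legitimate alternative to the paper's, and in one respect it is more careful: you make the backward induction explicit and propagate the convergence of the estimated outcome $\tilde{U}_{t,n}$ (through $\hat{Q}_{t+1}$ and $\epsilon_{n,t+1}$) by a continuous-mapping/Lipschitz argument, a dependence the paper's proof handles only implicitly. But the two routes differ materially. The paper (lemmas \ref{lemma_pointwise_targeting_risk_convergence} and \ref{lemma_convergence_epsilon}) never establishes uniform convergence of $\mathcal{R}_{n,t}$ over a compact set, nor coercivity: it proves only \emph{pointwise} convergence in probability of the empirical risk, and then exploits convexity of $\mathcal{R}_{n,t}$ together with local strong convexity of $\mathcal{R}_{\infty,t}$ --- convergence at the three points $\epsilon_{\infty,t}$, $\epsilon_{\infty,t}\pm\eta$ forces the global minimizer of a convex function into $[\epsilon_{\infty,t}-\eta,\epsilon_{\infty,t}+\eta]$. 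That trick buys two things your route must pay for separately: it needs no Glivenko--Cantelli argument, and it needs no coercivity, hence no extra nondegeneracy hypothesis on the weighted response (which you correctly flag as an additional, unlisted assumption your argument requires).

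The genuine gap is your treatment of the threshold $\delta_n$. You assert that thresholding ``at any level $\delta_n<\eta$ is eventually vacuous'' because $Q_{t,\infty}\in[\eta,1-\eta]$, and from this you derive uniform domination of the integrands. This is false as stated: the threshold is applied to the \emph{random} normalized estimator $\tilde{Q}_{t,n}$, not to its limit, and Assumption \ref{assumption_convergence_initial_estimator} gives only $L^2(P)$ convergence, not sup-norm convergence. So $\tilde{Q}_{t,n}$ may stray arbitrarily close to $0$ or $1$ on sets of small but positive measure for every $n$, the thresholding stays active there, and the log-loss on $[\delta_n,1-\delta_n]$ is bounded only by a quantity of order $M\log(1/\delta_n)\to\infty$, with Lipschitz constant in the $Q$-argument of order $\delta_n^{-1}$. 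Your uniform law of large numbers and dominated-convergence steps therefore do not go through as written. The paper repairs exactly this point with three ingredients you are missing: (i) the contraction property $|\max(\delta_n,\min(1-\delta_n,x))-y|\leq|x-y|$ for $y\in[\eta,1-\eta]$, which controls the thresholded estimator without claiming the threshold is inactive; (ii) a Hoeffding bound with the growing envelope $M\log(1/\delta_n)$, valid only under a condition such as $\log(1/\delta_n)=o(n)$; and (iii), crucially, a strengthened rate hypothesis $\|\hat{Q}_{t,n}-Q_{t,\infty}\|_{P^{\pi_b},2}=o_P(\delta_n)$ (appearing in lemma \ref{lemma_pointwise_targeting_risk_convergence}) that offsets the $\delta_n^{-1}$ blow-up of the Lipschitz constant. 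Some interplay of this kind between the rate of $\hat{Q}_{t,n}$ and the rate of $\delta_n$ is unavoidable; without it, the convergence of the empirical risk --- pointwise or uniform --- cannot be concluded from the stated assumptions.
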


The following lemma allows to bound the last two lines of the RHS in \eqref{proof_sketch_thm1_eq4} from the proof sketch above.
\begin{lemma}[Equicontinuity]\label{lemma_simple_algo_equicontinuity} 
Denote, for all $h \in \mathcal{H}$, $\epsilon \in \mathbb{R}$, $Q'$ and $\rho'$
\begin{align}
g_\epsilon(Q', \rho')(h) = D(Q'(\epsilon), \rho')(h),
\end{align}
where $Q'$ and $\rho'$ are possibly random. Suppose $H_1,...,H_n$ are i.i.d. trajectories drawn from $P^{\pi_b}$. Suppose further that $H_1,...,H_n$ are independent from the potentially random functions $Q'$ and $\rho'$. Suppose $\epsilon'_n \xrightarrow{P} \epsilon'_\infty$ for some $\epsilon'_\infty$. Then 
\begin{align}
 &\frac{1}{n} \sum_{i=1}^n g_{\epsilon'_n}(Q', \rho')(H_i) - \int g_{\epsilon'_n}(Q', \rho')(h) dP^{\pi_b}(h)\\
 &-\frac{1}{n} \sum_{i=1}^n g_{\epsilon'_\infty}(Q', \rho')(H_i) - \int g_{\epsilon'_\infty}(Q', \rho')(h) dP^{\pi_b}(h) \\
 &= o_P\left(\frac{1}{\sqrt{n}}\right).
\end{align}
\end{lemma}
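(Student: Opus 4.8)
The plan is to recognize this as a standard stochastic equicontinuity statement and to discharge it via empirical process theory. Write $\mathbb{P}_n$ for the empirical measure of $H_1,\dots,H_n$, set $P := P^{\pi_b}$, and write $\mathbb{G}_n f := \sqrt{n}(\mathbb{P}_n - P)f$ for the empirical process. Since the left-hand side of the claim is exactly $(\mathbb{P}_n - P)(g_{\epsilon'_n}(Q',\rho') - g_{\epsilon'_\infty}(Q',\rho')) = n^{-1/2}\,\mathbb{G}_n(g_{\epsilon'_n}(Q',\rho') - g_{\epsilon'_\infty}(Q',\rho'))$, the goal is equivalent to $\mathbb{G}_n(g_{\epsilon'_n}(Q',\rho') - g_{\epsilon'_\infty}(Q',\rho')) = o_P(1)$. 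Because the sample-splitting construction makes $H_1,\dots,H_n$ independent of the fitted pair $(Q',\rho')$ (functions of the training split $\mathcal{T}$), the first move is to condition on $\mathcal{T}$: with $(Q',\rho')$ frozen at their realized values, $\mathcal{G} := \{\, h \mapsto g_\epsilon(Q',\rho')(h) : \epsilon \in \mathbb{R}^T \,\}$ is a deterministic class indexed by $\epsilon\in\mathbb{R}^T$, and $H_1,\dots,H_n$ is an i.i.d. $P$-sample independent of it.

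Next I would establish the two ingredients an equicontinuity argument needs. First, uniform boundedness and Lipschitz dependence on $\epsilon$: by Assumptions \ref{assumption_bounded_rewards} and \ref{assumption_bound_IS_ratios} the rewards and the ratios $\rho'_{1:t}$ are bounded, and $\hat{Q}_t^{\pi_e,\delta}(\epsilon)(s,a) = 2\Delta_t\big(\sigma(\sigma^{-1}(\tilde{Q}_t^{\pi_e,\delta}(s,a)) + \epsilon_t) - \tfrac12\big) \in [-\Delta_t,\Delta_t]$, so each $D_t$, and hence $g_\epsilon$, is bounded by a constant depending only on $M$, $\gamma$ and $\{\Delta_t\}$. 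Because $\sigma' \le \tfrac14$, the maps $\epsilon_t \mapsto \hat{Q}_t^{\pi_e,\delta}(\epsilon_t)$ and $\epsilon_{t+1}\mapsto \hat{V}_{t+1}(\epsilon_{t+1})$ (a convex combination of the former) are Lipschitz with constants that do not depend on the realized $(Q',\rho')$; multiplying by the bounded $\rho'_{1:t}$ preserves this, giving $\|g_\epsilon - g_{\tilde\epsilon}\|_\infty \le L\,\|\epsilon - \tilde\epsilon\|$ with a non-random $L$. Consequently $\mathcal{G}$ is a uniformly bounded, finite-dimensional Lipschitz-parametrized family, hence $P$-Donsker, and — crucially — its entropy and envelope bounds are uniform over every realization of $(Q',\rho')$. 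Second, the same Lipschitz bound gives $\|g_{\epsilon'_n} - g_{\epsilon'_\infty}\|_{P,2} \le L\,\|\epsilon'_n - \epsilon'_\infty\| = o_P(1)$, using the hypothesis $\epsilon'_n \xrightarrow{P}\epsilon'_\infty$.

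Then I would invoke the asymptotic equicontinuity enjoyed by any $P$-Donsker class: for every $\eta>0$, $\lim_{\zeta\downarrow 0}\limsup_n P^*\big(\sup\{|\mathbb{G}_n(f - \tilde f)| : f,\tilde f \in \mathcal{G},\ \|f-\tilde f\|_{P,2} < \zeta\} > \eta\big) = 0$. Applied conditionally on $\mathcal{T}$ and combined with $\|g_{\epsilon'_n} - g_{\epsilon'_\infty}\|_{P,2} \xrightarrow{P} 0$, this yields $\mathbb{G}_n(g_{\epsilon'_n} - g_{\epsilon'_\infty}) \xrightarrow{P} 0$ in conditional probability. Because the entropy and envelope constants controlling this modulus are uniform in $\mathcal{T}$, the conditional convergence is dominated, so marginalizing over $\mathcal{T}$ (dominated convergence) upgrades it to unconditional convergence in probability. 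Dividing through by $\sqrt{n}$ gives exactly the claimed $o_P(1/\sqrt{n})$.

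The main obstacle is precisely the data-dependence of $\epsilon'_n$: in the application $\epsilon_n$ is fitted on $H_1,\dots,H_n$ themselves, so $g_{\epsilon'_n}$ is a random element of $\mathcal{G}$ and a naive conditioning-plus-CLT argument — or a crude Taylor bound on $g_{\epsilon'_n}-g_{\epsilon'_\infty}$, which only yields $\sqrt{n}\,\|\epsilon'_n-\epsilon'_\infty\|^2$-type control and hence diverges — fails. What rescues the argument is the uniform empirical-process control of the modulus of continuity of $\mathbb{G}_n$ over a shrinking $\|\cdot\|_{P,2}$-neighborhood of $g_{\epsilon'_\infty}$, supplied by the Donsker property. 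The secondary technical point requiring care is checking that all boundedness, Lipschitz, and entropy constants are uniform over realizations of the randomly-fitted nuisances $(Q',\rho')$, so that the condition-then-marginalize step is legitimate; this uniformity follows from $\sigma'\le\tfrac14$ together with Assumptions \ref{assumption_bounded_rewards} and \ref{assumption_bound_IS_ratios}, which bound the relevant quantities by constants independent of the realization.
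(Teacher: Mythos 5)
Your proposal is correct and shares the paper's overall skeleton---reduce the claim to $\sqrt{n}\,(P_n - P^{\pi_b})(g_{\epsilon'_n}(Q',\rho') - g_{\epsilon'_\infty}(Q',\rho')) = o_P(1)$, exploit the Lipschitz dependence of $g_\epsilon$ on $\epsilon$ with constants uniform over realizations of $(Q',\rho')$, and control the empirical process uniformly over a shrinking neighborhood of $\epsilon'_\infty$---but it discharges the key step with different machinery. You invoke the qualitative fact that a Donsker class is asymptotically equicontinuous with respect to $\|\cdot\|_{P,2}$ and that this modulus control tolerates a data-dependent index (essentially Lemma 19.24 of van der Vaart's \emph{Asymptotic Statistics}), whereas the paper proves the modulus bound from scratch: it forms the localized difference class $\mathcal{G}(\epsilon_0,\xi) = \{ g_\epsilon - g_{\epsilon_0} : \|\epsilon - \epsilon_0\|_\infty \leq \xi \}$, computes its covering numbers and uniform entropy integral (lemmas \ref{lemma_covering_number} and \ref{lemma_uniform_entropy_integral}), and applies Pollard's maximal inequality; the crucial quantitative point there is that the envelope of the localized class is $F_\xi = L\xi$, so the maximal-inequality bound scales linearly in $\xi$ and is made smaller than any target by the explicit choice $\xi = \kappa\gamma/(2KL)$, after which a union bound with the event $\{\|\epsilon'_n - \epsilon'_\infty\|_\infty \leq \xi\}$ finishes. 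Your route is shorter and leans on standard theory; the paper's is self-contained and yields explicit constants. One small repair your version needs: ``uniformly bounded, finite-dimensional Lipschitz-parametrized, hence $P$-Donsker'' is not immediate when the index set is all of $\mathbb{R}^T$, since the standard Lipschitz-parametrization criterion for Donskerness assumes a bounded (totally bounded) parameter set. Because you only need equicontinuity at $g_{\epsilon'_\infty}$, and $\epsilon'_n$ lies in a fixed bounded neighborhood of $\epsilon'_\infty$ with probability tending to one, you can simply restrict the class to such a neighborhood---which is exactly the localization the paper builds into $\mathcal{G}(\epsilon_0,\xi)$ from the outset.
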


We now present the formal proof of theorem \ref{theorem_simplified_algorithm}.

\begin{proof}
From lemma \ref{lemma_simple_algo_1st_order_expansion},
\begin{align}
    \hat{V}_1^{TMLE}(s_1) - V_1(s_1) &= -P^{\pi_b} D(\hat{Q}_n(\epsilon_n), \rho).
\end{align}
Since from lemma \ref{lemma_simple_algo_EIC_eq} we have $P_n (D(\hat{Q}_n(\epsilon_n), \rho) =0$, we can add this latter identity to the above display, which yields
\begin{align}
    &\hat{V}_1^{TMLE}(s_1) - V_1(s_1) \\
    =&  (P_n-P^{\pi_b}) D(\hat{Q}_n(\epsilon_n), \rho)\\
    =& (P_n-P^{\pi_b}) D(Q_\infty(\epsilon_\infty), \rho) \label{proof_thm1_eq1}\\
    &+ (P_n-P^{\pi_b}) (D(Q_\infty(\epsilon_\infty), \rho) -D(\hat{Q}_n(\epsilon_n), \rho)). \label{proof_thm1_eq2}
\end{align}
From the Central Limit theorem applied conditionally on $\mathcal{T}$, 
\begin{align}
    &\sqrt{n} ((P_n-P^{\pi_b}) D(Q_\infty(\epsilon_\infty), \rho))\\
    & \xrightarrow{d} \mathcal{N}(0, \sigma^2(Q_\infty(\epsilon_\infty)),
\end{align} with
\begin{align}
    \sigma^2(Q_\infty(\epsilon_\infty)) := Var_{P^{\pi_b}}(D(Q_\infty(\epsilon_\infty), \rho)).
\end{align}
Using dominated convergence on the c.d.f. on the LHS, 
\begin{align}
    &\sqrt{n} ((P_n-P^{\pi_b}) D(Q_\infty(\epsilon_\infty), \rho))\\
    & \xrightarrow{d} \mathcal{N}(0, \sigma^2(Q_\infty(\epsilon_\infty))
\end{align} also holds true unconditionally.
As proven in section B,  the variance of the $D(Q_\infty(\epsilon_\infty), \rho)$ is the efficient variance from the Cramer-Rao lower bound, provided $Q_\infty(\epsilon_\infty)=Q$, that is provided the initial estimator's model is correctly specified. This is the notion of local efficiency from semiparametric statistics \cite{robins1995, tmle2006}.

From lemmas \ref{lemma_simple_algo_convergnce_epsilon_n} and \ref{lemma_simple_algo_equicontinuity}, the line \eqref{proof_thm1_eq2} is $o_P(1 /\sqrt{n}).$

Therefore, we have that
\begin{align}
    \sqrt{n} (\hat{V}_1^{TMLE}(s_1) - V_1(s_1)) \xrightarrow{d} \mathcal{N}(0, \sigma^2(Q_\infty(\epsilon_\infty)),
\end{align}
and that 
\begin{align}
    E_{P^{\pi_b}} \left[\hat{V}_1^{TMLE}(s_1) - V_1(s_1)\right] = o(1/\sqrt{n}).
\end{align}
\end{proof}

\subsection{Proof of lemma \ref{lemma_simple_algo_1st_order_expansion}}

\begin{proof}
Let $H \sim P^{\pi_e}$. If $Q'$, $V'$ are random functions, further suppose, without loss of generality, that $H$ is independent of $Q'$ and $V'$. Denote $\mathcal{G}$ a $\sigma$-field such that $Q',V'$ are $\mathcal{G}$-measurable.

\paragraph{Step 1.} Observe that 
\begin{align}
    P^{\pi_b} D(Q', \rho') = P^{\pi_b} D(Q', \rho) + P^{pi_b} (D(Q', \rho') - D(Q', \rho)).
\end{align}

\paragraph{Step 2: First order term.} Observe that 
\begin{align}
    P^{\pi_b} D(Q',\rho) = E_{P^{\pi_b}}[D(Q', \rho)(H)|\mathcal{G}].
\end{align}
For all $t \geq 1,...,T$, denote $\mathcal{F}_t$ the $\sigma$-field induced by $S_1,A_1,R_1,...,S_t,A_t,R_t$. Observe that
\begin{align}
    &E_{P^{\pi_b}}[D_t(Q', \rho)(H)|S_t, A_t, \mathcal{F}_{t-1}, \mathcal{G}]\\
    &= \gamma^{t-1} E_{P^{\pi_b}} [\rho_{1:t} (R_t + \gamma V'_{t+1}(S_{t+1}) \\
    &\qquad \qquad - Q'_t(S_t, A_t)) |S_t, A_t, \mathcal{F}_{t-1}, \mathcal{G}]\\
    &= \gamma^{t-1} \rho_{1:t} E_P [ R_t + \gamma V_{t+1}(S_{t+1})\\
    &\qquad \qquad - Q'_t(S_t, A_t) |S_t, A_t, \mathcal{G}]\\
    &\  + \gamma^t \rho_{1:t} E_P[(V'_{t+1}(S_{t+1})-V_{t+1}(S_{t+1})) |S_t, A_t, \mathcal{G}].
\end{align}
Recall that by definition of $Q$, we have that $E_P[R_t + \gamma V_{t+1}(S_{t+1})| S_t, A_t] = Q_t(S_t,A_t)$. Inserting this in the last line of the above display yields
\begin{align}
     &E_{P^{\pi_b}}[D_t(Q', \rho)(H)|S_t, A_t, \mathcal{F}_{t-1}, \mathcal{G}] =\\
     & \gamma^{t-1} \rho_{1:t} (Q_t(S_t,A_t) -Q'_t(S_t,A_t)) \\
     &+ \gamma^t \rho_{1:t} E_P[V'_{t+1}(S_{t+1}) - V_{t+1}(S_{t+1})|S_t,A_t,\mathcal{G}]. \label{lemma_1st_order_exp_id1}
\end{align}
We take the expectation conditional on $S_t$, $\mathcal{F}_{t-1}$, $\mathcal{G}$ of the first term in the right-hand side of the above display:
\begin{align}
    &E_{P^{\pi_b}}[\gamma^{t-1} \rho_{1:t} (Q_t(S_t, A_t) -Q'_t(S_t, A_t))|S_t, \mathcal{F}_{t-1}, \mathcal{G}] \\
    &= \gamma^{t-1} \rho_{1:t-1} E_{P, \pi_b}[\rho_t (Q_t(S_t,A_t) -Q'_t(S_t,A_t) |S_t, \mathcal{G}] \\
    &= \gamma^{t-1} \rho_{1:t-1} E_{P, \pi_e}[(Q_t(S_t, A_t) - Q'_t(S_t, A_t) | S_t, \mathcal{G}]\\
    &= \gamma^{t-1}\rho_{1:t-1} (V_t(S_t) - V'_t(S_t)).\label{lemma_1st_order_exp_id2}
\end{align}
The second equality above uses that, for all $\mathcal{G}$-measurable function $f$, 
\begin{align}
E_{P, \pi_b}[\rho_t f(S_t,A_t)|S_t,\mathcal{G}] = E_{P, \pi_e}[f(S_t,A_t) | S_t, \mathcal{G}].
\end{align}
The third equality follows from the relationship between the value function and the action value function.

Using the law of iterated expectations, and identities \eqref{lemma_1st_order_exp_id1} and \eqref{lemma_1st_order_exp_id2} yields
\begin{align}
    & E_{P, \pi_b}[D_t(Q', \rho)(H)|\mathcal{G}] \\
    &=  E_{P, \pi_b}[E_{P, \pi_b}[D_t(Q', \rho)(H)|S_t, A_t,\mathcal{F}_{t-1}, \mathcal{G}] |\mathcal{G}]\\
    &= E_{P, \pi_b}[ \gamma^{t-1} \rho_{1:t}(Q_t(S_t,A_t) - Q'_t(S_t, A_t)) | \mathcal{G}]\\
    &\ + E_{P, \pi_b}[\gamma^t \rho_{1:t}E_P[V'_{t+1}(S_{t+1}) - V_{t+1}(S_{t+1}) | S_t, A_t, \mathcal{G} ] | \mathcal{G}]\\
    &= E_{P, \pi_b}[E_{P, \pi_b}[ \gamma^{t-1} \rho_{1:t}(Q_t(S_t,A_t) \\
    &\qquad \qquad \qquad \qquad - Q'_t(S_t, A_t)) | S_t, \mathcal{F}_{t-1}, \mathcal{G}] | \mathcal{G}]\\
    &\ + E_{P, \pi_b}[\gamma^t \rho_{1:t}(V'_{t+1}(S_{t+1}) - V_{t+1}(S_{t+1})) | \mathcal{G}  ]\\
    &= E_{P, \pi_b}[\gamma^{t-1} \rho_{1:t-1}(V_t(S_t) - V'_t(S_t))|\mathcal{G}]\\
    &\ + E_{P, \pi_b}[\gamma^t \rho_{1:t}(V'_{t+1}(S_{t+1}) - V_{t+1}(S_{t+1})) | \mathcal{G}  ] \label{lemma_1st_order_exp_id3}
\end{align}
Using the above expression in the definition of $D(Q',V')$ yields
\begin{align}
    &E_{P, \pi_b}[D(Q',\rho)(H)|\mathcal{G}] \\
    =& \sum_{t=1}^T E_{P, \pi_b}[\gamma^t \rho_{1:t}(V'_{t+1}(S_{t+1}) -V'_{t+1}(S_{t+1})) \\
    &\qquad \qquad \qquad- \gamma^{t-1} \rho_{1:t-1} (V'_t(S_t) - V_t(S_t))|\mathcal{G}]\\
    =& E_{P, \pi_b}[\gamma^T \rho_{1:T+1} (V'_{T+1}(S_{T+1}) - V_{T+1}(S_{T+1}))\\
    &\ -\rho_{1:0}(V'_1(s_1) - V_1(s_1))|\mathcal{G}]\\
    =& -(V'_1(s_1) - V_1(s_1)),
\end{align}
where we have used that by convention $V'_{T+1}(S_{T+1}) = V_{T+1}(S_{T+1}) = 0$ and $\rho_{1:0}=1$.

\paragraph{Step 3: remainder term.} We similarly show that $P^{\pi_b} (D'(Q', \rho) - D(Q', \rho)) = Rem_t(Q, Q', \rho, \rho')$.
\end{proof}

\subsection{Proof of lemma \ref{lemma_simple_algo_EIC_eq}}

We present a proof sketch in this subsection. The complete formal proof is presented in the case of the full algorithm in section B.

\begin{proof_sketch}
The result essentially follows from the following two facts:
\begin{itemize}
    \item The score of the logistic likelihood of the second stage model for time point $t$ is $P_n D_t(\hat{Q}, \rho)$,
    \item A maximum likelihood fit solves the empirical score equation.
\end{itemize}
\end{proof_sketch}

\subsection{Proof of lemma \ref{lemma_simple_algo_convergnce_epsilon_n}}

We present a proof sketch in this subsection. The complete formal proof is presented in the case of the full algorithm in section B.

\begin{proof_sketch}
The convergence of $\hat{Q}$ to $Q_\infty$ implies the pointwise convergence of the log likelihood risk $\mathcal{R}_{n,t}$ to some asymptotic risk $\mathcal{R}_{\infty, t}$. The fact that $Q_{\infty,t} \in [\delta, 1 - \delta] \subset (0,1)$ (in other words, that $Q_{\infty,t}$ is bounded away from $0$ and $1$) implies that the asymptotic log likelihood risk $\mathcal{R}_{\infty, t}$ is strongly convex. This implies it has a unique minimizer $\epsilon_{\infty, t}$. We then show in the formal proof that since $\mathcal{R}_{n,t}$ are a sequence of convex functions that converge pointwise in probability to a strongly convex function minimized by $\epsilon_{\infty, t}$, the sequence of their minimizers $\epsilon_{n,t}$ converges in probability to $\epsilon_{\infty, t}$
\end{proof_sketch}

\subsection{Proof of lemma \ref{lemma_simple_algo_equicontinuity}}

The proof of lemma \ref{lemma_simple_algo_equicontinuity} relies on the following three technical lemmas.
Recall the following definition: for all $Q'$ $\rho'$, $h\in \mathcal{H}$, $\epsilon \in \mathbb{R}$,
\begin{align}
g_\epsilon(Q', \rho')(h) = D(Q'(\epsilon), \rho')(h).
\end{align}

\begin{lemma} \label{lemma_bound_and_Lipschitzness_EIF}
Assume that $0 \leq \rho'_{1:t}(H) \leq M$ almost surely for all $t=1,...,T$. Make assumption \ref{assumption_bounded_rewards} on the range of the rewards. Then for all $\epsilon \in \mathbb{R}^T$, \begin{align}
    \|g_\epsilon(Q', \rho') \|_{L_\infty(P^{\pi_b})} \leq 3MT,
\end{align}
and for all $\epsilon_1, \epsilon_2 \in \mathbb{R}^T$
\begin{align}
    \| g_{\epsilon_1}(Q', \rho') - g_{\epsilon_2}(Q', \rho')\|_{L_\infty(P^{\pi_b})} \leq 2MT \|\epsilon_1 - \epsilon_2\|_\infty.\label{lemma_bound_and_Lipschitzness_EIF_Lipschitz_eq}
\end{align}
\end{lemma}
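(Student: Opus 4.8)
The plan is to prove both inequalities term by term, exploiting that $g_\epsilon(Q',\rho') = D(Q'(\epsilon),\rho') = \sum_{t=1}^T D_t(Q'(\epsilon),\rho')$ is a sum of $T$ contributions, each of the form $D_t(Q'(\epsilon),\rho')(h) = \gamma^{t-1}\rho'_{1:t}(h)\,(r_t + \gamma V'_{t+1}(\epsilon)(s_{t+1}) - Q'_t(\epsilon)(s_t,a_t))$ as defined in Lemma \ref{lemma_simple_algo_1st_order_expansion}. In every term the discount factor $\gamma^{t-1}$ is at most $1$ and the importance ratio satisfies $0 \le \rho'_{1:t} \le M$ by hypothesis, so the whole analysis reduces to controlling the residual factor $r_t + \gamma V'_{t+1}(\epsilon) - Q'_t(\epsilon)$, uniformly in $h$ for the first bound and in its increment in $\epsilon$ for the second.

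For the uniform bound I would estimate the residual by the triangle inequality. The reward obeys $|r_t|\le \max(r_{max},|r_{min}|)$ by Assumption \ref{assumption_bounded_rewards}. The crucial structural input is that the rescaled second-stage model stays in the admissible reward range for every $\epsilon$: by construction (the normalize--threshold--rescale steps leading to \eqref{update_t}), $\hat{Q}_t^{\pi_e,\delta}(\epsilon)(s,a) \in [-\Delta_t,\Delta_t]$ for all $\epsilon$, and since $V'_{t+1}(\epsilon)(s) = \sum_{a'}\pi_e(a'|s)\hat{Q}_{t+1}^{\pi_e,\delta}(\epsilon)(s,a')$ is a $\pi_e$-average over actions it too lies in $[-\Delta_{t+1},\Delta_{t+1}]$. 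Hence each of the three pieces of the residual is controlled on its natural scale; working on the normalized unit scale each piece contributes a factor bounded by $1$, so that together with $\gamma^{t-1}\rho'_{1:t}\le M$ each term is bounded by $3M$, and summing the $T$ terms yields $\|g_\epsilon(Q',\rho')\|_{L_\infty(P^{\pi_b})}\le 3MT$.

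For the Lipschitz bound the key observation is that neither $r_t$ nor $\rho'$ depends on $\epsilon$, so in $g_{\epsilon_1}-g_{\epsilon_2}$ only the terms $Q'_t(\epsilon)$ and $V'_{t+1}(\epsilon)$ survive. The analytic engine is that $\tilde{Q}_t^{\pi_e,\delta}(\epsilon)(s,a)=\sigma(\sigma^{-1}(\tilde{Q}_t^{\pi_e,\delta}(s,a))+\epsilon_t)$ is a Lipschitz function of $\epsilon_t$ because $\sigma$ is $\tfrac14$-Lipschitz (indeed $\sigma'=\sigma(1-\sigma)\le \tfrac14$); after the rescaling by $2\Delta_t$ the map $\epsilon_t\mapsto \hat{Q}_t^{\pi_e,\delta}(\epsilon)(s,a)$ is Lipschitz, and $\epsilon_{t+1}\mapsto V'_{t+1}(\epsilon)(s)$ inherits the same control as a $\pi_e$-weighted average over actions. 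I would then note that the $t$-th summand depends on $\epsilon$ only through $\epsilon_t$ (via $Q'_t$) and $\epsilon_{t+1}$ (via $V'_{t+1}$), bound both coordinate increments by $\|\epsilon_1-\epsilon_2\|_\infty$, and collect the constants to reach a per-term bound of $2M\|\epsilon_1-\epsilon_2\|_\infty$, hence \eqref{lemma_bound_and_Lipschitzness_EIF_Lipschitz_eq}.

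The main obstacle is purely the bookkeeping of constants: I must track how the scale factors $\Delta_t$, the discount powers $\gamma^{t-1}$, and the $\tfrac14$-Lipschitz constant of $\sigma$ combine so that the per-term bounds are uniform in $t$ and the $t$-dependence does not accumulate beyond the factor $T$ coming from summing the $T$ terms. In particular, because the residual at index $t$ couples $\epsilon_t$ and $\epsilon_{t+1}$, care is needed so that passing to the normalized unit scale cancels the $\Delta_t$ factors cleanly; this is the only delicate point, the remaining estimates being routine applications of the triangle inequality together with the boundedness of $\rho'$ and $\gamma$.
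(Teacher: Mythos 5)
The paper states this lemma \emph{without proof} --- it is the only one of the three technical lemmas feeding into Lemma \ref{lemma_simple_algo_equicontinuity} whose proof is omitted (the covering-number and entropy-integral lemmas do get proofs) --- so there is no argument in the paper to compare yours against, and I can only judge your proposal on its own merits. It is essentially correct, and it is surely the intended argument. For the sup bound you use exactly the right ingredients: $\gamma^{t-1}\le 1$, $0\le\rho'_{1:t}\le M$, the reward bound of Assumption \ref{assumption_bounded_rewards}, and the fact that the normalize--threshold--rescale construction \eqref{update_t} forces $\hat{Q}^{\pi_e,\delta}_t(\epsilon)\in[-\Delta_t,\Delta_t]$, hence $V'_{t+1}(\epsilon)\in[-\Delta_{t+1},\Delta_{t+1}]$, for every $\epsilon$; the triangle inequality over the three pieces of each summand and the sum over $t$ then give the bound. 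For the Lipschitz claim, your two key observations --- that the $t$-th summand depends on $\epsilon$ only through $(\epsilon_t,\epsilon_{t+1})$, and that $\epsilon_t\mapsto\sigma(\sigma^{-1}(\tilde{Q}^{\pi_e,\delta}_t)+\epsilon_t)$ is $\tfrac14$-Lipschitz because $\sigma'=\sigma(1-\sigma)\le\tfrac14$, a property inherited by the $\pi_e$-average defining the value function --- are exactly what is needed, and the constants close with room to spare.

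The one point you defer to ``bookkeeping'' deserves to be made fully explicit, because it is where the stated constants live or die: the bounds $3MT$ and $2MT\|\epsilon_1-\epsilon_2\|_\infty$ are only true if $g_\epsilon$ is read on the normalized unit scale, i.e.\ with the $t$-th summand divided by $2\Delta_t$ so that the outcome, value and $Q$-terms all lie in a unit interval. If one instead takes $D_t$ literally as defined in Lemma \ref{lemma_simple_algo_1st_order_expansion} (raw reward scale), the identical argument yields $|D_t|\le 2M\Delta_t$ and a per-term Lipschitz constant of order $M\Delta_t$, so the totals become $2M\sum_{t=1}^T\Delta_t$ and roughly $\tfrac{M}{2}\sum_{t=1}^T\left(\Delta_t+\gamma\Delta_{t+1}\right)\|\epsilon_1-\epsilon_2\|_\infty$, which exceed $3MT$ and $2MT\|\epsilon_1-\epsilon_2\|_\infty$ whenever the $\Delta_t$ are large (e.g.\ $\gamma=1$ and rewards of size $1$, so $\Delta_t=T-t+1$ and the sums are of order $T^2$). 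This is an imprecision in the paper's statement rather than an error of yours --- and it is harmless for how the lemma is used, since Pollard's maximal inequality only needs \emph{some} fixed envelope and Lipschitz constant --- but a complete write-up should declare the normalization convention at the outset instead of invoking it midway, since on the raw scale the lemma as literally stated is false.
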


For any $\epsilon_0 \in \mathbb{R}$, and any $\xi > 0$, define the class of functions
\begin{align}
    &\mathcal{G}(Q', \rho')(\epsilon_0, \xi) \\
    &:= \{ g_\epsilon(Q', \rho') - g_{\epsilon_0}(Q', \rho') : \|\epsilon - \epsilon_0\|_\infty \leq \xi \}.
\end{align}
The next lemma characterizes covering numbers of this class of functions. Covering numbers are a measure of geometric complexity whose definition we recall here (we reproduce the definition 2.1.6. from \cite{vdV-Wellner1996}).

\begin{definition}[Covering number] The covering number $N(\epsilon, \mathcal{F}, \|\cdot\|)$ is the minimal number of balls $\{g:\|f-g\| \leq \epsilon \}$ of radius $\epsilon$ needed to cover the set $\mathcal{F}$.

\end{definition}

\begin{lemma}\label{lemma_covering_number} For any $\alpha > 0$, for any probability distribution $\Lambda$ on $\mathcal{H}$,
\begin{align}
    N(\alpha, L_2(\Lambda), \mathcal{G}(\epsilon_0, \xi)) \leq \left(\frac{2 \xi L}{ \alpha} \right)^T,
\end{align}
with $L=2MT.$
\end{lemma}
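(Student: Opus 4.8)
The plan is to reduce the problem of covering the function class $\mathcal{G}(\epsilon_0, \xi)$ in $L_2(\Lambda)$ to the elementary problem of covering a finite-dimensional box in the $\ell_\infty$ norm, exploiting the Lipschitz estimate already established in Lemma \ref{lemma_bound_and_Lipschitzness_EIF}. Every element of $\mathcal{G}(\epsilon_0, \xi)$ is indexed by a single parameter $\epsilon \in \mathbb{R}^T$ ranging over the box $B := \{\epsilon : \|\epsilon - \epsilon_0\|_\infty \le \xi\}$, and because the shift by the fixed function $g_{\epsilon_0}(Q',\rho')$ cancels in differences, the relevant map inherits the Lipschitz bound $\| g_{\epsilon_1}(Q',\rho') - g_{\epsilon_2}(Q',\rho')\| \le L\|\epsilon_1 - \epsilon_2\|_\infty$ with $L = 2MT$ from \eqref{lemma_bound_and_Lipschitzness_EIF_Lipschitz_eq}. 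Since that estimate is genuinely pointwise in $h$ (it relies only on the almost-sure bounds on the rewards and on $\rho'_{1:t} \le M$), it controls the $L_2(\Lambda)$ norm for \emph{every} probability measure $\Lambda$, as $\|f\|_{L_2(\Lambda)} \le \|f\|_\infty$.

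Next I would convert the covering. If $\{\epsilon^{(1)}, \ldots, \epsilon^{(N)}\} \subset B$ is an $(\alpha/L)$-cover of $B$ in $\|\cdot\|_\infty$, then $\{g_{\epsilon^{(j)}}(Q',\rho') - g_{\epsilon_0}(Q',\rho') : j = 1,\ldots,N\}$ is an $\alpha$-cover of $\mathcal{G}(\epsilon_0, \xi)$ in $L_2(\Lambda)$: for any $\epsilon \in B$ pick $j$ with $\|\epsilon - \epsilon^{(j)}\|_\infty \le \alpha/L$, so that $\|g_\epsilon - g_{\epsilon^{(j)}}\|_{L_2(\Lambda)} \le L\|\epsilon - \epsilon^{(j)}\|_\infty \le \alpha$. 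This yields $N(\alpha, L_2(\Lambda), \mathcal{G}(\epsilon_0, \xi)) \le N(\alpha/L, \|\cdot\|_\infty, B)$.

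It then remains to bound the $\ell_\infty$-covering number of the box $B$, which is a cube of side $2\xi$ in $\mathbb{R}^T$. I would cover it by a regular grid: partition each of the $T$ coordinate intervals of length $2\xi$ into $\lceil \xi L/\alpha\rceil$ subintervals of length at most $2\alpha/L$, and take the centers of the resulting sub-cubes. Each sub-cube has $\|\cdot\|_\infty$-radius at most $\alpha/L$, and their number is $\lceil \xi L/\alpha\rceil^T$; using $\lceil x\rceil \le 2x$ for $x \ge 1$ in the nontrivial regime $\alpha \le \xi L$ gives $N(\alpha/L, \|\cdot\|_\infty, B) \le (2\xi L/\alpha)^T$, which is the claimed bound.

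The only delicate point is the transfer of the Lipschitz constant from $L_\infty(P^{\pi_b})$ to $L_2(\Lambda)$ for an \emph{arbitrary} $\Lambda$, and I would be explicit that this is legitimate precisely because the bound in Lemma \ref{lemma_bound_and_Lipschitzness_EIF} is a uniform-in-$h$ bound rather than merely a $P^{\pi_b}$-essential-supremum bound; this is what lets the same covering serve every $\Lambda$ simultaneously. Everything else is a routine grid count in $\mathbb{R}^T$, and the small-$\alpha$ regime is the only one that matters for the subsequent entropy-integral argument.
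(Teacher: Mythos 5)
Your proof is correct and takes essentially the same route as the paper's: both reduce the $L_2(\Lambda)$-covering problem to an $\ell_\infty$ grid covering of the parameter box $\left\{\epsilon : \|\epsilon-\epsilon_0\|_\infty \le \xi\right\}$, and both transfer that cover to the function class via the pointwise-in-$h$ Lipschitz bound \eqref{lemma_bound_and_Lipschitzness_EIF_Lipschitz_eq}, which is what makes the same cover work simultaneously for every $\Lambda$. The only differences are cosmetic: you count $\lceil \xi L/\alpha\rceil^T$ subinterval centers while the paper counts lattice points $\epsilon_0 + i\,\alpha/L$, and your explicit restriction to the regime $\alpha \le \xi L$ is, if anything, slightly more careful than the paper's.
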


\begin{proof}
Consider the set 
\begin{align}
    \bigg\{ & \left(\epsilon_{0,1} + i_1 \frac{\alpha}{L},..., \epsilon_{0,T} + i_T \frac{\alpha}{L}\right) \\
    &: \forall t=1,...T,\  i_t \in \mathbb{Z} \cap  \left[-\frac{\xi L}{\alpha}, \frac{\xi L}{\alpha}\right] \bigg\}. \label{proof_lemma_covering_number}
\end{align}
Observe that for any $f_\epsilon := g_\epsilon(Q', \rho') - g_{\epsilon_0}(Q', \rho') \in  \mathcal{G}(Q', \rho')(\epsilon_0, \xi)$, there exists an $f_{\epsilon'} := g_{\epsilon'}(Q', \rho') - g_{\epsilon_0}(Q', \rho')$ in the set above such that $\|\epsilon - \epsilon'\|_\infty \leq \alpha / L$.
From the second claim in lemma \ref{lemma_bound_and_Lipschitzness_EIF}, for all $h \in \mathcal{H}$, $|f_{\epsilon'}(h) - f_\epsilon(h)| \leq \alpha$. Therefore, for any probability distribution $\Lambda$ over $\mathcal{H}$, 
\begin{align}
    \|f_{\epsilon'} - f_\epsilon\|_{L_2(\Lambda)} =& \left( \int (f_{\epsilon'}(h) - f_\epsilon(h))^2 d\Lambda(h) \right)^{1/2}\\
    \leq & \alpha.
\end{align}
Therefore the set defined above is an $\alpha$-cover of $\mathcal{G}(\epsilon_0, \xi))$ for the norm $L_2(\Lambda)$. Since this set has at most $(2 \epsilon_L / \alpha)^T$ elements, this proves that
\begin{align}
    N(\alpha, L_2(\Lambda), \mathcal{G}(\epsilon_0, \xi)) \leq \left(\frac{2 \xi L}{ \alpha} \right)^T.
\end{align}
\end{proof}

The covering numbers characterized in lemma \ref{lemma_covering_number} are the basis for another measure of geometric complexity of a class of function, the uniform entropy integral, whose definition we recall below (see also \cite{vdV-Wellner1996}).

\begin{definition}[Uniform entropy integral] Consider a class of functions $\mathcal{X} \rightarrow \mathbb{R}$. Let $F:\mathcal{X} \rightarrow \mathbb{R}$ be an envelope function for $\mathcal{F}$, that is a function such that for all $x \in \mathcal{X}$, $|f(x)| \leq F(x)$. The uniform entropy integral of $\mathcal{F}$, w.r.t. the envelope function $F$ and $L_2$ norm is defined, for all $\beta > 0$ as 
\begin{align}
    &J_F(\beta, \mathcal{F}, L_2)\\
     := & \int_0^\beta \sup_{\Lambda} \sqrt{\log(1 + N(\alpha \|F\|_{\Lambda,2}, L_2(\Lambda), \mathcal{F})} d\alpha,
\end{align}
where the supremum is over all discrete probability distributions on $\mathcal{X}$.
\end{definition}

The following lemma characterizes the uniform entropy integral of $\mathcal{G}(\epsilon_0, \xi)$.

\begin{lemma}\label{lemma_uniform_entropy_integral} Let $\beta > 0$. Denote $L=2MT$. The function $F_{\xi}:h \mapsto L \xi$ is an envelope function for $\mathcal{G}(\epsilon_0, \xi)$. The uniform entropy integral of $\mathcal{G}(\epsilon_0, \xi)$ w.r.t. the envelope function $F_\xi$ and for the $L_2$ norm is upper bounded as follows:
\begin{align}
    J_{F_\xi}(\beta, \mathcal{G}(\epsilon_0, \xi), L_2) = O\left(T \beta \sqrt{\log(1/\beta)}\right).
\end{align}
\end{lemma}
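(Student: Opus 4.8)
The plan is to combine the covering-number bound from Lemma \ref{lemma_covering_number} with the constant-envelope structure of the class, which triggers a fortunate cancellation, and then to estimate a one-dimensional integral that has a mild logarithmic singularity at the origin.

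First I would verify the envelope claim. For any $f_\epsilon = g_\epsilon(Q', \rho') - g_{\epsilon_0}(Q', \rho')$ in $\mathcal{G}(Q', \rho')(\epsilon_0, \xi)$ we have $\|\epsilon - \epsilon_0\|_\infty \leq \xi$, so the Lipschitz estimate \eqref{lemma_bound_and_Lipschitzness_EIF_Lipschitz_eq} of Lemma \ref{lemma_bound_and_Lipschitzness_EIF} gives $|f_\epsilon(h)| \leq L \|\epsilon - \epsilon_0\|_\infty \leq L\xi$ for all $h$, so the constant function $F_\xi \equiv L\xi$ is indeed an envelope. Since $F_\xi$ is constant, $\|F_\xi\|_{\Lambda,2} = L\xi$ for every probability measure $\Lambda$.

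The key step is that substituting the radius $\alpha \|F_\xi\|_{\Lambda,2} = \alpha L \xi$ into Lemma \ref{lemma_covering_number} makes all problem-dependent constants cancel:
\begin{align}
N(\alpha \|F_\xi\|_{\Lambda,2}, L_2(\Lambda), \mathcal{G}(\epsilon_0, \xi)) \leq \left(\frac{2 \xi L}{\alpha L \xi}\right)^T = \left(\frac{2}{\alpha}\right)^T,
\end{align}
a bound that no longer depends on $\Lambda$, $\xi$, $M$, or $L$. The supremum over $\Lambda$ in the definition of the uniform entropy integral is therefore vacuous, and
\begin{align}
J_{F_\xi}(\beta, \mathcal{G}(\epsilon_0, \xi), L_2) \leq \int_0^\beta \sqrt{\log\!\left(1 + (2/\alpha)^T\right)}\, d\alpha.
\end{align}

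It remains to bound this integral, which is where the real work lies. Restricting to $\beta \leq 1$ (the regime of interest), I would use $\log(1 + x) \leq \log 2 + \log x$ for $x \geq 1$, valid here because $(2/\alpha)^T \geq 1$ on $(0,2)$, to get $\log(1 + (2/\alpha)^T) \leq \log 2 + T \log(2/\alpha)$. Applying $\sqrt{a+b} \leq \sqrt{a} + \sqrt{b}$ splits the integrand and leaves the single nontrivial term $\sqrt{T} \int_0^\beta \sqrt{\log(2/\alpha)}\, d\alpha$. The rescaling $\alpha = u\beta$ together with a second use of subadditivity of the square root gives
\begin{align}
\int_0^\beta \sqrt{\log(2/\alpha)}\, d\alpha \leq \beta \sqrt{\log(2/\beta)} + \beta \int_0^1 \sqrt{\log(1/u)}\, du,
\end{align}
and since $\int_0^1 \sqrt{\log(1/u)}\, du$ is a finite absolute constant this is $O(\beta \sqrt{\log(1/\beta)})$. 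Collecting the pieces yields $J_{F_\xi} = O(\sqrt{T}\, \beta \sqrt{\log(1/\beta)})$, which is subsumed by the stated $O(T \beta \sqrt{\log(1/\beta)})$. The main obstacle is precisely this integral estimate: controlling the logarithmic singularity of the integrand at $\alpha = 0$ and extracting the correct $\beta \sqrt{\log(1/\beta)}$ dependence, for which the rescaling $\alpha = u\beta$ and the integrability of $\sqrt{\log(1/u)}$ near the origin are the crucial devices; the remaining manipulations are routine bookkeeping.
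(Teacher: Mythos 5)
Your proposal is correct and follows essentially the same route as the paper's proof: the constant envelope $F_\xi \equiv L\xi$ makes the covering-number bound from Lemma \ref{lemma_covering_number} collapse to the $\Lambda$-free bound $(2/\alpha)^T$, after which one bounds $\int_0^\beta \sqrt{\log(1+(2/\alpha)^T)}\,d\alpha$. The only difference is cosmetic: the paper invokes an integration by parts for this last integral, whereas your elementary estimates (subadditivity of $\sqrt{\cdot}$ and the rescaling $\alpha = u\beta$) are more explicit and in fact yield the slightly sharper $O(\sqrt{T}\,\beta\sqrt{\log(1/\beta)})$.
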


\begin{proof}\label{lemma_Pollard_maximal_inequality}
For every probability distribution $\Lambda$ on $\mathcal{H}$, $\|F_\xi\|_{\Lambda, 2} = L \xi$. From lemma \ref{lemma_covering_number}, $N(\alpha \|F_\xi\|_{2, \Lambda}, L_2(\Lambda), \mathcal{G}(\epsilon_0, \xi)) \leq (2 / \alpha)^T$. Therefore,
\begin{align}
     J_{F_\xi}(\beta, \mathcal{G}(\epsilon_0, \xi), L_2) \leq & \int_0^\beta \sqrt{\log(1 + (2 / \alpha)^T)} d\alpha \\
     = & O\left(T \beta \sqrt{\log(1 /\beta)}\right),
\end{align}
where the second equality above follows from an integration by parts.
\end{proof}

Finally, we prove the lemma \ref{lemma_simple_algo_equicontinuity}. The proof relies on a classical result in empirical process theory. We first introduce the relevant definitions and the relevant result before stating the proof of our lemma.

\begin{definition}[Empirical process and empirical process notation]
Consider $\mathcal{X}, \Sigma, P')$ a probability space and let $X_1,...,X_n$ be $n$ i.i.d. draws from $P'$. Let $\mathcal{F}$ be a class of functions $\mathcal{X} \rightarrow \mathbb{R}$. 
For all $f \in \mathcal{F}$, define the so-called ``empirical process notation''
\begin{align}
    P'f := \int f(h) dP'(h).
\end{align}
Denote $P_n := n^{-1} \sum_{i=1}^n \delta_{X_i}$ the empirical probability distribution associated to the sample $X_1,...,X_n$. Observe that using the empirical process notation defined above, we have that $P_n f = n^{-1}\sum_{i=1}^n f(X_i)$.
The stochastic process 
\begin{align}
    \{(P_n - P')f : f \in \mathcal{F} \}
\end{align}
is termed the empirical process associated to $P'$ and $n$ indexed by $\mathcal{F}$.
\end{definition}

We restate here the classical empirical process result \cite{vdV-Wellner1996} we will use to prove lemma \ref{lemma_simple_algo_equicontinuity}. (This is lemma 2.14.1 in \cite{vdV-Wellner1996}, for $p=1$ in their notation.)

\begin{lemma}[Pollard's maximal inequality, vdV-Wellner 1996 2.14.1]\label{lemma_Pollard_maximal_inequality}
Consider $\mathcal{X}, \Sigma, P')$ a probability space and let $X_1,...,X_n$ be $n$ i.i.d. draws from $P'$. Let $\mathcal{F}$ be a class of functions $\mathcal{X} \rightarrow \mathbb{R}$. Let $\mathcal{F}$ be a class of functions $\mathcal{X} \rightarrow \mathbb{R}$ with envelope function $F$. Then
\begin{align}
    E_{P'} [\sup_{f \in \mathcal{F}} \sqrt{n} |(P_n - P') f| ] \lesssim J_F(1, \mathcal{F}, L_2) \|F\|_{L_2(P')}.
\end{align}
\end{lemma}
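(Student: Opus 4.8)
The plan is to reconstruct the standard empirical-process proof of this classical inequality, which (following van der Vaart and Wellner) rests on three ingredients carried out in sequence: symmetrization, sub-Gaussian concentration of Rademacher sums, and a chaining (Dudley entropy) bound. First I would symmetrize. Introducing i.i.d.\ Rademacher signs $\epsilon_1,\ldots,\epsilon_n$ independent of the data, the symmetrization lemma gives
\begin{align}
E_{P'}\left[\sup_{f \in \mathcal{F}} \sqrt{n}\,|(P_n - P')f|\right] \le 2\, E_{P'} E_\epsilon\left[\sup_{f \in \mathcal{F}} \left|\frac{1}{\sqrt{n}} \sum_{i=1}^n \epsilon_i f(X_i)\right|\right],
\end{align}
which reduces the problem to controlling the symmetrized process conditionally on the sample $X_1,\ldots,X_n$.

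Next I would exploit the fact that, conditional on the data, the map $f \mapsto n^{-1/2} \sum_{i=1}^n \epsilon_i f(X_i)$ is a sub-Gaussian process with respect to the random semimetric $d_n(f,g) := \|f-g\|_{L_2(P_n)}$; this is immediate from Hoeffding's inequality applied to the Rademacher sum. The chaining bound for sub-Gaussian processes then yields, conditionally on $X_1,\ldots,X_n$,
\begin{align}
E_\epsilon\left[\sup_{f \in \mathcal{F}} \left|\frac{1}{\sqrt{n}} \sum_{i=1}^n \epsilon_i f(X_i)\right|\right] \lesssim \int_0^\infty \sqrt{\log N(\alpha, L_2(P_n), \mathcal{F})}\, d\alpha.
\end{align}

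Then I would pass to the uniform entropy integral. Substituting $\alpha = u\,\|F\|_{L_2(P_n)}$ and bounding the empirical covering number by the supremum over all discrete laws $Q$ appearing in the definition of $J_F$, the conditional bound becomes $J_F(1, \mathcal{F}, L_2)\,\|F\|_{L_2(P_n)}$. Finally I would take the expectation over the sample: since $J_F(1,\mathcal{F},L_2)$ is deterministic, Jensen's inequality gives $E\|F\|_{L_2(P_n)} = E\,(n^{-1}\sum_i F^2(X_i))^{1/2} \le (n^{-1}\sum_i E F^2(X_i))^{1/2} = \|F\|_{L_2(P')}$, replacing the random envelope norm by the population one and closing the bound.

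The hard part will be the chaining step: making Dudley's entropy bound rigorous requires controlling the sub-Gaussian increments along a nested sequence of $\alpha$-nets and summing the resulting telescoping contributions, and---because $\mathcal{F}$ need not be countable---carefully handling measurability through outer expectations as in \cite{vdV-Wellner1996}. By comparison the remaining steps (the symmetrization constant, the substitution that uniformizes the entropy integral over discrete laws $Q$, and the concluding Jensen step) are routine.
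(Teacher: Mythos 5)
The paper does not prove this lemma at all---it is imported verbatim as Lemma 2.14.1 of van der Vaart and Wellner (1996)---and your outline correctly reconstructs that source's standard argument (symmetrization, conditional sub-Gaussianity of the Rademacher process in the $L_2(P_n)$ semimetric, Dudley chaining, uniformization over discrete laws, then Jensen to replace $E\|F\|_{L_2(P_n)}$ by $\|F\|_{L_2(P')}$). The one step to make explicit is that the chaining integral is not taken over $(0,\infty)$ but truncated at the random radius $\sup_{f\in\mathcal{F}}\|f\|_{L_2(P_n)} \leq \|F\|_{L_2(P_n)}$, beyond which the entropy vanishes; it is exactly this truncation that makes your substitution $\alpha = u\,\|F\|_{L_2(P_n)}$ produce the uniform entropy integral $J_F(1,\mathcal{F},L_2)$ with upper limit $1$ rather than an unbounded integral.
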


We now have all the ingredients to prove lemma \ref{lemma_simple_algo_equicontinuity}.

\begin{proof}[Proof of lemma \ref{lemma_simple_algo_equicontinuity}]
Recasting the claim of lemma \ref{lemma_simple_algo_1st_order_expansion} in terms of empirical process notation, we want to show that
\begin{align}
    \sqrt{n} (P_n - P^{\pi_b}) (g_{\epsilon_n}(Q', \rho') - g_{\epsilon_\infty}(Q', \rho')) = o_P(1).
\end{align}
Let $\kappa > 0$, $\gamma \in (0,1/2)$. Define, for all $\xi > 0$, the following two events:
\begin{align}
    \mathcal{E}_1(\xi) &:= \left\lbrace \|\epsilon_n  - \epsilon_\infty\|_\infty \leq \xi  \right\rbrace
\end{align}
and
\begin{align}
    &\mathcal{E}_2(\xi) := \\
    &\bigg\{ \sup_{\substack{\epsilon \\ \|\epsilon - \epsilon_\infty\|_\infty \leq \xi }} \sqrt{n} |(P_n - P^{\pi_b}) (g_{\epsilon}(Q', \rho') - g_{\epsilon_\infty}(Q', \rho'))| \\
    & \qquad \leq \kappa \bigg\}.
\end{align}
The function $F_\xi: h \mapsto \xi L$ is an envelope function for $\mathcal{G}(\epsilon_0, \xi)$. By Markov's inequality and lemma \ref{lemma_Pollard_maximal_inequality} applied with the uniform entropy integral bound given in lemma \ref{lemma_uniform_entropy_integral}, we have that
\begin{align}
    &1 - P^{\pi_b}[\mathcal{E}_2(\xi)] \\
    =& P^{\pi_b}\left[ \sqrt{n} |(P_n - P^{\pi_b}) (g_{\epsilon_n}(Q', \rho') - g_{\epsilon_\infty}(Q', \rho'))| \geq \kappa\right]\\
    \leq& \kappa^{-1} E_{P^{\pi_b}} \left[\sqrt{n} |(P_n - P^{\pi_b}) (g_{\epsilon_n}(Q', \rho') - g_{\epsilon_\infty}(Q', \rho'))| \right]\\
    \leq& \kappa^{-1} J_F(1, \mathcal{G}(\epsilon_0, \xi), L_2) \|F_\xi\|_{2, \Lambda}\\
    \leq& K \kappa^{-1} \xi L,
\end{align}
for some constant $K$.
Set $\xi = \kappa \gamma / (2 K L)$. Then, from the above display $P^{\pi_b}[\mathcal{E}_2(\kappa \gamma / (2 K L))] \geq 1 - \gamma / 2$.

Besides, since $\epsilon_n \xrightarrow{P} \epsilon_\infty$, there exists $n_0$ such that for all $n \geq n_0$, $\linebreak P^{\pi_b}[\mathcal{E}_1(\kappa \gamma / (2 K L))] \geq 1 - \gamma / 2$. 
Observe that if $\mathcal{E}_1(\kappa \gamma / (2 K L)) \cap \mathcal{E}_2(\kappa \gamma / (2 K L))$ is realized, then
\begin{align}
    \sqrt{n} |(P_n - P^{\pi_b}) (g_{\epsilon_n}(Q', \rho') - g_{\epsilon_\infty}(Q', \rho'))| \leq \kappa.
\end{align}
Using a union bound, we have that, for all $n \geq n_0$,
\begin{align}
    &P^{\pi_b} \left[\sqrt{n} |(P_n - P^{\pi_b}) (g_{\epsilon_n}(Q', \rho') - g_{\epsilon_\infty}(Q', \rho'))| \leq \kappa \right]\\
    \geq & 1 - (1 -  P^{\pi_b}[\mathcal{E}_1(\kappa \gamma / (2 K L))])\\
     &- (1- P^{\pi_b}[\mathcal{E}_2(\kappa \gamma / (2 K L))])\\
    \geq &  1  - \gamma.
\end{align}
Recapitulating the above, we have proven that for all $\kappa > 0$, $\gamma \in (0,1/2)$, there exists $n_0$ such that for all $n\geq n_0$,
\begin{align}
    & P^{\pi_b} \left[\sqrt{n} |(P_n - P^{\pi_b}) (g_{\epsilon_n}(Q', \rho') - g_{\epsilon_\infty}(Q', \rho'))| \leq \kappa \right]\\
     \geq & 1 - \gamma.
\end{align}
In other words, we have thus proven that 
\begin{align}
    \sqrt{n} |(P_n - P^{\pi_b}) (g_{\epsilon_n}(Q', \rho') - g_{\epsilon_\infty}(Q', \rho'))| = o_P(1).
\end{align}
which concludes the proof.
\end{proof}

\section{Efficiency and efficient influence function derivation}\label{EIF-derivation}

In this section we show that our estimator is optimal in a certain sense. Specifically, we show that it is \textit{locally semiparametric efficient}.
We will introduce our reader to the notions from semiparametric statistics necessary to understand and prove \textit{semiparametric efficiency} of an estimator. 

In particular we will introduce the concept of \textit{efficient influence function} (EIF). Deriving the EIF is the cornerstone of the efficiency analysis. It is also key to the construction of the estimator: in semiparametric statistics, looking for the EIF is typically the starting point for building an efficient estimator.

Deriving the EIF in the general MDP setting is one of the main contributions of this work. 

Note that the presentation of the notions of semiparametric inference is heavily drawn from \cite{Kosorok2006}, and entails no novel contribution of our part. We wrote it so as to make this appendix a self-contained document for the reader non-familiar with semiparametric statistics.

%

\subsection{Introducing notions of optimality from semiparametric statistics}

The notions of optimality we are about to introduce are relative to both the estimand and the statistical model. The statistical model $\mathcal{M}$ is the set of probability distributions we believe to contain the true data-generating mechanism, which we will denote $P_0$. We will typically denote $P$ an arbitrary element of $\mathcal{M}$. The larger the model, the more realistic it is that it contains the truth, but also the larger is the variance of estimators over this model. 

The first notion of optimality we introduce is the notion of \textit{efficiency} \textit{[cite Kosorok]}. An estimator is \textit{efficient} at $P$, if, were the true data-generating mechanism to be $P$, it would have the lowest variance among a certain class of estimators, namely the class of estimators that are \textit{regular} at $P$ w.r.t $\mathcal{M}$. We define formally the notion of a regular estimator at $P$ w.r.t. $\mathcal{M}$ below. The concepts of \textit{regularity}, \textit{efficiency} and \textit{semiparametric efficiency} that we are about to introduce are defined relative to $P$ and to $\mathcal{M}$, but they really only involve $\mathcal{M}$ through its geometry in a neighborohood around $P$. Even more so, these notions only involve $\mathcal{M}$ through its so-called \textit{tangent space} at $P$, which we will denote $T_{\mathcal{M}}(P)$.

We now proceed to stating the formal definitions. In all of this section, we will assume for simplicity that all probability distributions are dominated by the same measure $\mu$. Definitions will be stated accordingly, but our reader should be aware that some of them can be extended to the case where there is not a single dominating measure $\mu$.

\begin{definition}[Statistical model]
A statistical model $\mathcal{M}$ is a collection of probability distributions $\{ P \in \mathcal{M} \}$ on a sample space $\mathcal{X}$.
\end{definition}

Usually, we suppose that the true data-generating mechanism, that we will denote $P_0$ in this section, belongs to the statistical model.

In the reinforcement learning setting of this article, the statistical model is a collection of probability distributions over the space of trajectories $\mathcal{H}$. Any probability distribution $P$ over $\mathcal{H}$ can be factored as follows:
\begin{align}
P = \prod_{t=1}^T \tilde{Q}_t \prod_{t=1}^T \pi_t \equiv \tilde{Q} \pi,
\end{align}
with $\tilde{Q} \equiv \prod_{t=1}^T \tilde{Q}_t$ and $\pi \equiv \prod_{t=1}^T \pi_t$, and where $\tilde{Q}_t$ is the conditional distribution of $R_t, S_{t+1}$ given $S_t, A_t$ and $\pi_t$ is the conditional distribution of $A_t$ given $S_t$. Since we know the logging policy, our statistical model supposes that for any of its elements $P$, $\pi$ is equal to the known value of the logging policy. 
We therefore write our statistical model as indexed by the known value of the logging policy:
\begin{align}
&\mathcal{M}(\pi) = \\
& \left\lbrace P = \prod_{t=1}^T \tilde{Q}_t \prod_{t=1}^T \pi_t : \forall t=1,...,T,\ \tilde{Q}_t \in \mathcal{M}_{Q,t} \right\rbrace\\
=& \left\lbrace P = \tilde{Q} \pi : \tilde{Q} \in \mathcal{M}_Q \right\rbrace,
\end{align}
with $\mathcal{M}_Q = \mathcal{M}_{Q,1} \times ... \times \mathcal{M}_{Q,T}$. We suppose that the model is fully nonparametric, that is for every $t$, $\mathcal{M}_{Q,t}$ is equal to the set of all conditional probability distributions $P_{R_t, S_{t+1}|S_t, A_t}$.

\begin{definition}[Estimand / target parameter]
The target parameter mapping, which we denote $\Psi$, is a map defined on the statistical model $\mathcal{M}$, with values either in $\mathbb{R}^d$ for a certain $d$, or in some function space. The estimand or target parameter is this map evaluated at the data-generating distribution: $\Psi(P_0)$.
\end{definition}

In the setting of this article, for every probability distribution $P = \tilde{Q} \pi$ over $\mathcal{H}$, the target parameter mapping at $P$ is defined as $\Psi(P) = E_{P' \equiv \tilde{Q} \pi_e}[\sum_{t=1}^T \gamma^t R_t ]$. Note that this expression doesn't depend on $\pi$, so we can write $\Psi(P) = \tilde{\Psi}(\tilde{Q})$ for a mapping $\tilde{\Psi}$ thus defined.

\begin{definition}[Estimator]
For any sample size $n$, an estimator $\hat{\Psi}_n$ is a mapping of the sample space $\mathcal{H}^n$ to the space of the target parameter/estimand.
\end{definition}

\begin{definition}[One-dimensional submodel]
A one-dimensional submodel of $\mathcal{M}$ that passes through $P$ in $0$ is a subset of $\mathcal{M}$ of the form $\{P_\epsilon : \epsilon \in [-\eta, \eta] \}$, for some $\eta > 0$, such that $P_{\epsilon=0} = P$.
\end{definition}

\begin{definition}[Score of a one-dimensional parametric model]
The score in $\epsilon_0$, for any $\epsilon_0$, of a one-dimensional parametric model $\{P_\epsilon : \epsilon \}$ is the derivative of the log-likelihood w.r.t. $\epsilon$, evaluated at $\epsilon_0$. Denoting it $s$, the score is the function defined, for all $x$ in the sample space $\mathcal{X}$,
\begin{align}
s(x) = \frac{d \log (dP_\epsilon / d \mu)(x)}{d\epsilon}\bigg|_{\epsilon=\epsilon_0}.
\end{align}
\end{definition}

\begin{definition}[Tangent space]
The tangent space of a statistical model $\mathcal{M}$ at $P$, which we denote $T_{\mathcal{M}}(P)$ is the linear closure of the set of score functions of all of the one-dimensional submodels of $\mathcal{M}$ that pass through $P$. Formally,
\begin{align}
T_{\mathcal{M}}(P) = \overline{\text{Span}}(\mathcal{S}(\mathcal{M},P)),
\end{align}
with
\begin{align}
& \mathcal{S}(\mathcal{M},P) \equiv \\
\bigg\{\frac{d \log (dP_\epsilon / d \mu)}{d\epsilon}\bigg|_{\epsilon=0} : &\{P_\epsilon:\epsilon \} \text{ 1-dim. submodel of } \mathcal{M}, \\
&P_{\epsilon=0}=P \bigg\}.
\end{align}
\end{definition}

\begin{definition}[Regularity]
Suppose the data-generating distribution is $P$, for some $P \in \mathcal{M}$. An estimator is regular at $P$ w.r.t. $\mathcal{M}$ if, for any one dimensional submodel $\{P_\epsilon: \epsilon\}$ of $\mathcal{M}$ such that $P_{\epsilon=0}=P$, the asymptotic distribution of
\begin{align}
\sqrt{n} (\hat{\Psi}_n - \Psi(P_{\epsilon=1/\sqrt{n}}))
\end{align}
is the same as the asymptotic distribution of
\begin{align}
\sqrt{n} (\hat{\Psi}_n - \Psi(P)).
\end{align}
\end{definition}

It is the understanding of the authors of this article that non-regular estimators at $P$ correspond either to pathological estimators or pathological $P$'s.

\begin{definition}[Efficiency]
An estimator is a locally efficient estimator of $\Psi(P)$ at $P$ w.r.t. $\mathcal{M}$ if it has smallest asymptotic variance among all regular estimators of $\Psi(P)$ at $P$ w.r.t. $\mathcal{M}$.
\end{definition}

\begin{definition}[Generalized Cramer-Rao lower bound]
Consider a one-dimensional model $\{P_\epsilon: \epsilon\}$ such that $P_{\epsilon=0} = P$. From classical parametric statistics theory, an regular estimator $\Psi_n$ of $\Psi(P)$ w.r.t. $\{P_\epsilon: \epsilon\}$ has asymptotic variance greater than the Cramer-Rao lower bound $v_{CR}(\{P_\epsilon : \epsilon \})$:
\begin{align}
\lim_{n \rightarrow \infty} n \text{Var}_P(\hat{\Psi}_n) \geq v_{CR}(\{P_\epsilon : \epsilon \})  \equiv \frac{\left(\frac{d\Psi(P_\epsilon)}{d\epsilon}\big|_{\epsilon=0}\right)^2}{\mathcal{I}(P)}
\end{align}
where $\mathcal{I}(P) = {E_P\left[\frac{d^2 \log (dP/d\mu)(X)}{d\epsilon^2} \big|_{\epsilon=0} \right]}$ is the Fisher information of the model $\{P_\epsilon : \epsilon \}$ at $\epsilon=0$.

For a statistical model $\mathcal{M}$, the \textit{generalized Cramer-Rao} lower bound $v_{GCR}$ for $\Psi(P)$ w.r.t. $\mathcal{M}$, is the sup  of the Cramer-Rao lower bound over the parametric submodels of $\mathcal{M}$ through $P$:
\begin{align}
v_{GCR}(\mathcal{M}) \equiv \sup_{\{P_\epsilon : \epsilon \} \subseteq \mathcal{M}, P_{\epsilon=0} = P} v_{CR}(\{P_\epsilon : \epsilon \}).
\end{align}
A parametric submodel whose Cramer-Rao lower bound is equal to the generalized Cramer-Rao lower bound is called a least-favorable parametric submodel.
\end{definition}

\begin{definition}[Semiparametric efficiency] An estimator $\hat{\Psi}_n$ is a \textit{locally semiparametric efficient} of $\Psi(P)$ w.r.t. $\mathcal{M}$ if it is consistent for $\Psi(P)$ and if its asymptotic variance is equal to the generalized Cramer-Rao lower bound, that is if
\begin{align}
\lim_{n \rightarrow \infty} n \text{Var}_P(\hat{\Psi}_n) = v_{CGR}(\mathcal{M}).
\end{align}
If there exists a least-favorable parametric submodel, a semiparametric efficient estimator has the same asymptotic variance as the last-favorable parametric submodel.
\end{definition}

\subsection{Proving that an estimator is semiparametric efficient}

In this section, we present a sufficient condition for an estimator to be locally semiparametric efficient. Checking this condition is a standard approach to proving that an estimator is locally semiparametric efficient.

The condition requires a certain characteristic of the estimator, the \textit{influence function} (IF), and a certain characteristic of the estimand and the model, the \textit{efficient influence function} (EIF) to be defined and equal. The IF of an estimator is defined if the estimator satisfies the \textit{asymptotic linearity} property. The EIF at $P$ w.r.t. $\mathcal{M}$ of the estimand $\Psi(P)$ is defined if the estimand is \textit{pathwise differentiable} at $P$ w.r.t. $\mathcal{M}$.

\begin{definition}[Asymptotic linearity and IF] An estimator $\hat{\Psi}_n:\mathcal{X}^n \rightarrow \mathbb{R}$, based on i.i.d. sample $X_1,...,X_n$, of a parameter $\Psi(P)$ is asympototically linear at $P$, with influence function $D(P): \mathcal{X} \rightarrow \mathbb{R}$ if 
\begin{align}
\hat{\Psi}_n - \Psi(P) = \frac{1}{n}\sum_{i=1}^n D(P)(X_i) + o_P(n^{-1/2}).
\end{align}
\end{definition}

\begin{definition}[Pathwise differentiability, gradient, and EIF] The target parameter mapping/the estimand $\Psi$ is pathwise differentiable at $P$, w.r.t. $\mathcal{M}$, if there exists a function $D^0(P) \in L_2^0(P)$, (where $L_2^0(P) = \{f \in L_2(P) : Pf = 0 \}$), such that, for all parametric submodel $\{P_\epsilon : \epsilon \} \subseteq \mathcal{M}$, with score function $S$ at $\epsilon=0$ such that $P_{\epsilon=0} = P$, we have that
\begin{align}
\frac{d \Psi(P_\epsilon)}{d \epsilon}\big|_{\epsilon = 0} = P \{ D^0(P) S \}.
\end{align}
If it exists, $D^0(P)$ is called a gradient of $\Psi$ at $P$ w.r.t. $\mathcal{M}$.
The efficient influence function of $\Psi$ at $P$ w.r.t. $\mathcal{M}$, also called canonical gradient is the unique gradient of $\Psi$ at $P$ w.r.t. $\mathcal{M}$ that belongs to $T_\mathcal{M}(P)$. 
\end{definition}

\begin{proposition}\label{proposition-EIF_projection_gradient}
The EIF is the projection on $T_{\mathcal{M}}(P)$, for the $L^2(P)$ norm, of any gradient.
\end{proposition}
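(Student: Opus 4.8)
The plan is to recognize the statement as a standard orthogonal-projection fact in the Hilbert space $L_2^0(P) = \{f \in L_2(P) : Pf = 0\}$, equipped with the inner product $\langle f, g\rangle := P\{fg\}$. The first step is to set up this framework carefully. Since the score at $\epsilon = 0$ of any regular one-dimensional submodel integrates to zero under $P$ (differentiating $\int dP_\epsilon = 1$ under the integral sign), every element of $\mathcal{S}(\mathcal{M}, P)$ lies in $L_2^0(P)$, and therefore so does the tangent space $T_{\mathcal{M}}(P)$, which by definition is the closed linear span $\overline{\text{Span}}(\mathcal{S}(\mathcal{M}, P))$. Being a closed linear subspace of a Hilbert space, $T_{\mathcal{M}}(P)$ admits a well-defined orthogonal projection, and every element of $L_2^0(P)$ decomposes uniquely as a sum of a component in $T_{\mathcal{M}}(P)$ and a component in $T_{\mathcal{M}}(P)^\perp$. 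I would also recall that, by definition, every gradient lies in $L_2^0(P)$, so this is indeed the correct ambient space.

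The key step is to show that the difference of any two gradients is orthogonal to the tangent space. Let $D_1$ and $D_2$ be gradients of $\Psi$ at $P$ w.r.t.\ $\mathcal{M}$. For any one-dimensional submodel $\{P_\epsilon : \epsilon\}$ through $P$ with score $S$ at $\epsilon = 0$, the defining property of a gradient gives
$$P\{D_1 S\} = \frac{d\Psi(P_\epsilon)}{d\epsilon}\Big|_{\epsilon=0} = P\{D_2 S\},$$
so that $\langle D_1 - D_2, S\rangle = P\{(D_1 - D_2)S\} = 0$ for every score $S \in \mathcal{S}(\mathcal{M}, P)$. Hence $D_1 - D_2$ is orthogonal to the linear span of $\mathcal{S}(\mathcal{M}, P)$, and by continuity of the inner product (Cauchy--Schwarz) this orthogonality extends to the closure, i.e.\ $D_1 - D_2 \in T_{\mathcal{M}}(P)^\perp$.

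To finish, let $D$ be an arbitrary gradient and let $D^\ast$ denote the EIF, i.e.\ the gradient lying in $T_{\mathcal{M}}(P)$. Writing $D = D^\ast + (D - D^\ast)$, the previous step shows $D - D^\ast \in T_{\mathcal{M}}(P)^\perp$, while $D^\ast \in T_{\mathcal{M}}(P)$ by construction. By uniqueness of the orthogonal decomposition, the projection of $D$ onto $T_{\mathcal{M}}(P)$ equals $D^\ast$, which is precisely the claim. The same argument simultaneously justifies that the EIF is well-defined: if two gradients both lay in $T_{\mathcal{M}}(P)$, their difference would lie in $T_{\mathcal{M}}(P) \cap T_{\mathcal{M}}(P)^\perp = \{0\}$, forcing them to coincide.

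The only genuinely delicate point, and the main obstacle, is the passage from orthogonality against each individual score to orthogonality against the entire closed tangent space; this is handled by invoking continuity of the inner product to move from the span to its closure, together with the observation that scores are mean-zero so that the whole computation takes place inside $L_2^0(P)$. Everything else is bookkeeping with the definitions of gradient, tangent space, and Hilbert-space projection.
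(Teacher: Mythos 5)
Your proof is correct, and it is worth noting that the paper never actually proves this proposition: it is stated as imported background material (the paper explicitly credits its presentation of these semiparametric notions to Kosorok, 2006), so there is no in-paper argument to compare yours against. What you give is the standard Hilbert-space argument from that literature: scores are mean zero, all gradients have the same inner product with every score, hence any two gradients differ by an element of $T_{\mathcal{M}}(P)^\perp$, and uniqueness of the orthogonal decomposition does the rest. The one refinement that would make your proof fully self-contained concerns existence: when you write ``let $D^\ast$ denote the EIF'' you presuppose that some gradient lies in the tangent space, which neither pathwise differentiability nor the paper's definition of the EIF actually establishes. Existence follows from your own orthogonality computation in one extra line: given any gradient $D$, set $D^\ast := \Pi\bigl(D \mid T_{\mathcal{M}}(P)\bigr)$; then for every score $S \in \mathcal{S}(\mathcal{M},P) \subseteq T_{\mathcal{M}}(P)$,
\begin{align}
P\{D^\ast S\} = P\{DS\} - P\{(D - D^\ast)S\} = P\{DS\} = \frac{d\Psi(P_\epsilon)}{d\epsilon}\bigg|_{\epsilon=0},
\end{align}
because $D - D^\ast \in T_{\mathcal{M}}(P)^\perp$. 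Hence $D^\ast$ is itself a gradient belonging to $T_{\mathcal{M}}(P)$, so the EIF exists; your uniqueness argument and the decomposition $D = D^\ast + (D - D^\ast)$ then complete the proposition exactly as you wrote it.
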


\begin{proposition}\label{proposition_IF_is_gradient} Consider $P \in \mathcal{M}$
Suppose $\hat{\Psi}_n$ is a regular estimator of $\Psi(P)$ w.r.t. $\mathcal{M}$, and that it is asymptotically linear with influence function $D(P)$. Then $\Psi$ is pathwise differentiable at $P$ w.r.t. $\mathcal{M}$ and $D(P)$ is a gradient of $\Psi$ at $P$ w.r.t. $\mathcal{M}$.
\end{proposition}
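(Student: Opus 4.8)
The plan is to prove this via Le Cam's theory of local asymptotic normality and his third lemma, which is the standard route to showing that the influence function of a regular asymptotically linear estimator is a gradient. Fix an arbitrary one-dimensional submodel $\{P_\epsilon : \epsilon\} \subseteq \mathcal{M}$ with $P_{\epsilon=0} = P$ and score $S$ at $\epsilon=0$. The task reduces to establishing, for this arbitrary submodel, the single identity
$$\frac{d\Psi(P_\epsilon)}{d\epsilon}\Big|_{\epsilon=0} = P\{D(P)\,S\},$$
together with $D(P) \in L_2^0(P)$; since the submodel is arbitrary, this is exactly pathwise differentiability with $D(P)$ a gradient.

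First I would record the two expansions that drive the argument, both under i.i.d. sampling from $P$. Differentiability in quadratic mean of the submodel yields the log-likelihood ratio expansion of the local alternative $P_{1/\sqrt n}$ against $P$,
$$\log \frac{dP_{1/\sqrt n}^{\,n}}{dP^{\,n}} = \frac{1}{\sqrt n}\sum_{i=1}^n S(X_i) - \tfrac12 P S^2 + o_P(1),$$
while asymptotic linearity of $\hat\Psi_n$ gives
$$\sqrt n\,(\hat\Psi_n - \Psi(P)) = \frac{1}{\sqrt n}\sum_{i=1}^n D(P)(X_i) + o_P(1).$$
The mean-zero and square-integrability of $D(P)$ (hence $D(P) \in L_2^0(P)$) follow from consistency and the existence of a limit law. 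The bivariate central limit theorem applied to the two leading sums then gives, under $P^{\,n}$, joint asymptotic normality of the pair $\big(\sqrt n(\hat\Psi_n - \Psi(P)),\ \log\frac{dP_{1/\sqrt n}^{\,n}}{dP^{\,n}}\big)$, with cross-covariance $c := P\{D(P)S\}$ (here I use $PS=0$ and $PD(P)=0$ to drop the product terms).

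Next I would invoke Le Cam's third lemma: the joint normal limit above implies that under the local alternative $P_{1/\sqrt n}^{\,n}$ the statistic $\sqrt n(\hat\Psi_n - \Psi(P))$ converges in law to $\mathcal N(c,\,P D(P)^2)$, i.e. it picks up a mean shift equal to the covariance $c$. On the other hand, regularity says that under $P_{1/\sqrt n}^{\,n}$ the centered statistic $\sqrt n(\hat\Psi_n - \Psi(P_{1/\sqrt n}))$ has the same limit as $\sqrt n(\hat\Psi_n - \Psi(P))$ under $P^{\,n}$, namely $\mathcal N(0,\,P D(P)^2)$. Writing
$$\sqrt n(\hat\Psi_n - \Psi(P)) = \sqrt n(\hat\Psi_n - \Psi(P_{1/\sqrt n})) + \sqrt n(\Psi(P_{1/\sqrt n}) - \Psi(P))$$
and comparing the two limit laws forces the deterministic term $\sqrt n(\Psi(P_{1/\sqrt n}) - \Psi(P))$ to converge to $c$. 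Since $\epsilon \mapsto \Psi(P_\epsilon)$ is differentiable along the path, this equals the derivative at $0$, giving $\tfrac{d}{d\epsilon}\Psi(P_\epsilon)|_{\epsilon=0} = c = P\{D(P)S\}$.

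I expect the main obstacle to be the rigorous justification of the transfer step: establishing joint asymptotic normality of the estimator's leading term together with the log-likelihood ratio, and correctly identifying the cross-covariance as $P\{D(P)S\}$ so that Le Cam's third lemma delivers exactly the mean shift $c$. A secondary technical point is extracting the path derivative from the discrete sequence $\epsilon_n = 1/\sqrt n$; this is clean provided the submodels are taken to be differentiable (as in the \cite{Kosorok2006} framework underlying this section), since then $\lim_{\epsilon\to 0}\epsilon^{-1}(\Psi(P_\epsilon)-\Psi(P))$ exists and agrees with the subsequential limit computed above.
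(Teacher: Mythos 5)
Your proposal cannot be checked against a proof in the paper for a simple reason: the paper never proves this proposition. It is stated without proof in Appendix B as imported background material, with the authors explicitly noting that their presentation of these semiparametric notions ``is heavily drawn from'' \cite{Kosorok2006} and ``entails no novel contribution.'' The relevant benchmark is therefore the classical literature argument, and your proof is precisely that argument: joint asymptotic normality of the estimator's linear term together with the LAN expansion of the log-likelihood ratio, Le Cam's third lemma to obtain the mean shift $c = P\{D(P)S\}$ under the local alternative $P^{\,n}_{1/\sqrt{n}}$, and regularity to force the deterministic drift $\sqrt{n}\,(\Psi(P_{1/\sqrt{n}}) - \Psi(P))$ to converge to $c$. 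This is the correct and canonical route (it goes back to van der Vaart's work on differentiable functionals), and the key steps --- identification of the cross-covariance using $PS = 0$ and $PD(P) = 0$, the transfer of the limit law under contiguous alternatives, and the comparison-of-limits step --- are all sound.

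Two caveats. First, your assertion that $D(P) \in L_2^0(P)$ ``follows from consistency and the existence of a limit law'' silently invokes a converse-CLT fact (for i.i.d.\ summands, convergence in law of $n^{-1/2}\sum_{i=1}^n Y_i$ forces $EY = 0$ and $EY^2 < \infty$); this is true but nontrivial, and the cleaner convention --- effectively the one adopted by the framework the paper borrows --- is to build mean-zero and square-integrability into the definition of an influence function. Second, and more substantively, your argument establishes convergence of $\epsilon^{-1}(\Psi(P_\epsilon) - \Psi(P))$ only along the sequence $\epsilon = 1/\sqrt{n}$, and you close the gap by assuming the map $\epsilon \mapsto \Psi(P_\epsilon)$ is differentiable. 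Under the paper's definition of pathwise differentiability, existence of that derivative is part of what must be shown, so strictly speaking you have identified the derivative's value conditional on its existence. The assumption can be removed: given an arbitrary sequence $\epsilon_m \to 0$, set $n_m = \lceil \epsilon_m^{-2} \rceil$, so that $h_m := \sqrt{n_m}\,\epsilon_m \to 1$; regularity along converging local-parameter sequences, combined with the LAN form of Le Cam's third lemma, gives $\sqrt{n_m}\,(\Psi(P_{\epsilon_m}) - \Psi(P)) \to c$, hence $\epsilon_m^{-1}(\Psi(P_{\epsilon_m}) - \Psi(P)) \to c$, and since the sequence was arbitrary the path derivative exists and equals $c$. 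Note that this patch needs the standard, slightly stronger notion of regularity over sequences $h_m \to h$ (rather than the fixed-submodel version stated in the paper) and quadratic-mean differentiability of the submodel (rather than the paper's pointwise score definition); both assumptions should be made explicit if this proof were to be incorporated.
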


\begin{theorem}
If a RAL estimator has IF the EIF of the target parameter at $P$, then it is locally semiparametric efficient at $P$ for the target parameter.
\end{theorem}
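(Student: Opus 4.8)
The plan is to show that the estimator's asymptotic variance equals the generalized Cramer-Rao bound $v_{GCR}(\mathcal{M})$, so that local semiparametric efficiency follows immediately from its definition. First I would use asymptotic linearity: since $\hat{\Psi}_n - \Psi(P) = \frac{1}{n}\sum_{i=1}^n D(P)(X_i) + o_P(n^{-1/2})$ with $D(P) \in L_2^0(P)$ (so $P D(P) = 0$), the central limit theorem gives $\sqrt{n}(\hat{\Psi}_n - \Psi(P)) \xrightarrow{d} \mathcal{N}(0, \|D(P)\|_{L_2(P)}^2)$ and hence $\lim_{n} n\,\mathrm{Var}_P(\hat{\Psi}_n) = \|D(P)\|_{L_2(P)}^2$; asymptotic linearity also yields consistency. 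Under the hypothesis that the influence function is the EIF, $D(P) = D^0(P)$, this asymptotic variance equals $\|D^0(P)\|_{L_2(P)}^2$.

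The core of the argument is the identity $v_{GCR}(\mathcal{M}) = \|D^0(P)\|_{L_2(P)}^2$, which I would derive from the definition of $v_{GCR}$ as the supremum of parametric Cramer-Rao bounds over one-dimensional submodels. For a submodel $\{P_\epsilon : \epsilon\}$ with score $S$ at $\epsilon = 0$, pathwise differentiability gives $\frac{d}{d\epsilon}\Psi(P_\epsilon)\big|_{\epsilon=0} = \langle D^0(P), S\rangle_{L_2(P)}$, while its Fisher information is $\mathcal{I}(P) = \|S\|_{L_2(P)}^2$, so that $v_{CR}(\{P_\epsilon : \epsilon\}) = \langle D^0(P), S\rangle^2 / \|S\|^2$. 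By the Cauchy-Schwarz inequality this is at most $\|D^0(P)\|_{L_2(P)}^2$, giving $v_{GCR}(\mathcal{M}) \leq \|D^0(P)\|_{L_2(P)}^2$. For the reverse inequality I would invoke that the EIF lies in the tangent space $T_\mathcal{M}(P)$ (Proposition \ref{proposition-EIF_projection_gradient}): since $T_\mathcal{M}(P)$ is the closed linear span of the submodel scores, I can select a sequence of submodels whose scores approximate $D^0(P)$ in $L_2(P)$, along which $v_{CR} \to \|D^0(P)\|_{L_2(P)}^2$, so that $v_{GCR}(\mathcal{M}) \geq \|D^0(P)\|_{L_2(P)}^2$.

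Combining the two steps, the estimator's asymptotic variance $\|D^0(P)\|_{L_2(P)}^2$ equals $v_{GCR}(\mathcal{M})$, and since the estimator is consistent, it is locally semiparametric efficient by definition. As a byproduct, because $v_{GCR}$ is a lower bound on the asymptotic variance of every regular estimator (via the parametric Cramer-Rao bound applied submodel by submodel), this estimator simultaneously attains the smallest asymptotic variance among regular estimators, i.e. it is efficient in the sense of the earlier \emph{Efficiency} definition.

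The main obstacle I anticipate is the reverse inequality $v_{GCR}(\mathcal{M}) \geq \|D^0(P)\|_{L_2(P)}^2$: the generalized bound is a supremum over genuine one-dimensional submodels, whose scores form only a dense subset of the closed span $T_\mathcal{M}(P)$, so realizing the value $\|D^0(P)\|_{L_2(P)}^2$ requires an approximation argument that constructs valid submodels of the nonparametric model $\mathcal{M}(\pi)$ with scores converging to $D^0(P)$ and verifies that the Cauchy-Schwarz bound is attained in the limit. Some care is also needed to justify the interchange of limits in $n\,\mathrm{Var}_P(\hat{\Psi}_n) \to \|D(P)\|_{L_2(P)}^2$ (e.g. by a uniform integrability argument), rather than merely the weak convergence already supplied by the central limit theorem.
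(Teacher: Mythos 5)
Your proposal cannot be compared against a proof in the paper, because the paper offers none: this theorem is stated as standard background imported from the semiparametric statistics literature (the appendix explicitly says its presentation of these notions is drawn from \cite{Kosorok2006} and ``entails no novel contribution''), and is then used only as a strategy for identifying the EIF. What you have written is the classical textbook argument, and it is sound. The forward direction (asymptotic linearity plus the CLT gives asymptotic variance $\|D^0(P)\|_{L_2(P)}^2$; pathwise differentiability, the identity $\mathcal{I}(P)=\|S\|_{L_2(P)}^2$, and Cauchy--Schwarz give $v_{CR}\leq \|D^0(P)\|_{L_2(P)}^2$ for every submodel, hence $v_{GCR}(\mathcal{M})\leq \|D^0(P)\|_{L_2(P)}^2$) is exactly where the EIF hypothesis matters: for any other gradient $D$ the same argument only yields the weaker bound $v_{GCR}(\mathcal{M})\leq\|D\|_{L_2(P)}^2$, since the EIF is the minimal-norm gradient. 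For the reverse inequality, the obstacle you flag---that scores of genuine one-dimensional submodels need only generate $T_{\mathcal{M}}(P)$ as a closed linear span, not exhaust it---is real in the abstract, but for the model $\mathcal{M}(\pi)$ of this paper it is resolved by the paper's own EIF derivation: Step~2 of that derivation constructs, for bounded mean-zero tuples $(g_1,\dots,g_T)$, genuine submodels $p_\epsilon=\prod_t \tilde{q}_t\,(1+\epsilon g_t)\,\pi_b$ with score $\sum_t g_t$, and bounded functions are dense in each $L_2^0$ factor; since $S\mapsto \langle D^0(P),S\rangle^2/\|S\|_{L_2(P)}^2$ is scale-invariant and continuous away from $S=0$, any sequence of such scores converging to $D^0(P)$ drives $v_{CR}$ to $\|D^0(P)\|_{L_2(P)}^2$ (the case $D^0(P)=0$ being trivial). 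Your last caveat, about passing from weak convergence to $\lim_n n\,\mathrm{Var}_P(\hat{\Psi}_n)$, is likewise well taken but is really an artifact of the paper's definition of efficiency through limiting variances rather than through the variance of the limit law; under the usual convention the CLT step suffices. The alternative classical route---the H\'ajek convolution theorem---would additionally deliver, rather than merely suggest, the optimality-among-regular-estimators byproduct you mention, but your direct computation of $v_{GCR}(\mathcal{M})$ is the argument that matches the definitions as this paper states them.
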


These results suggest the following strategy to find the efficient influence function of a target parameter: find a RAL estimator of the target, observe that its IF is a gradient, obtain the EIF by projecting the gradient onto the tangent space.

\subsection{Explicit derivation of the EIF}

\begin{proof}
We proceed in three steps.
\paragraph{Step 1: Finding a gradient.} Denote $\Psi(P) \equiv V_1^{\pi_e}(s_1)$.
Consider 
\begin{align}
\hat{\Psi}^0_n \equiv \frac{1}{n} \sum_{i=1}^n \sum_{t=1}^T \gamma^t \rho_{1:t}^{(i)}  R_t^{(i)}.
\end{align}
Observe that 
\begin{align}
\hat{\Psi}^0_n - \Psi(P) = \frac{1}{n} \sum_{i=1}^T D^0(P)(H_i),
\end{align}
where $D^0(P)(h) = \sum_{t=1}^T \gamma^t \rho_{1:t}(h) r_t - \Psi(P)$. 

Therefore $D^0(P)(h)$ is the influence function of the estimator $\hat{\Psi}^0_n$ at $P$. It is straightforward to check that $\hat{\Psi}^0_n$ is regular. Therefore, from proposition \ref{proposition_IF_is_gradient}, $\Psi$ is pathwise differentiable at $P$ w.r.t. our statistical model $\mathcal{M}(\pi)$ and $D^0(P)$ is a gradient of $\Psi$ at $P$ w.r.t. $\mathcal{M}(\pi)$

\paragraph{Step 2: Identifying the tangent space.} Since we assumed that distributions in $\mathcal{M}(\pi)$ are dominated by a measure $\mu$, every element $P \in \mathcal{M}(\pi)$ can be represented by its density w.r.t. $\mu$, which we will denote $p$: for every $h \in \mathcal{H}$, denoting $\bar{h}_t \equiv (s_1, a_1, r_1,...,s_t, a_t, r_t)$ the history of the trajectory up till time $t$, we have 
\begin{align}
p(h) =& \frac{dP}{d\mu}(h)\\
 =& \prod_{t=1}^T \tilde{q}_t(s_{t+1}, r_t|s_t, a_t, \bar{h}_{t-1}) \prod_{t=1}^T \pi_b(a_t|s_t, ).
\end{align}
Consider a one-dimensional submodel $\{P_\epsilon : \epsilon \} \subseteq \mathcal{M}$ that passes through $P$ in $\epsilon=0$. Then
\begin{align}
\frac{d \log p_\epsilon(h)}{d \epsilon}\bigg|_{\epsilon=0} \\= \sum_{t=1}^T \frac{d \log \tilde{q}_t(s_{t+1}, r_t|s_t, a_t, \bar{h}_{t-1})}{d\epsilon} \bigg|_{\epsilon=0}.
\end{align}
Since, for any $t$, $h \mapsto \frac{d \log \tilde{q}_t(s_{t+1}, r_t|s_t, a_t, \bar{h}_{t-1})}{d\epsilon} \bigg|_{\epsilon=0}$ is a score function, it is in $L_2^0(P_{R_t, S_{t+1}|S_t, A_t, \bar{H}_{t-1} })$. Therefore,
\begin{align}
T_{\mathcal{M}(\pi)}(P) \subseteq \sum_{t=1}^T L_2^0(P_{R_t, S_{t+1}|S_t, A_t, \bar{H}_{t-1}}).
\end{align}
Conversely, for any $(g_1,...,g_T) \in L_2^0(P_{R_1, S_2|S_1, A_1}) \times ... \times L_2^0(P_{R_T, S_{T+1}|S_T, A_T, \bar{H}_{T-1} })$, for $\eta > 0$ small enough
\begin{align}
\{P_\epsilon: dP/d\epsilon = p_{\epsilon}, \epsilon \in [\eta, -\eta] \},
\end{align}
where $p_\epsilon$ is defined, for all $h \in \mathcal{H}$ as
\begin{align}
p_\epsilon(h)  = \prod_{t=1}^T & \tilde{q}_t (s_{t+1}, r_t|s_t, a_t)\\
& \times(1 + g_t(s_{t+1}, r_t, s_t, a_t))\pi_b(a_t|s_t),
\end{align}
is a submodel of $\mathcal{M}$ that passes through $P$ at $\epsilon=0$. We have that, for all $h\in \mathcal{H}$,
\begin{align}
\frac{d \log p_\epsilon}{d\epsilon} \bigg|_{\epsilon=0} = \sum_{t=1}^T g_t(s_{t+1}, r_t, a_t, s_t, \bar{h}_{t-1} ).
\end{align}
Since $\frac{d \log p_\epsilon}{d\epsilon}$ is in $T_{\mathcal{M}(\pi)}$ by definition of $T_{\mathcal{M}(\pi)}$, and that $\sum_{t=1}^T g_t$ is an arbitrary element of $\sum_{t=1}^T L_2^0(P_{R_t, S_{t+1}|S_t, A_t, \bar{H}_{t-1}})$, this shows that $\sum_{t=1}^T L_2^0(P_{R_t, S_{t+1}|S_t, A_t}) \subseteq T_{\mathcal{M}}(P)$ and therefore that
\begin{align}
T_{\mathcal{M}(P)} =\sum_{t=1}^T L_2^0(P_{R_t, S_{t+1}|S_t, A_t, \bar{H}_{t-1} })\end{align}
It is straightforward to check that the sum is direct and orthogonal.

\paragraph{Step 3: Projecting $D^0(P)$ on the tangent space.}

From proposition \ref{proposition-EIF_projection_gradient}, the EIF is given by
\begin{align}
&\Pi(D^0(P) \big| T_{\mathcal{M}(\pi)}(P))\\
 &= \sum_{t=1}^T \Pi(D^0(P) \big| L_2^0(P_{R_t, S_{t+1}|S_t, A_t, \bar{H}_{t-1} }) )\\
&= \sum_{t=1}^T \big( E_P[D^0(P)(H)|S_{t+1}, R_t, S_t, A_t, \bar{H}_{t-1} ] \\
&\qquad - E_P[D^0(P)(H)|S_t, A_t, \bar{H}_{t-1}] \big).
\end{align}
Observing that the terms that are deterministic conditional on $\bar{H}_{t-1}$ cancel out, we have that
\begin{align}
&E_P[D^0(P)(H)|S_{t+1}, R_t, S_t, A_t, \bar{H}_{t-1} ] \\
&- E_P[D^0(P)(H)|S_t, A_t, \bar{H}_{t-1}]  \\
=& E_P \left[ \sum_{\tau = t}^T \gamma^\tau \rho_{1:\tau} R_\tau | S_{t+1}, R_t, S_t, A_t, \bar{H}_{t-1} \right]\\
& - E_P\left[ \sum_{\tau = t}^T \gamma^\tau \rho_{1:\tau} R_\tau  |S_t, A_t, \bar{H}_{t-1}\right]\\
=& \gamma^t \rho_{1:t} \bigg( R_t \\
&+ \gamma E_P \left[ \sum_{\tau=t}^T \gamma^{\tau-t} \rho_{t:\tau} R_\tau \big| S_{t+1}, R_t, S_t, A_t, \bar{H}_{t-1} \right] \bigg) \\
&- \gamma^t \rho_{1:t}  E_P \left[ \sum_{\tau=t}^T \gamma^{\tau-t} \rho_{t:\tau} R_\tau \big| S_t, A_t, \bar{H}_{t-1} \right]\\\
=& \gamma^t \rho_{1:t} \bigg(R_t + \gamma V^{\pi_e}(S_{t+1}) - Q^{\pi_e}(S_t, A_t) \bigg) 
\end{align}

\paragraph{Conclusion.} The right-hand side of the last line above is equal to $D(P)$ from section \ref{section_theory_simplified_algorithm}. Since, as we see in the next section, our full-blown estimator has asymptotic variance equal to the variance of $D(P)$ which we just shown to be the EIF, it is semiparemetric efficient.

\end{proof}

\section{Cross-validated LTMLE}\label{section_CV_LTMLE}

We now present the full-blown version of our algorithm. The key difference between simplified version and the full-blown version is that the latter uses the entire dataset to fit the second-stage models, as opposed to just a split of the dataset. (Using just a split of the dataset is what the simplified algorithm does, along with other algorithms presented recently in the OPE literature.) The standard error in the simplified version scales as $1 / \sqrt{n'}$, where $n'$ is the size of the sample split used to fit $\epsilon$. With the full-blown algorithm, it scales as $1 / \sqrt{n}$, where $n$ is the size of the entire sample.

\subsection{Algorithm description}

Consider a sample $H_1,...,H_n$ of $n$ i.i.d. trajectories of the MDP. Observe that there is a one-to-one relationship between the sample $H_1,...,H_n$ and the empirical probability distribution $P_n = n^{-1} \sum_{i=1}^n \delta_{H_i}$. Therefore, we will refer to the sample and to $P_n$ interchangeably. Let $b_{1,n},...,\linebreak b_{V,n}$ be $V$ vectors in $\{0,1\}^n$ representing splits of the sample: under a given $b_{v, n}$, the training set is given by $\{i: b_{v, n}(i) = 0\}$ and the test set is given by $\{i: b_{v,n}(i) = 1 \}.$  Let $B_n$ a random vector uniformly distributed on the set $\{b_{v,n}: v=1,...,V \}$. Denote $P^0_{n, B_n}$ and $P^1_{n,B_n}$ the empirical distributions of the training set and the test set, respectively, under sample split $B_n$. Suppose that, for every $t$, we are given an estimator of $Q_t^{\pi_e}$, that is a mapping of any sample $H'_1,...,H'_n$, or equivalently, of any probability distribution $P'_n$, to a model fit, which we will denote $\hat{Q}_t^{\pi_e}(P'_n)$. In practice, $\hat{Q}_t^{\pi_e}(P'_n)$ can be the estimator of $Q_t^{\pi_e}$ obtained under a model of the dynamics fitted from trajectories $H'_1,...,H'_n$.
Denote $\sigma(x) = 1 / (1+e^{-x})$ the logistic function, and $\sigma^{-1}$ its inverse. Observe that under assumption 1, the range of $\bar{R}_{t:T}$, and therefore of $Q_t^{\pi_e}$ and $V_t^{\pi_e}$ is $[-\Delta_t, \Delta_t]$ with $\Delta_t := \sum_{\tau=t}^T \gamma^{\tau-t}$. 
For all $t$, $P'_n$, define the scaled action-value function estimator as $\tilde{Q}^{\pi_e}_t(P'_n) = (\hat{Q}^{\pi_e}(P'_n) + \Delta_t)/(2 \Delta_t).$ For $\delta \in (0,1/2)$ and any $\tilde{Q}_t':\mathcal{S} \times \mathcal{A} \rightarrow \mathbb{R}$, define
\begin{align}
    \tilde{Q}'^{\delta}_t(s_t, a_t):= \max(\delta, \min(1 - \delta, \tilde{Q}'(s_t,a_t)),
\end{align}
the thresholded version of $\tilde{Q}'$ is always at least $\delta$ away from 0 and 1. Let $\delta_n \downarrow 0.$
For any $t$, $P'_n$, introduce the perturbed scaled estimator
\begin{align}
    \tilde{Q}_t^{\pi_e}(P'_n)(\epsilon) =  \sigma (\sigma^{-1}(\tilde{Q}^{\pi_e, \delta_n}_t(P'_n)) + \epsilon )
\end{align}
and let $\hat{Q}^{\pi_e}(P'_n)(\epsilon)$ be defined by $\tilde{Q}^{\pi_e}_t(P'_n)(\epsilon) = (\hat{Q}^{\pi_e}(P'_n)(\epsilon) + \Delta_t)/(2 \Delta_t)$. 
The expression above defines a logistic regression model with a fixed offset and parameterized by an intercept $\epsilon$, such that $\hat{Q}_t^{\pi_e}(P'_n)(0) = \hat{Q}_t^{\pi_e}(P'_n)$. The TML estimate is obtained by sequentially fitting such logistic models. Specifically, start at time point $T$ and define the cross-validated risk 
\begin{align}
    &\mathcal{R}_{n,T}(\epsilon) = \\
    &E_{B_n}E_{P^1_{n,B_n}} \bigg[\rho_{1:T} \bigg(R_T \log \big(\tilde{Q}_T^{\pi_e}(P^0_{n,B_n})(\epsilon)(S_T,A_T)\big)\\
    &+ (1-R_T) \log\big(1 - \tilde{Q}_T^{\pi_e}(P^0_{n,B_n})(\epsilon)(S_T,A_T)\big) \bigg) \bigg].
\end{align}
Denote $\epsilon_{n,T}$ the minimizer of $\mathcal{R}_{n,T}(\epsilon)$. The other models are fitted by backward recursion. Suppose that we have fitted $\epsilon_{n,T},...,\epsilon_{n,t+1}$. Define $\hat{V}_{t+1}^{\pi_e}(P'_n)(\epsilon)(s) := \sum_{a} \pi_e(a|s) \hat{Q}_{t+1}^{\pi_e}(P'_n)(\epsilon)(s, a)$, $U_{t,B_n} = R_t + \gamma \hat{V}_{t+1}^{\pi_e}(P^0_{n,B_n})(\epsilon_{n, t+1})(S_{t+1})$ and $\tilde{U}_{t,B_n} = (U_{t,B_n} + \Delta_t) / (2 \Delta_t)$. Define the cross-validated risk
\begin{align}
    &\mathcal{R}_{n,t}(\epsilon) = \\
    &E_{B_n}E_{P^1_{n,B_n}}\bigg[\rho_{1:t} \bigg(\tilde{U}_{t,B_n} \log \big(\tilde{Q}_t^{\pi_e}(P^0_{n,B_n})(\epsilon)(S_t,A_t)\big) \\
    &+ (1-\tilde{U}_{t,B_n}) \log\big(1 - \tilde{Q}_t^{\pi_e}(P^0_{n,B_n})(\epsilon)(S_t,A_t)\big) \bigg) \bigg].
\end{align}
The perturbation $\epsilon_{n,t}$ is defined as the minimizer of the above risk. The TML estimator of $V^{\pi_e}_1(s_1)$ is defined as 
\begin{align}
    \hat{V}_1^{\pi_e, TMLE}(s_1) := E_{B_n} \hat{V}_1^{\pi_e}(P^0_{n,B_n})(\epsilon_1)(s_1).
\end{align}

\begin{theorem}
Suppose assumptions \ref{assumption_absolute_continuity}, \ref{assumption_bounded_rewards}, \ref{assumption_convergence_initial_estimator}, \ref{assumption_bounds_limit_initial_estimator}, \ref{assumption_bound_IS_ratios} are satisfied. Then
\begin{align}
    E_{P^{\pi_b}}[\hat{V}^{\pi_e,TMLE}(s_1) - V_1^{\pi_e}(s_1)] = o(1 / \sqrt{n}),
\end{align}
and
\begin{align}
    &\sqrt{n} \left(\hat{V}^{\pi_e,TMLE}(s_1) - V_1^{\pi_e}(s_1) = O_P(1 / \sqrt{n}) \right) \\
    &\xrightarrow{d} \mathcal{N}(0, \sigma^2(Q_\infty(\epsilon_\infty, \rho)),
\end{align}
where, for all non-random $Q'$ and $\rho'$
\begin{align}
    \sigma^2(Q', \rho') = Var_{P^{\pi_b}}(D(Q', \rho')).
\end{align}
\end{theorem}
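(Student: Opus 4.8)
The plan is to mirror the proof of Theorem \ref{theorem_simplified_algorithm}, adapting each of its four supporting lemmas to the cross-validated setting and replacing the single empirical-process argument of Lemma \ref{lemma_simple_algo_equicontinuity} by a \emph{cross-validated} empirical-process argument that exploits the independence between each training fold $P^0_{n,B_n}$ and its matching test fold $P^1_{n,B_n}$. First I would apply the first-order expansion of Lemma \ref{lemma_simple_algo_1st_order_expansion} fold by fold, with $Q' = \hat{Q}(P^0_{n,B_n})(\epsilon_n)$ and $\rho' = \rho$ the true importance ratios. Because the true ratios are used, the remainder term $Rem$ vanishes, so that, after taking the expectation $E_{B_n}$ over folds,
\begin{align}
\hat{V}_1^{\pi_e, TMLE}(s_1) - V_1(s_1) = - E_{B_n} P^{\pi_b} D(\hat{Q}(P^0_{n,B_n})(\epsilon_n), \rho).
\end{align}

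Second, I would establish the cross-validated analog of the score equation (Lemma \ref{lemma_simple_algo_EIC_eq}): since, for each $t$, $\epsilon_{n,t}$ minimizes the cross-validated risk $\mathcal{R}_{n,t}$, and $D_t(\rho, \cdot)$ is precisely the score of the logistic offset model averaged over folds, the fitted perturbations satisfy $E_{B_n} P^1_{n,B_n} D(\hat{Q}(P^0_{n,B_n})(\epsilon_n), \rho) = 0$. Adding this identity to the display above yields the key representation
\begin{align}
\hat{V}_1^{\pi_e, TMLE}(s_1) - V_1(s_1) = E_{B_n} (P^1_{n,B_n} - P^{\pi_b}) D(\hat{Q}(P^0_{n,B_n})(\epsilon_n), \rho).
\end{align}
Third, I would split this into a leading term and a remainder by inserting the non-random limit object $D(Q_\infty(\epsilon_\infty), \rho)$, where the convergence $\epsilon_n \to \epsilon_\infty$ is obtained exactly as in Lemma \ref{lemma_simple_algo_convergnce_epsilon_n} (strong convexity of the limiting cross-validated risk, which still holds since thresholding keeps $Q_{\infty,t}$ bounded away from $0$ and $1$). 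For the leading term the crucial observation is that averaging a \emph{fixed} function over equal-size test folds recovers the full-sample empirical measure, $E_{B_n} P^1_{n,B_n} f = P_n f$; hence the leading term equals $(P_n - P^{\pi_b}) D(Q_\infty(\epsilon_\infty), \rho)$, and the ordinary CLT over the \emph{full} sample delivers asymptotic normality with variance $\sigma^2(Q_\infty(\epsilon_\infty), \rho)$. This is exactly where the full sample size $n$, rather than a split of size $n'$, enters, yielding the promised constant-factor improvement over the simplified algorithm.

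The hard part will be controlling the remainder
\begin{align}
E_{B_n} (P^1_{n,B_n} - P^{\pi_b}) \big( D(\hat{Q}(P^0_{n,B_n})(\epsilon_n), \rho) - D(Q_\infty(\epsilon_\infty), \rho) \big).
\end{align}
Here I would abandon the Donsker/Pollard route of Lemma \ref{lemma_simple_algo_equicontinuity} and instead condition on the training folds: given $P^0_{n,B_n}$ and $B_n$, the centered function is fixed and independent of the test sample, so the conditional variance of $(P^1_{n,B_n} - P^{\pi_b})$ applied to it is bounded by $\| D(\hat{Q}(P^0_{n,B_n})(\epsilon_n), \rho) - D(Q_\infty(\epsilon_\infty), \rho) \|_{P,2}^2$ divided by the test-fold size. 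Invoking the Lipschitz bound of Lemma \ref{lemma_bound_and_Lipschitzness_EIF} to absorb $\epsilon_n \to \epsilon_\infty$, and Assumption \ref{assumption_convergence_initial_estimator} to drive $\hat{Q}(P^0_{n,B_n}) \to Q_\infty$ in $L_2(P^{\pi_b})$, this $L_2$ distance is $o_P(1)$, so a conditional Chebyshev inequality followed by marginalization shows the remainder is $o_P(1/\sqrt{n})$. The main obstacle is the bookkeeping of this conditioning, since $\epsilon_n$ is fit jointly across folds and is not measurable with respect to a single training fold; the Lipschitz continuity of $D$ in $\epsilon$ is what lets me decouple the perturbation $\epsilon_n$ from the fold-specific fit and reduce to functions measurable with respect to the training data alone. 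Finally, the bias claim $E_{P^{\pi_b}}[\hat{V}^{\pi_e, TMLE}(s_1) - V_1(s_1)] = o(1/\sqrt{n})$ follows from the mean-zero property of $D(Q_\infty(\epsilon_\infty), \rho)$ together with uniform integrability of the $O_P(1/\sqrt{n})$ remainder, exactly as in the proof of Theorem \ref{theorem_simplified_algorithm}.
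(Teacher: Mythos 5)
Your skeleton coincides with the paper's: the fold-wise first-order expansion (Lemma \ref{lemma_simple_algo_1st_order_expansion}), the cross-validated score equation (the paper's Lemma \ref{lemma_targeting_solves_EICeq}), and their sum giving $\hat{V}_1^{\pi_e,TMLE}(s_1) - V_1^{\pi_e}(s_1) = E_{B_n}(P^1_{n,B_n}-P^{\pi_b})\,D(\hat{Q}(P^0_{n,B_n})(\epsilon_n),\rho)$. Your leading-term treatment---inserting the fixed function $D(Q_\infty(\epsilon_\infty),\rho)$, using $E_{B_n}P^1_{n,B_n}f = P_n f$ for equal-size folds, and applying the ordinary CLT---is a clean variant of the paper's triangular-array CLT applied to $f_{\epsilon_\infty}(P^0_{n,B_n})$. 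Your conditional Chebyshev argument is also valid, but only for the part of the remainder in which the perturbation is held at $\epsilon_\infty$, namely $E_{B_n}(P^1_{n,B_n}-P^{\pi_b})\big(D(\hat{Q}(P^0_{n,B_n})(\epsilon_\infty),\rho)-D(Q_\infty(\epsilon_\infty),\rho)\big)$: there the integrand is measurable with respect to the training fold and independent of the test fold, so conditioning applies.

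The gap is in the remaining piece, $E_{B_n}(P^1_{n,B_n}-P^{\pi_b})\big(D(\hat{Q}(P^0_{n,B_n})(\epsilon_n),\rho)-D(\hat{Q}(P^0_{n,B_n})(\epsilon_\infty),\rho)\big)$, which is exactly the piece for which you propose to abandon the Donsker/Pollard route. Conditioning cannot reach it: $\epsilon_n$ minimizes the cross-validated risk and is therefore a function of the \emph{entire} sample, including every test fold, so conditionally on the training data the integrand is still random and correlated with $P^1_{n,B_n}$. The substitute you invoke, the Lipschitz bound of Lemma \ref{lemma_bound_and_Lipschitzness_EIF}, controls only the sup norm: writing $g_\epsilon$ for $D(\hat{Q}(P^0_{n,B_n})(\epsilon),\rho)$, it gives $|(P^1_{n,B_n}-P^{\pi_b})(g_{\epsilon_n}-g_{\epsilon_\infty})| \le 4MT\,\|\epsilon_n-\epsilon_\infty\|_\infty$, and Lemma \ref{lemma_convergence_epsilon} supplies only consistency, $\|\epsilon_n-\epsilon_\infty\|_\infty = o_P(1)$, with no rate. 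So this bound yields $o_P(1)$, a full factor $\sqrt{n}$ short of the required $o_P(1/\sqrt{n})$. Converting a vanishing sup-norm \emph{radius} into a $o_P(1/\sqrt{n})$ bound on the empirical process requires control that is uniform in $\epsilon$ over a shrinking neighborhood of $\epsilon_\infty$---i.e.\ stochastic equicontinuity---which is precisely what the paper obtains from the covering-number bound (Lemma \ref{lemma_covering_number}, itself a consequence of the Lipschitz property), the entropy integral (Lemma \ref{lemma_uniform_entropy_integral}), and Pollard's maximal inequality, packaged as Lemma \ref{lemma_equicontinuity}. A rate-based repair that avoids this machinery (Taylor-expanding $\epsilon \mapsto g_\epsilon$ around $\epsilon_\infty$ and applying your conditioning to the training-measurable derivative $\partial_\epsilon g_{\epsilon_\infty}$) would need $\|\epsilon_n-\epsilon_\infty\|_\infty = o_P(n^{-1/4})$ to kill the quadratic term; but no such rate follows from Assumption \ref{assumption_convergence_initial_estimator}, which gives no rate for $\hat{Q}$ and hence none for $\epsilon_n$. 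The equicontinuity route is the one that works with consistency alone, and it is the one your proposal discards; as written, the remainder control---and with it the CLT and the bias claim---does not go through.
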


It has been established in earlier works \cite{Jiang2015} that the DR estimator with initial estimator $\hat{Q}$ also have asymptotic variance $\sigma^2(Q_\infty)/ n$, and that $\sigma^2(Q_\infty)/ n$ is the efficient variance from the Cramer-Rao lower bound, provided $Q_\infty =Q$ (that is provided the initial estimator $\hat{Q}$ is asymptotically consistent.). If the initial estimator $\hat{Q}$ is consistent $\epsilon_\infty = 0$ and $Q_\infty(\epsilon_\infty)=Q_\infty=Q$, therefore the DR estimator and the LTMLE have the same asymptotic distribution and they achieve the Cramer-Rao lower bound.

\subsection{Additional notation}

For a given policy $\pi$ and a transition probability $P$, denote $P^\pi$ the corresponding probability distribution over a trajectory with fixed inital state, that is, for all $h=(s_1,a_1,r_1,...,s_T, a_T, r_T)$,
\begin{align}
    P^{\pi_b}(H=h) = \prod_{t=1}^T P(r_t,s_{t+1}|s_t, a_t) \pi(a_t|s_t).
\end{align}

From now, we will denote $Q':=(Q'_1,...,Q'_T)$ an arbitrary action-value function, and let $V'=(V'_1,...,V'_T)$ be the corresponding value function under $\pi_e$, that is for all $t, s_t$, $V'_t(s_t) = \sum_{a_t'} \pi_e(a'_t|s_t) Q'(s_t, a_t')$. We will also drop the $\pi_e$ subscript whenever possible and denote $Q_t := Q^{\pi_e}_t$, the true action value function at time $t$, and similarly, we will denote $V_t:=V^{\pi_e}_t$, the true value function at time $t$. Denote $Q=(Q_1,...,Q_T)$ and $V=(V_1,...,V_T)$.

We introduce the following notation for the perturbed estimators: denote $\hat{Q}^*_t(P^0_{n, B_n}) := \hat{Q}^{\pi_e}_t(P^0_{n,B_n})(\epsilon_{n,t})$, $\tilde{Q}^*_t(P^0_{n,B_n}) := \tilde{Q}^{\pi_e}_t(P^0_{n,B_n})(\epsilon_{n,t})$, $\hat{V}^*_t(P^0_{n,B_n})(\cdot) = \sum_{a'_t} \pi_e(a'_t|\cdot)  \tilde{Q}^*_t(P^0_{n,B_n})(a'_t|\cdot).$ Finally, define $\hat{Q}^*(P^0_{n,B_n}) := (\hat{Q}^*_1(P^0_{n,B_n}),...,\hat{Q}^*_T(P^0_{n,B_n}))$, $\hat{V}^*(P^0_{n,B_n}) := (\hat{V}^*_1(P^0_{n,B_n}),...,\hat{V}^*_{T+1}(P^0_{n,B_n}))$.

\subsection{The fits of the second-stage models solve a score equation}

\begin{lemma}\label{lemma_targeting_solves_EICeq}
The perturbed estimators $\hat{Q}^*$, $\hat{V}^*$ given by the LTMLE algorithm satisfy
\begin{align}
    E_{B_n} P^1_{n,B_n} D^*(\hat{Q}^*(P^0_{n,B_n}), \hat{V}^*(P^0_{n,B_n})) = 0.
\end{align}
\end{lemma}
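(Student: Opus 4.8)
The plan is to reduce the claim to a collection of scalar stationarity conditions, one per time step, each of which is the score equation of the corresponding second-stage logistic fit. Write $D^* = \sum_{t=1}^T D^*_t$, where $D^*_t(\hat{Q}^*,\hat{V}^*)(h)$ equals, up to the same fixed nonzero factor $\gamma^{t-1}$ as in Lemma \ref{lemma_simple_algo_1st_order_expansion}, the quantity $\rho_{1:t}(h)\big(r_t + \gamma \hat{V}^*_{t+1}(s_{t+1}) - \hat{Q}^*_t(s_t,a_t)\big)$. By linearity of $E_{B_n}P^1_{n,B_n}$ it suffices to show $E_{B_n}P^1_{n,B_n}D^*_t(\hat{Q}^*(P^0_{n,B_n}),\hat{V}^*(P^0_{n,B_n}))=0$ for each $t$. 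The structural observation I would stress at the outset is that, because the fitting proceeds by backward recursion, when $\epsilon_{n,t}$ is computed the coefficients $\epsilon_{n,t+1},\dots,\epsilon_{n,T}$ are already frozen; hence the pseudo-outcome $\tilde{U}_{t,B_n}=(R_t+\gamma\hat{V}^{\pi_e}_{t+1}(P^0_{n,B_n})(\epsilon_{n,t+1})(S_{t+1})+\Delta_t)/(2\Delta_t)$ does not depend on the optimization variable $\epsilon$ of the $t$-th risk, which enters $\mathcal{R}_{n,t}$ only through $\tilde{Q}^{\pi_e}_t(P^0_{n,B_n})(\epsilon)$.

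Next I would carry out the score computation. Since $\mathcal{R}_{n,t}$ is a finite average over the splits $B_n$ and the test observations, differentiation commutes with $E_{B_n}P^1_{n,B_n}$. Using $\tilde{Q}^{\pi_e}_t(P^0_{n,B_n})(\epsilon)=\sigma(\sigma^{-1}(\tilde{Q}^{\pi_e,\delta_n}_t(P^0_{n,B_n}))+\epsilon)$ together with $\sigma'=\sigma(1-\sigma)$, the derivative of the integrand telescopes: writing $\tilde{Q}=\tilde{Q}^{\pi_e}_t(P^0_{n,B_n})(\epsilon)(S_t,A_t)$,
\begin{align}
\rho_{1:t}\Big(\tilde{U}_{t,B_n}\tfrac{1}{\tilde{Q}} - (1-\tilde{U}_{t,B_n})\tfrac{1}{1-\tilde{Q}}\Big)\tilde{Q}(1-\tilde{Q}) = \rho_{1:t}\big(\tilde{U}_{t,B_n} - \tilde{Q}\big),
\end{align}
so that
\begin{align}
\frac{d}{d\epsilon}\mathcal{R}_{n,t}(\epsilon) = E_{B_n}P^1_{n,B_n}\Big[\rho_{1:t}\big(\tilde{U}_{t,B_n} - \tilde{Q}^{\pi_e}_t(P^0_{n,B_n})(\epsilon)(S_t,A_t)\big)\Big].
\end{align}

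I would then invoke first-order optimality. Since $\epsilon_{n,t}$ minimizes $\mathcal{R}_{n,t}$ over $\mathbb{R}$ at an interior (finite) point, the displayed derivative vanishes at $\epsilon=\epsilon_{n,t}$, giving $E_{B_n}P^1_{n,B_n}[\rho_{1:t}(\tilde{U}_{t,B_n}-\tilde{Q}^*_t)]=0$. It then remains only to undo the affine rescaling: with $\tilde{U}_{t,B_n}=(U_{t,B_n}+\Delta_t)/(2\Delta_t)$, $\tilde{Q}^*_t=(\hat{Q}^*_t+\Delta_t)/(2\Delta_t)$, and $U_{t,B_n}=R_t+\gamma\hat{V}^*_{t+1}(S_{t+1})$, the constant $\Delta_t$ cancels and
\begin{align}
\rho_{1:t}\big(\tilde{U}_{t,B_n} - \tilde{Q}^*_t\big) = \frac{1}{2\Delta_t}\rho_{1:t}\big(R_t + \gamma\hat{V}^*_{t+1}(S_{t+1}) - \hat{Q}^*_t(S_t,A_t)\big) = \frac{1}{2\Delta_t \gamma^{t-1}}D^*_t(\hat{Q}^*,\hat{V}^*).
\end{align}
As $2\Delta_t\gamma^{t-1}$ is a fixed nonzero scalar, each term $E_{B_n}P^1_{n,B_n}D^*_t$ vanishes, and summing over $t=1,\dots,T$ yields the claim.

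The main obstacle I anticipate is not the algebra but the two places where the cross-fitting must be handled correctly. First, one must genuinely use the backward-recursion structure so that $\tilde{U}_{t,B_n}$ — which is itself built from the training-split fit $\hat{V}^*_{t+1}(P^0_{n,B_n})$ evaluated at the \emph{test}-split state $S_{t+1}$ — is treated as a constant with respect to the $t$-th optimization; without this the clean residual form of the score would fail. Second, one must justify that the minimizer lies in the interior of $\mathbb{R}$, so that stationarity rather than a boundary/subgradient condition applies; this follows from the thresholding to $[\delta_n,1-\delta_n]$ (which keeps $\mathcal{R}_{n,t}$ finite and smooth), the nonnegativity of the weights $\rho_{1:t}$, convexity of the logistic loss in $\epsilon$, and the fact that the pseudo-outcomes lie strictly inside $(0,1)$. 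This is precisely the step that would break for the penalized variant, where one would instead reason with the subdifferential of $\lambda|\epsilon|$.
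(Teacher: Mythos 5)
Your proof is correct and takes essentially the same route as the paper's: you differentiate the weighted second-stage logistic risk to obtain the residual form $\rho_{1:t}\bigl(\tilde{U}_{t,B_n} - \tilde{Q}^{\pi_e}_t(P^0_{n,B_n})(\epsilon)(S_t,A_t)\bigr)$, invoke first-order optimality at $\epsilon_{n,t}$, undo the affine rescaling to recover $D^*_t$ up to the nonzero factor $2\Delta_t\gamma^{t-1}$, and sum over $t$. The points you flag as potential obstacles --- that $\epsilon_{n,t+1},\dots,\epsilon_{n,T}$ are frozen so $\tilde{U}_{t,B_n}$ is constant in $\epsilon$, and that the minimizer is interior so stationarity applies --- are handled implicitly (indeed silently) in the paper's proof, so your treatment is if anything slightly more careful.
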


\begin{proof}
Defining $\tilde{U}_{T, B_n} := R_T$, we have that, for all $t=1,...,T$,
\begin{align}
    l_{t,n}(\epsilon) = -E_{B_n} P^1_{n,B_n} f_{B_n}(\epsilon),
\end{align}
with 
\begin{align}
    &f_{B_n}(\epsilon)(H) := \\
    \rho_{1:t}(&-\tilde{U}_{t,B_n} \log \sigma(a_{B_n} + \epsilon) \\
    &- (1-\tilde{U}_{t,B_n}) \log(1 - \sigma  (a_{B_n} + \epsilon))),
\end{align}
where $a_{B_n} := \sigma^{-1}(\tilde{Q}^{\pi_e}(P^0_{n,B_n})(S_t,A_t))$.
Using the expression of $\sigma$, we rewrite $f_{B_n}(\epsilon)$ as 
\begin{align}
    &f_{B_n}(\epsilon)(H) = \\
    \rho_{1:t}(&\tilde{U}_{t, B_n} \log (1 + e^{-a_{B_n} - \epsilon}) \\
    &+ (1 - \tilde{U}_{t, B_n}) \log(1 + e^{a_{B_n} + \epsilon})).
\end{align}
We take the derivative of $f_{B_n}$ w.r.t. $\epsilon$:
\begin{align}
    &f'_{B_n}(\epsilon)(H) \\
    =& \rho_{1:t} \left(- \tilde{U}_{t, B_n} \frac{e^{-a_{B_n} - \epsilon}}{1 + e^{-a_{B_n} - \epsilon}} + (1 - \tilde{U}_{t,B_n}) \frac{e^{a_{B_n} + \epsilon}}{1+e^{a_{B_n} + \epsilon}}\right)\\
    =& \rho_{1:t}\left(- \tilde{U}_{t, B_n} (1 - \sigma(a_{B_n} + \epsilon) )  + (1-\tilde{U}_{t,B_n}) \sigma(a_{B_n} + \epsilon)\right)\\
    =& \rho_{1:t} \left(\sigma(a_{B_n} + \epsilon) - \tilde{U}_{t,B_n}\right).
\end{align}
Recalling the definitions of $a_{B_n}$, $\tilde{U}_{t, B_n}$, and $\tilde{Q}^{\pi_e}$, we rewrite the above expression as
\begin{align}
 &f'_{B_n}(\epsilon)(H) \\
 =& \rho_{1:t}\left(\tilde{Q}^{\pi_e}(P^0_{n,B_n})(\epsilon)(S_t,A_t) - \tilde{U}_{t,B_n}\right)\\
    =& (2 \Delta_t)^{-1} \rho_{1:t} \bigg( \hat{Q}_t^{\pi_e}(P^0_{n,B_n})(\epsilon)(S_t,A_t)\\
    &\qquad \qquad \qquad - R_t - \gamma \hat{V}_{t+1}^{\pi_e}(P^0_{n,B_n})(\epsilon_{t+1})(S_{t+1}) \bigg)
\end{align}
Since $\epsilon_t$ verifies $l'_{n,t}(\epsilon)=0$, we have that
\begin{align}
E_{B_n} E_{P^1_{n,B_n}}\bigg[ \rho_{1:t} \bigg(&R_t + \gamma \hat{V}_{t+1}^{\pi_e}(P^0_{n,B_n})(\epsilon_{t+1})(S_{t+1})\\
&- \hat{Q}^{\pi_e}_t(P^0_{n,B_n})(\epsilon_t)(S_t,A_t)\bigg)\bigg]=0,
\end{align}
that is 
\begin{align}
    E_{B_n}P^1_{n,B_n} D_t(\hat{Q}^*_t(P^0_{n,B_n}), \hat{V}^*_t(P^0_{n,B_n})) = 0.
\end{align}
Summing over $t$ yields the result.
\end{proof}

\subsection{Proof of convergence of the perturbations}

\begin{lemma}\label{lemma_bound_dphi_dx}
Define, for all $x\in(0,1)$, $\epsilon \in \mathbb{R}$,
\begin{align}
    \phi_1(\epsilon, x) :=& \log (\sigma(\sigma^{-1}(x) + \epsilon))\\
      \text{and } \phi_2(\epsilon, x) :=& \log (1 - \sigma(\sigma^{-1}(x) + \epsilon)).
\end{align}
It holds that, for all $x \in (0,1)$, $\epsilon \in \mathbb{R}$,
\begin{align}
    \frac{\partial \phi_1}{\partial x}(\epsilon, x) &= \left(\frac{1}{x} + \frac{1}{1-x}\right)(1 - \sigma(\sigma^{-1}(x) + \epsilon)),\\
    \text{and } \frac{\partial \phi_2}{\partial x}(\epsilon, x) &= \left(\frac{1}{x} + \frac{1}{1-x}\right)\sigma(\sigma^{-1}(x) + \epsilon).
\end{align}
Therefore, if $x \in [\delta, 1 - \delta]$ for some $\delta \in (0,1/2)$ we have that for all $\epsilon \in \mathbb{R}$,
\begin{align}
    \bigg|\frac{\partial \phi_1}{\partial x}(\epsilon, x) \bigg| \leq 2 \delta^{-1} \qquad \text{ and } \qquad \bigg|\frac{\partial \phi_2}{\partial x}(\epsilon, x) \bigg| \leq 2 \delta^{-1}.
\end{align}
\end{lemma}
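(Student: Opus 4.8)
The plan is to obtain both identities by a single application of the chain rule, resting on two elementary facts about the logistic function. First I would record that $\sigma'(u) = \sigma(u)(1 - \sigma(u))$, which is immediate from $\sigma(u) = 1/(1+e^{-u})$. Second, solving $x = \sigma(u)$ for $u$ gives the logit $\sigma^{-1}(x) = \log(x/(1-x))$, whose derivative is
\begin{align}
\frac{d}{dx}\sigma^{-1}(x) = \frac{1}{x} + \frac{1}{1-x} = \frac{1}{x(1-x)};
\end{align}
equivalently this follows from the inverse function theorem combined with the derivative identity for $\sigma$.

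Writing $u := \sigma^{-1}(x) + \epsilon$ and noting that $\partial u/\partial x = \frac{1}{x} + \frac{1}{1-x}$ (since $\epsilon$ is held fixed), I would differentiate $\phi_1 = \log \sigma(u)$ to get
\begin{align}
\frac{\partial \phi_1}{\partial x} = \frac{\sigma'(u)}{\sigma(u)}\,\frac{\partial u}{\partial x} = (1 - \sigma(u))\left(\frac{1}{x} + \frac{1}{1-x}\right),
\end{align}
where the factor $\sigma(u)$ cancels against $1/\sigma(u)$; this is exactly the first claimed expression. For $\phi_2 = \log(1 - \sigma(u))$ the same computation applies, now with the factor $1 - \sigma(u)$ cancelling, leaving $\sigma(u)$ multiplied by the logit derivative and producing the second identity (up to the sign carried by differentiating $1 - \sigma(u)$, which is immaterial for the bound below).

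Finally, for the uniform bounds I would use that $\sigma(u) \in (0,1)$ for every real argument, so that each of $1 - \sigma(u)$ and $\sigma(u)$ contributes a factor of at most one. On the range $x \in [\delta, 1-\delta]$ we have $x \geq \delta$ and $1 - x \geq \delta$, whence $\frac{1}{x} + \frac{1}{1-x} \leq 2\delta^{-1}$. Combining these two observations yields $|\partial\phi_1/\partial x| \leq 2\delta^{-1}$ and $|\partial\phi_2/\partial x| \leq 2\delta^{-1}$. There is no genuine obstacle here: the argument is routine calculus, and the only point worth care is the algebraic cancellation of the $\sigma$-factors, which is precisely what produces the clean closed forms.
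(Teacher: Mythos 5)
Your proof is correct, and it is essentially the canonical argument: the paper states this lemma without any proof (treating it as routine calculus), and your chain-rule computation via $\sigma'(u)=\sigma(u)(1-\sigma(u))$ and $\frac{d}{dx}\sigma^{-1}(x)=\frac{1}{x}+\frac{1}{1-x}$ is exactly what is implicitly being invoked. You also correctly flag the one subtlety: the true derivative is $\frac{\partial \phi_2}{\partial x}=-\sigma(\sigma^{-1}(x)+\epsilon)\left(\frac{1}{x}+\frac{1}{1-x}\right)$, so the paper's stated identity for $\phi_2$ is off by a sign, which is immaterial for the absolute-value bounds that are all the lemma is used for.
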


\begin{lemma}\label{lemma_bound_phi}
Consider $\phi_1$ and $\phi_2$ as in lemma \ref{lemma_bound_dphi_dx} above, and suppose that $x \in [\delta, 1-\delta]$, for some $\delta \in (0,1/2)$. It holds that for all $\epsilon \in \mathbb{R}$
\begin{align}
    |\phi_1(\epsilon, x)| &\leq \log (1 + \delta^{-1} e^\epsilon),\\
    \text{and } |\phi_2(\epsilon, x)| &\leq \log (1 + \delta^{-1} e^\epsilon).
\end{align}
\end{lemma}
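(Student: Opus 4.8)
The plan is to reduce both quantities to closed form and then bound two elementary ``odds-ratio'' factors. First I would compute the shifted logistic explicitly. Writing $\sigma^{-1}(x)=\log\frac{x}{1-x}$, a one-line calculation gives
\begin{align}
\sigma(\sigma^{-1}(x)+\epsilon)=\frac{x}{x+(1-x)e^{-\epsilon}}=\frac{x e^{\epsilon}}{x e^{\epsilon}+(1-x)},\qquad 1-\sigma(\sigma^{-1}(x)+\epsilon)=\frac{1-x}{x e^{\epsilon}+(1-x)}.
\end{align}
Since both expressions lie in $(0,1)$, both $\phi_1$ and $\phi_2$ are nonpositive, so $|\phi_i|=-\phi_i$, and the logarithms telescope into the exact identities
\begin{align}
|\phi_1(\epsilon,x)|=\log\!\Big(1+\tfrac{1-x}{x}\,e^{-\epsilon}\Big),\qquad |\phi_2(\epsilon,x)|=\log\!\Big(1+\tfrac{x}{1-x}\,e^{\epsilon}\Big).
\end{align}

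Next I would control the two odds ratios. On $x\in[\delta,1-\delta]$ the map $x\mapsto\frac{x}{1-x}$ is increasing and $x\mapsto\frac{1-x}{x}$ is decreasing, so both attain their maxima at an endpoint and satisfy $\frac{x}{1-x}\le\frac{1-\delta}{\delta}\le\delta^{-1}$ and $\frac{1-x}{x}\le\frac{1-\delta}{\delta}\le\delta^{-1}$. Because $u\mapsto\log(1+u)$ is increasing, substituting these bounds yields at once the $\phi_2$ estimate exactly as displayed, $|\phi_2(\epsilon,x)|\le\log(1+\delta^{-1}e^{\epsilon})$, together with the companion estimate $|\phi_1(\epsilon,x)|\le\log(1+\delta^{-1}e^{-\epsilon})$.

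The main obstacle is the sign of the exponent in the $\phi_1$ bound, and it is genuine rather than cosmetic: the closed form forces the factor $e^{-\epsilon}$ for $\phi_1$, since $|\phi_1|\to\infty$ as $\epsilon\to-\infty$ (where $\sigma(\sigma^{-1}(x)+\epsilon)\to0$), whereas $\log(1+\delta^{-1}e^{\epsilon})\to0$ in that limit, so no bound of the displayed $e^{\epsilon}$ form can hold for all $\epsilon\in\mathbb{R}$. For $\epsilon\ge0$ the derived bound $\log(1+\delta^{-1}e^{-\epsilon})$ is dominated by $\log(1+\delta^{-1}e^{\epsilon})$, so the statement as written is valid on that range; for general $\epsilon$ the correct uniform statement for $\phi_1$ carries $e^{-\epsilon}$. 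This is made transparent by the reflection symmetry $\phi_1(\epsilon,x)=\phi_2(-\epsilon,1-x)$, obtained from $\sigma^{-1}(1-x)=-\sigma^{-1}(x)$ and $1-\sigma(-u)=\sigma(u)$, which maps the $\phi_2$ bound directly onto the $\phi_1$ bound with $\epsilon$ replaced by $-\epsilon$ and $x$ by $1-x$ (the interval $[\delta,1-\delta]$ being invariant under $x\mapsto1-x$). I would therefore record the $\phi_1$ inequality in the form $|\phi_1(\epsilon,x)|\le\log(1+\delta^{-1}e^{-\epsilon})$, which is the form that the later uniform-entropy and Lipschitz arguments of Lemma \ref{lemma_bound_and_Lipschitzness_EIF} actually consume.
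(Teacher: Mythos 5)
Your proof is correct and complete; in fact the paper states this lemma without any proof, so your elementary computation---writing $\sigma(\sigma^{-1}(x)+\epsilon)$ in closed form, observing that $\phi_1,\phi_2$ are both nonpositive so that the absolute values become the exact identities $|\phi_1(\epsilon,x)|=\log(1+\tfrac{1-x}{x}e^{-\epsilon})$ and $|\phi_2(\epsilon,x)|=\log(1+\tfrac{x}{1-x}e^{\epsilon})$, and then bounding the odds ratios by $\delta^{-1}$ on $[\delta,1-\delta]$---supplies exactly the missing argument. More importantly, your diagnosis of the sign in the $\phi_1$ bound is right and identifies a genuine (though benign) error in the statement as printed: taking $x=1/2$ and $\epsilon\to-\infty$, one has $|\phi_1(\epsilon,x)|=\log(1+e^{-\epsilon})\to\infty$ while $\log(1+\delta^{-1}e^{\epsilon})\to 0$, so no bound of the displayed form can hold for all $\epsilon\in\mathbb{R}$; the correct uniform statement is $|\phi_i(\epsilon,x)|\le\log(1+\delta^{-1}e^{|\epsilon|})$ for $i=1,2$, which your reflection symmetry $\phi_1(\epsilon,x)=\phi_2(-\epsilon,1-x)$ makes transparent. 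One small correction about where the lemma is consumed: it is used not in lemma \ref{lemma_bound_and_Lipschitzness_EIF} but in Step 2 of the proof of lemma \ref{lemma_pointwise_targeting_risk_convergence}, to bound the loss $l_t$ by a constant times $M\log(1/\delta_n)$ before applying Hoeffding's inequality; there $\epsilon$ is a \emph{fixed} real number (the risk convergence is pointwise in $\epsilon$) and $\delta_n\downarrow 0$, so $\log(1+\delta_n^{-1}e^{\pm\epsilon})\lesssim\log(1/\delta_n)$ under either sign, and the misprint is harmless downstream---exactly as your analysis for $\epsilon\ge 0$ suggests, but for all $\epsilon$ once the corrected $e^{-\epsilon}$ (or $e^{|\epsilon|}$) form is recorded.
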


\begin{lemma}\label{lemma_pointwise_targeting_risk_convergence}
Assume that for all $h \in \mathcal{H}$, for all $t=1,..,T$, $0 \leq \rho_{1:t}(h) \leq M$ for some $M > 0$. Assume that for all $\|\hat{Q}_t(P'_n) - Q_{\infty,t}\|_{P^{\pi_b},2} = o_P(\delta_n)$ for some $\delta_n \downarrow 0$. Assume that for all $t=1,...,T$, for all $_t, a_t \in \mathcal{S} \times \mathcal{A}$, $\tilde{Q}_{t,\infty}(s_t,a_t) \in [\delta, 1-\delta]$ for some $\delta \in (0,1/2)$. Then, for all $\epsilon \in \mathbb{R}$,
\begin{align}
\mathcal{R}_{n,t}(\epsilon) - \mathcal{R}_{\infty,t}(\epsilon) = o_P(1).
\end{align}
\end{lemma}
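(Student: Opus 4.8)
The plan is to write $\mathcal{R}_{n,t}(\epsilon)$ and its limit $\mathcal{R}_{\infty,t}(\epsilon)$ as expectations of a common integrand built from the functions $\phi_1,\phi_2$ of Lemma \ref{lemma_bound_dphi_dx}. Recalling that $\log\tilde{Q}_t^{\pi_e}(P^0_{n,B_n})(\epsilon)=\phi_1(\epsilon,\tilde{Q}_t^{\pi_e,\delta_n}(P^0_{n,B_n}))$ and similarly for $\phi_2$, I would set
$$m_n(H):=\rho_{1:t}\big(\tilde{U}_{t,B_n}\,\phi_1(\epsilon,\tilde{Q}_t^{\pi_e,\delta_n}(P^0_{n,B_n}))+(1-\tilde{U}_{t,B_n})\,\phi_2(\epsilon,\tilde{Q}_t^{\pi_e,\delta_n}(P^0_{n,B_n}))\big),$$
and let $m_\infty$ be the same expression with $\tilde{Q}_t^{\pi_e,\delta_n}(P^0_{n,B_n})$ replaced by the (eventually unthresholded) limit $\tilde{Q}_{t,\infty}$ and $\tilde{U}_{t,B_n}$ by its limit $\tilde{U}_{t,\infty}$, so that $\mathcal{R}_{n,t}(\epsilon)=E_{B_n}P^1_{n,B_n}m_n$ and $\mathcal{R}_{\infty,t}(\epsilon)=P^{\pi_b}m_\infty$. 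I would then decompose
$$\mathcal{R}_{n,t}(\epsilon)-\mathcal{R}_{\infty,t}(\epsilon)=\underbrace{E_{B_n}(P^1_{n,B_n}-P^{\pi_b})m_\infty}_{(A)}+\underbrace{E_{B_n}(P^1_{n,B_n}-P^{\pi_b})(m_n-m_\infty)}_{(B)}+\underbrace{E_{B_n}P^{\pi_b}(m_n-m_\infty)}_{(C)}.$$

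For term $(A)$, since every observation lies in exactly one (equal-sized) test fold, $E_{B_n}P^1_{n,B_n}g=P_ng$ for any fixed $g$; because $\tilde{Q}_{t,\infty}\in[\delta,1-\delta]$ for a \emph{fixed} $\delta$, Lemma \ref{lemma_bound_phi} together with $\rho_{1:t}\le M$ makes $m_\infty$ bounded and hence integrable, so $(A)=(P_n-P^{\pi_b})m_\infty=o_P(1)$ by the law of large numbers.

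The crux is term $(C)$, which I would bound conditionally on the training fold, writing $P^{\pi_b}(m_n-m_\infty)=E_{P^{\pi_b}}[m_n-m_\infty\mid\mathcal{T}_{B_n}]$ where $\mathcal{T}_{B_n}$ is the $\sigma$-field generated by the training observations. I would split the difference into the part from replacing $\tilde{Q}_t^{\pi_e,\delta_n}(P^0_{n,B_n})$ by $\tilde{Q}_{t,\infty}$ and the part from replacing $\tilde{U}_{t,B_n}$ by $\tilde{U}_{t,\infty}$. For the first, the Lipschitz bound $|\partial\phi_i/\partial x|\le 2\delta_n^{-1}$ of Lemma \ref{lemma_bound_dphi_dx}, combined with the fact that thresholding at level $\delta_n$ is $1$-Lipschitz, gives pointwise a bound $2\delta_n^{-1}|\tilde{Q}_t^{\pi_e}(P^0_{n,B_n})-\tilde{Q}_{t,\infty}|$; integrating and applying Cauchy--Schwarz produces a factor $2\delta_n^{-1}\|\hat{Q}_t(P^0_{n,B_n})-Q_{\infty,t}\|_{P^{\pi_b},2}/(2\Delta_t)$. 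This is exactly where the hypothesis $\|\hat{Q}_t-Q_{\infty,t}\|_{P^{\pi_b},2}=o_P(\delta_n)$ is essential: the diverging Lipschitz constant $\delta_n^{-1}$ is absorbed by the rate, yielding $\delta_n^{-1}\cdot o_P(\delta_n)=o_P(1)$. The second part, involving $\tilde{U}_{t,B_n}-\tilde{U}_{t,\infty}$, I would control by backward induction: since $\tilde{U}_{t,B_n}$ depends on $\hat{V}^{\pi_e}_{t+1}(P^0_{n,B_n})(\epsilon_{n,t+1})$, I would invoke the induction hypothesis that $\epsilon_{n,t+1}\to\epsilon_{\infty,t+1}$ (Lemma \ref{lemma_simple_algo_convergnce_epsilon_n}, whose proof consumes the present lemma at the next time step) together with the $L_2$ convergence of $\hat{Q}_{t+1}$ and the boundedness of $\phi_i$, the base case $t=T$ having outcome $R_T$ and no such dependence. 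This shows $(C)=o_P(1)$.

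Finally, term $(B)$ is a centered empirical process evaluated at the small difference $m_n-m_\infty$. Conditionally on $\mathcal{T}_{B_n}$ the summands are i.i.d. and mean-zero, so the conditional variance of $(P^1_{n,B_n}-P^{\pi_b})(m_n-m_\infty)$ is at most $\|m_n-m_\infty\|_{P^{\pi_b},2}^2/n_1$, with $n_1$ the test-fold size, and the same Lipschitz and induction estimates used for $(C)$ give $\|m_n-m_\infty\|_{P^{\pi_b},2}=o_P(1)$. A conditional Chebyshev argument followed by bounded convergence in the training randomness then yields $(B)=o_P(1/\sqrt{n})=o_P(1)$; averaging over the finitely many splits $B_n$ preserves all three bounds, and summing $(A)$, $(B)$, $(C)$ gives the claim. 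The main obstacle is term $(C)$: the interplay between the blow-up of the Lipschitz constant of $\phi_i$ as the threshold $\delta_n\downarrow0$ and the assumed $o_P(\delta_n)$ rate of the initial estimator, together with the backward-recursive dependence of the outcome $\tilde{U}_{t,B_n}$ on the previously fitted perturbation $\epsilon_{n,t+1}$.
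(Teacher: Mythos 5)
Your proposal is correct, and its crux coincides exactly with the paper's: the paper also expresses both risks as means of the logistic loss, isolates the population-level difference (your term $(C)$, the paper's $B_{n,t}$), and kills it with the mean value theorem via the derivative bound $2\delta_n^{-1}$ of Lemma \ref{lemma_bound_dphi_dx}, the contraction property of thresholding, Cauchy--Schwarz, and the absorption $\delta_n^{-1}\cdot o_P(\delta_n)=o_P(1)$. Where you diverge is in the empirical-process part and in the treatment of the outcome. The paper uses only a two-term decomposition: its $A_{n,t}=E_{B_n}(P^1_{n,B_n}-P^{\pi_b})\,l_t(\tilde{Q}^{\delta_n}_t(P^0_{n,B_n})(\epsilon))$ is your $(A)+(B)$ lumped together, and it is controlled by Hoeffding's inequality conditional on the training fold, using the envelope $\lesssim M\log(1/\delta_n)$ from Lemma \ref{lemma_bound_phi}; this yields $O_P(\sqrt{\log(1/\delta_n)/n})$ and silently requires $\log(1/\delta_n)=o(n)$, i.e.\ a restriction on how fast $\delta_n\downarrow 0$. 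Your split into a bounded term at the limit (law of large numbers, envelope independent of $n$ because $\tilde{Q}_{t,\infty}\in[\delta,1-\delta]$ for fixed $\delta$) plus a Chebyshev bound on the difference (whose $L_2$ norm is $o_P(1)$ by the same Lipschitz estimate) dispenses with that growth condition entirely, which is a genuine, if modest, improvement in robustness; the paper's Hoeffding route buys an explicit rate instead. Finally, you explicitly handle the discrepancy between the outcome $\tilde{U}_{t,B_n}$ and a limiting outcome $\tilde{U}_{t,\infty}$ by a backward induction interleaved with the convergence of $\epsilon_{n,t+1}$ (Lemma \ref{lemma_convergence_epsilon}); the paper avoids this issue by definition rather than by argument, since its loss $l_t$ hard-codes the same $\tilde{u}_{t,n,B_n}$ inside both $\mathcal{R}_{n,t}$ and $\mathcal{R}_{\infty,t}$, so no outcome difference ever appears in its proof. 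Your treatment is the more careful one (and your induction is well-founded, alternating risk convergence and $\epsilon$-convergence from $t=T$ downward), though note that both your conditional-Chebyshev step and the paper's conditional-Hoeffding step share the same unaddressed subtlety: given the training fold, the test-fold summands are not exactly i.i.d.\ because $\epsilon_{n,t+1}$ was fitted on the whole sample.
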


\begin{proof} Let $\epsilon \in \mathbb{R}$.
We express the risk $\mathcal{R}_{n,t}$ as a cross-validated empirical mean of a loss, and the risk $\mathcal{R}_\infty$ as the population mean of a loss:
\begin{align}
    \mathcal{R}_{n,t}(\epsilon) =& E_{B_n}P^1_{n,B_n} l_t(\tilde{Q}^{\delta_n}_t(P^0_{n,B_n})(\epsilon)),\\
    \text{and }\mathcal{R}_{\infty, t}(\epsilon) =& P^{\pi_b} l_t(\tilde{Q}_{\infty, t}(\epsilon)),
\end{align}
where, for all $\tilde{Q}'_t:\mathcal{S}\times\mathcal{A} \rightarrow (0,1)$, for all $h \in \mathcal{H}$
\begin{align}
    &l_t(\tilde{Q}'_t)(h) :=\\
     \rho_{1:t}(h) \bigg(& \tilde{u}_{t,n,B_n} \log(\tilde{Q}'_t(s_t,a_t))\\
    &+ (1 - \tilde{u}_{t,n,B_n}) \log(1 - \tilde{Q}'_t(s_t,a_t)) \bigg).
\end{align}
From there, we are going to proceed in three steps: in the first steps, we will first decompose $\mathcal{R}_{n,t}(\epsilon) - \mathcal{R}_{\infty, t}(\epsilon)$ in two terms $A_{n,t}$ and $B_{n,t}$, that we will then each bound separately in the second and third step.

\paragraph{Step 1: decomposition of $\mathcal{R}_{n,t}(\epsilon) - \mathcal{R}_{\infty, t}(\epsilon)$.} Observe that
\begin{align}
    \mathcal{R}_{n,t}(\epsilon) - \mathcal{R}_{\infty, t}(\epsilon) =& A_{n,t} + B_{n,t},
\end{align}
with 
\begin{align}
    A_{n,t} =& E_{B_n} (P^1_{n,B_n} -P^{\pi_b}) l_t(\tilde{Q}^{\delta_n}_t(P^0_{n,B_n})(\epsilon)),
\end{align}
and
\begin{align}
 B_{n,t} =& E_{B_n} P^{\pi_b} \left(l_t(\tilde{Q}^{\delta_n}_t(P^0_{n,B_n})(\epsilon)) -  l_t(\tilde{Q}_{\infty, t}(\epsilon))\right).
\end{align}

\paragraph{Step 2: bounding $A_{n,t}$.} Let $n_0 = \left\lfloor np \right\rfloor$ and $n_1 = n - n_0$. Denote $H^0_{B_n, 1},...,H^0_{B_n, n_0}$, the trajectories in the training set and $H^1_{B_n, 1},...,H^1_{B_n, n_1}$ the trajectories in the test set corresponding to sample split $B_n$. 

 Since $\tilde{Q}_t^{\delta_n} \in [\delta_n, 1 - \delta_n]$, lemma \ref{lemma_bound_phi} shows that 
\begin{align}
    |\log(\tilde{Q}_t^{\delta_n}(\epsilon)(H^1_{B_n,i}))| \leq & \log (1 +\delta_n^{-1} e^{\epsilon}) \\
    \lesssim & \log(1 /\delta_n),
\end{align}
and
\begin{align}
|\log(1-\tilde{Q}_t^{\delta_n}(\epsilon)(H^1_{B_n,i}))| \leq& \log (1 +\delta_n^{-1} e^{\epsilon}) \\
\lesssim  & \log(1 /\delta_n).
\end{align}
Recalling the expression of $l_{n,t}$, the fact that by assumption, for every $i=1,...,n_1$, $\rho_{1:t}(H^1_{B_n,i}) \leq M$ almost surely, and the fact that $\tilde{U}_{t,n,B_n}(H^1_{B_n,i}) \in [0,1]$, we can bound the loss as follows:
\begin{align}
    |l_t(\tilde{Q}^{\delta_n}_t(P^0_{n,B_n})(\epsilon))(H^1_{B_n,i})| \lesssim M \log(1 / \delta_n),
\end{align}
almost surely, for every $i=1,...,n_1$.
Conditional on $H^0_{B_n, 1},...,H^0_{B_n, n_0}$, $\
\linebreak l_t(\tilde{Q}^{\delta_n}_t(P^0_{n,B_n})(\epsilon))(H^1_{B_n,1}),...,l_t(\tilde{Q}^{\delta_n}_t(P^0_{n,B_n})(\epsilon))(H^1_{B_n,n_1})$ are i.i.d. random variables upper bounded, up to a constant, by $M \log(1 /\delta_n)$. Therefore, by Hoeffding's inequality, for every $x > 0$,
\begin{align}
    &P[|(P^1_{n,B_n} - P^{\pi_b}) l_t(\tilde{Q}^{\delta_n}_t(P^0_{n,B_n})(\epsilon))| > x]\\
     \leq & 2 \exp\left(-\frac{nx^2}{2 \log(1/\delta_n)}\right).
\end{align}
Therefore,
\begin{align}
    &(P^1_{n,B_n} - P^{\pi_b}) l_t(\tilde{Q}^{\delta_n}_t(P^0_{n,B_n})(\epsilon))\\
     = & O_P\left(\sqrt{\frac{\log(1/\delta_n)}{n}} \right).
\end{align}
Since $B_n$ takes finitely many values, and that $\log(1 / \delta_n) = o(n)$, the above display implies that
\begin{align}
     E_{B_n}(P^1_{n,B_n} - P^{\pi_b}) l_t(\tilde{Q}^{\delta_n}_t(P^0_{n,B_n})(\epsilon)) = o_P(1).
\end{align}

\paragraph{Step 3: bounding $B_{n,t}$.}
Since $\rho_{1:t} \leq M$ for every $t$ almost surely under $P^{\pi_b}$, there exists a subset $\bar{\mathcal{H}}$ of $\mathcal{H}$ such that $H \in \bar{\mathcal{H}}$ almost surely, and for all $h\in \bar{\mathcal{H}}$, $\rho_{1:t}(h) \leq M$ for every $t$. As far as integrals w.r.t. are concerned, $P^{\pi_b}$, it is enough to characterize the integrands on $\bar{\mathcal{H}}$.
Let $h$ be an arbitrary element of $\bar{\mathcal{H}}$. 
\begin{align}
    &l_t(\tilde{Q}_t^{\delta_n}(P^0_{n,B_n})(\epsilon)) - l_t(\tilde{Q}_{\infty,t}(\epsilon)) \\
    &= \rho_{1:t}(h) \bigg\{ \tilde{u}_{t,n,B_n}(h) \bigg( \log (\tilde{Q}_t^{\delta_n}(P^0_{n,B_n})(\epsilon)(h))\\
    &\qquad\qquad - \log(\tilde{Q}_{\infty,t}(\epsilon)(h)) \bigg)\\
    &+ (1-\tilde{u}_{t,n,B_n}(h)) \bigg( \log (1-\tilde{Q}_t^{\delta_n}(P^0_{n,B_n})(\epsilon)(h)) \\
    &\qquad\qquad- \log(1-\tilde{Q}_{\infty,t}(\epsilon)(h))\bigg) \bigg\} . \label{proof_lemma_convergence_risk_eq1}
\end{align}
From lemma \ref{lemma_bound_dphi_dx} and the mean value theorem, for all $n$ such that $\delta_n \leq \delta$,
\begin{align}
&\big|\log (\tilde{Q}_t^{\delta_n}(P^0_{n,B_n})(\epsilon)(h)) - \log(\tilde{Q}_{\infty,t}(\epsilon)(h))\big| \\
&\leq 2 \delta_n^{-1} \big|\tilde{Q}_t^{\delta_n}(P^0_{n,B_n})(h) - \tilde{Q}_{\infty,t}(h))\big|\\
&\leq 2 \delta_n^{-1} \big|\tilde{Q}_t(P^0_{n,B_n})(h) - \tilde{Q}_{\infty,t}(h))\big|\\
&\leq 2 \delta_n^{-1} (2\Delta_t)^{-1} \big|\hat{Q}_t(P^0_{n,B_n})(h) - Q_{\infty,t}(h))\big|. \label{proof_lemma_convergence_risk_ineq1}
\end{align}
The third line above follows from the fact that, for all $x \in [0,1]$, $y \in [\delta, 1-\delta]$, and $n$ such that $\delta_n \leq \delta$, it holds that $|\max(\delta_n, \min(1-\delta_n, x)) - y| \leq |x-y|$.
The same reasoning shows that
\begin{align}
    &\big|\log (1 - \tilde{Q}_t^{\delta_n}(P^0_{n,B_n})(\epsilon)(h)) - \log(1 - \tilde{Q}_{\infty,t}(\epsilon)(h))\big|\\
    &\leq 2 \delta_n^{-1} (2\Delta_t)^{-1} \big|\hat{Q}_t(P^0_{n,B_n})(\epsilon)(h) - Q_{\infty,t}(\epsilon)(h))\big| \label{proof_lemma_convergence_risk_ineq2}.
\end{align}\
Taking the absolute value of \eqref{proof_lemma_convergence_risk_eq1}, using the triangle inequality, the fact that $0 \leq \rho_{1:t}(h) \leq M$, that $\tilde{u}_{t,n,B_n}(h) \in [0,1]$ and the upper bounds \eqref{proof_lemma_convergence_risk_ineq1} and \eqref{proof_lemma_convergence_risk_ineq2} yields
\begin{align}
    &\big| l_t(\tilde{Q}_t^{\delta_n}(P^0_{n,B_n})(\epsilon)) - l_t(\tilde{Q}_{\infty,t}(\epsilon)) \big| \\
    & \leq M 2 \delta_n^{-1} \Delta_t^{-1}\big|\hat{Q}_t(P^0_{n,B_n})(h) - Q_{\infty,t}(h))\big|.
\end{align}
Therefore, using the triangle inequality and Cauchy-Schwartz, and the fact that $B_n$ takes finitely many values,
\begin{align}
    &\big| E_{B_n} P^{\pi_b} ( l_t(\tilde{Q}_t^{\delta_n}(P^0_{n,B_n})(\epsilon)) - l_t(\tilde{Q}_{\infty,t}(\epsilon))) \big| \\
    &\leq M 2 \delta_n^{-1} \Delta_t^{-1} \|\hat{Q}_t(P^0_{n,B_n}) - Q_{\infty,t}\|_{P^{\pi_b}, 2}.
\end{align}
Therefore, using the assumption that $\|\hat{Q}_t(P^0_{n,B_n}) - Q_{\infty,t}\|_{P^{\pi_b}, 2} = o_P(\delta_n)$, we have
\begin{align}
    E_{B_n} P^{\pi_b} ( l_t(\tilde{Q}_t^{\delta_n}(P^0_{n,B_n})(\epsilon)) - l_t(\tilde{Q}_{\infty,t}(\epsilon))) = o_P(1).
\end{align}

Therefore, putting together that $A_{n,t}=o_P(1)$ and $B_{n,t}=o_P(1)$ and the fact that $\mathcal{R}_{n,t} - \mathcal{R}_{\infty,}(\epsilon) = A_{n,t} + B_{n.t}$ gives the desired result.
\end{proof}

\begin{lemma}\label{lemma_convergence_epsilon}
Make the same assumptions as in lemma \ref{lemma_pointwise_targeting_risk_convergence} above. Then
\begin{itemize}
    \item $\mathcal{R}_{\infty, t}$ has a unique minimizer $\epsilon_{\infty,t}$,
    \item $\epsilon_{n,t} - \epsilon_{\infty,t} = o_P(1)$.
\end{itemize}
\end{lemma}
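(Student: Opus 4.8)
The plan is to exploit the convexity of the risks in $\epsilon$ together with the pointwise convergence already established in Lemma \ref{lemma_pointwise_targeting_risk_convergence}, and then to invoke the classical fact that pointwise convergence of convex functions upgrades to uniform convergence on compact sets, which in turn forces the minimizers to converge once the limit has a unique minimizer. I would organize the argument in three steps. First, I would record that both $\mathcal{R}_{n,t}$ and $\mathcal{R}_{\infty,t}$ are convex in $\epsilon$. Writing the per-trajectory loss in the log-sum-exp form used in the proof of Lemma \ref{lemma_targeting_solves_EICeq}, namely
\begin{align}
\rho_{1:t}\Big(\tilde u_t \log\big(1+e^{-a-\epsilon}\big) + (1-\tilde u_t)\log\big(1+e^{a+\epsilon}\big)\Big), \qquad a=\sigma^{-1}(\tilde Q_t),
\end{align}
each summand is a nonnegative combination ($\rho_{1:t}\geq 0$, $\tilde u_t\in[0,1]$) of the convex maps $\epsilon\mapsto\log(1+e^{\pm(a+\epsilon)})$, hence convex; averaging over the sample and over $B_n$, or integrating against $P^{\pi_b}$, preserves convexity.

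Second, I would show $\mathcal{R}_{\infty,t}$ has a unique minimizer. Differentiating twice gives $\mathcal{R}_{\infty,t}''(\epsilon)=E_{P^{\pi_b}}[\rho_{1:t}\,\sigma(a+\epsilon)(1-\sigma(a+\epsilon))]$ with $a=\sigma^{-1}(\tilde Q_{\infty,t}(S_t,A_t))$; note that the dependence on $\tilde u_t$ cancels. Since $\sigma(\cdot)(1-\sigma(\cdot))>0$ everywhere, $\tilde Q_{\infty,t}\in[\delta,1-\delta]$ keeps $a$ bounded, and $E_{P^{\pi_b}}[\rho_{1:t}]=1$ forces $\rho_{1:t}>0$ on a set of positive measure, this second derivative is strictly positive at every finite $\epsilon$, so $\mathcal{R}_{\infty,t}$ is strictly convex and has at most one minimizer. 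For existence I would use coercivity: from $\mathcal{R}_{\infty,t}'(\epsilon)=E_{P^{\pi_b}}[\rho_{1:t}(\sigma(a+\epsilon)-\tilde u_t)]$ one reads off $\mathcal{R}_{\infty,t}'(\epsilon)\to E_{P^{\pi_b}}[\rho_{1:t}(1-\tilde u_t)]$ as $\epsilon\to+\infty$ and $\to -E_{P^{\pi_b}}[\rho_{1:t}\tilde u_t]$ as $\epsilon\to-\infty$, so the slope changes sign and a finite minimizer $\epsilon_{\infty,t}$ exists, provided the normalized reward-to-go is non-degenerate in the sense that both $E_{P^{\pi_b}}[\rho_{1:t}\tilde u_t]$ and $E_{P^{\pi_b}}[\rho_{1:t}(1-\tilde u_t)]$ are strictly positive.

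Third, with uniqueness and coercivity of the limit in hand, I would conclude $\epsilon_{n,t}\xrightarrow{P}\epsilon_{\infty,t}$ via the convexity argument for M-estimators. Lemma \ref{lemma_pointwise_targeting_risk_convergence} gives $\mathcal{R}_{n,t}(\epsilon)-\mathcal{R}_{\infty,t}(\epsilon)=o_P(1)$ for each fixed $\epsilon$, and because the $\mathcal{R}_{n,t}$ are convex, this pointwise convergence upgrades to uniform convergence in probability on every compact subset of $\mathbb{R}$. Fixing a level set of $\mathcal{R}_{\infty,t}$ strictly above its minimum, coercivity confines $\epsilon_{\infty,t}$ to a compact $K$ and, combined with convexity of $\mathcal{R}_{n,t}$ and uniform convergence on a slightly enlarged compact set, rules out the $\epsilon_{n,t}$ escaping $K$ with probability tending to one; uniform convergence on $K$ plus the unique, well-separated minimizer of $\mathcal{R}_{\infty,t}$ then delivers $\epsilon_{n,t}\to\epsilon_{\infty,t}$ in probability.

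The main obstacle is the existence/coercivity claim in Step two together with the escape-to-infinity control in Step three: strict convexity is essentially free from $\tilde Q_{\infty,t}\in[\delta,1-\delta]$ and $E_{P^{\pi_b}}[\rho_{1:t}]=1$, but guaranteeing a finite minimizer requires the mild non-degeneracy of the normalized reward-to-go noted above (so that the limiting slopes at $\pm\infty$ have opposite signs), and the standard yet slightly delicate part is converting the per-$\epsilon$ stochastic convergence of convex functions into argmin convergence while preventing the empirical minimizers from running off to $\pm\infty$.
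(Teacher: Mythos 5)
Your proposal is correct and rests on the same core strategy as the paper — convexity of the risks plus the pointwise convergence from Lemma \ref{lemma_pointwise_targeting_risk_convergence} forces convergence of the minimizers — but the execution differs in two ways worth noting. For the consistency step, the paper uses a more elementary, self-contained three-point argument: on the event that $\mathcal{R}_{n,t}$ is within $\Delta/3$ of $\mathcal{R}_{\infty,t}$ at just the three points $\epsilon_{\infty,t}$ and $\epsilon_{\infty,t}\pm\eta$ (an event of probability at least $1-\kappa$ for large $n$, by pointwise convergence), convexity of $\mathcal{R}_{n,t}$ alone forces $\mathcal{R}_{n,t}(\epsilon)\geq \mathcal{R}_{n,t}(\epsilon_{\infty,t})+\Delta/3$ for every $\epsilon$ with $|\epsilon-\epsilon_{\infty,t}|\geq\eta$, trapping the minimizer in $[\epsilon_{\infty,t}-\eta,\epsilon_{\infty,t}+\eta]$; this sidesteps the convexity lemma (pointwise-to-uniform convergence on compacts) that your route invokes, and also handles escape-to-infinity automatically. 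Conversely, your treatment of the first bullet is more careful than the paper's. The paper asserts that $Q_{\infty,t}\in[\delta,1-\delta]$ makes $\mathcal{R}_{\infty,t}$ globally $m$-strongly convex and reads off existence and uniqueness of $\epsilon_{\infty,t}$ from that; but the second derivative $E_{P^{\pi_b}}\left[\rho_{1:t}\,\sigma(a+\epsilon)(1-\sigma(a+\epsilon))\right]$ vanishes as $|\epsilon|\to\infty$ (with $a$ bounded and $\rho_{1:t}\leq M$, by dominated convergence), so strong convexity holds only on compacts — enough for the paper's separation inequalities at $\epsilon_{\infty,t}\pm\eta$, but not for existence of a finite minimizer. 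Your coercivity argument, with the explicit non-degeneracy condition that both $E_{P^{\pi_b}}[\rho_{1:t}\tilde u_t]$ and $E_{P^{\pi_b}}[\rho_{1:t}(1-\tilde u_t)]$ be strictly positive, is genuinely needed here: if, say, $\tilde u_t=1$ almost surely, the risk is monotone decreasing and $\epsilon_{\infty,t}$ does not exist. So your version surfaces a mild implicit assumption that the paper's proof glosses over, while the paper's three-point trick buys a shorter, more elementary consistency argument.
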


\begin{proof}\label{proof_lemma_convergence_epsilon}
Let $\eta > 0$ and $\kappa > 0$. The fact that $Q_{\infty, t} \in [\delta, 1- \delta]$, with $\delta \in (0,1/2)$ implies that $\mathcal{R}_{\infty, t}$ is $m$-strongly convex for some $m>0$. Therefore $\mathcal{R}_{n,t}$ has a unique minimizer on $\mathbb{R}$ that we will denote $\epsilon_{\infty, t}$. Denoting $\Delta := m \eta^2  /2$, we have, from $m$-strong convexity, that
\begin{align}
    \mathcal{R}_{\infty, t}(\epsilon_{\infty, t} + \eta) &\geq \mathcal{R}_{\infty, t}(\epsilon_{\infty, t}) +  \Delta, \label{proof_lemma_convergence_epsilon_eq1}\\
    \text{ and }\mathcal{R}_{\infty, t}(\epsilon_{\infty, t} - \eta) &\geq \mathcal{R}_{\infty, t}(\epsilon_{\infty, t}) +  \Delta. \label{proof_lemma_convergence_epsilon_eq2}
\end{align}
Consider the following event:
\begin{align}
    \mathcal{E} := \bigg\{ & |\mathcal{R}_{n,t}(\epsilon) - \mathcal{R}_{\infty, t}(\epsilon)| \leq \frac{\Delta }{3},\\
    &  \forall \epsilon \in \{\epsilon_\infty, \epsilon_\infty - \eta, \epsilon_\infty + \eta \} \bigg\}.
\end{align}
From the pointwise convergence in probability of $\mathcal{R}_{n,t}$, which is given to us by lemma \ref{lemma_pointwise_targeting_risk_convergence} above, there exists $n_0$ such that for all $n \geq n_0$, $P[\mathcal{E}] \geq 1 - \kappa$. Assume $\mathcal{E}$ holds. Then, from \eqref{proof_lemma_convergence_epsilon_eq1} and \eqref{proof_lemma_convergence_epsilon_eq2}, and the inequalities that define event $\mathcal{E}$, we have that 
\begin{align}
    \mathcal{R}_{n,t}(\epsilon_\infty \pm \eta) \geq \mathcal{R}_{n,t}(\epsilon_\infty) + \frac{\Delta}{3}.
\end{align}
From convexity of $\mathcal{R}_{n,t}$, the above display implies that for all $\epsilon$ such that $|\epsilon - \epsilon_{\infty, t}| \geq \eta$, we have that 
\begin{align}
    \mathcal{R}_{n,t}(\epsilon) \geq \mathcal{R}_{n,t}(\epsilon_{\infty, t}) + \frac{\Delta}{3}.
\end{align}
Since $\epsilon_{n, t}$ minimizes $\mathcal{R}_{n,t}$, we must have $\mathcal{R}_{n,t} \leq \mathcal{R}_{n,t}(\epsilon_{\infty, t}) < \mathcal{R}_{n,t}(\epsilon_{\infty, t}) + \Delta / 3 $. Therefore, if $\mathcal{E}$ is realized, $\epsilon_{n, t}$ must lie in $[\epsilon_{\infty, t} - \eta, \epsilon_{\infty, t} + \eta]$. Since $\mathcal{E}$ is realized with probability $1 - \kappa$, this concludes the proof.
\end{proof}

\subsection{Proof of theorem 2}

The proof relies on the following empirical process result.

\begin{lemma}\label{lemma_equicontinuity}
Consider $\mathcal{F}_\eta(P^0_{n,B_n})$ as defined in the previous section. Consider $\eta_n = o_P(1)$. Then
\begin{align}
\sqrt{n} \sup_{f\in \mathcal{F}_{\eta_n}(P^0_{n,B_n})}|(P^1_{n,B_n} - P^{\pi_b})f| = o_P(1).
\end{align}
\end{lemma}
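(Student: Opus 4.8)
The plan is to mimic the proof of lemma \ref{lemma_simple_algo_equicontinuity} from the single-split case, adding the extra ingredient of conditioning on the training split so that the test-set observations can be treated as i.i.d.\ draws that are independent of the fitted functions. Recall that $\mathcal{F}_\eta(P^0_{n,B_n})$ is the class of differences $g_\epsilon(\hat{Q}(P^0_{n,B_n}), \rho) - g_{\epsilon_\infty}(\hat{Q}(P^0_{n,B_n}), \rho)$ indexed by $\epsilon$ with $\|\epsilon - \epsilon_\infty\|_\infty \leq \eta$. The first key observation is that the geometric-complexity bounds established earlier are uniform in the initial estimator: the Lipschitz constant $L = 2MT$ from lemma \ref{lemma_bound_and_Lipschitzness_EIF} does not depend on $Q'$, so the covering-number bound of lemma \ref{lemma_covering_number} and hence the uniform entropy integral bound of lemma \ref{lemma_uniform_entropy_integral} hold for $\mathcal{F}_\eta(P^0_{n,B_n})$ with the same constants, whatever the realization of the training data. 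The function $h \mapsto \eta L$ is an envelope for this class.

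First I would fix a split value $B_n = v$ and condition on the training set $P^0_{n,v}$. Conditionally, $\hat{Q}(P^0_{n,v})$ is a non-random function, the test observations are i.i.d.\ draws from $P^{\pi_b}$ independent of it, and lemma \ref{lemma_Pollard_maximal_inequality} (Pollard) applies directly to the test-set empirical process indexed by $\mathcal{F}_\xi(P^0_{n,v})$, for any \emph{deterministic} radius $\xi$. Combining Pollard's inequality with the entropy bound of lemma \ref{lemma_uniform_entropy_integral} and the envelope norm $\|F_\xi\|_{L_2(P^{\pi_b})} = \xi L$ yields a conditional bound of the form $E[\sqrt{n}\sup_{f\in\mathcal{F}_\xi(P^0_{n,v})}|(P^1_{n,v} - P^{\pi_b})f| \mid P^0_{n,v}] \leq K\xi L$ for a universal constant $K$ (the test set being a fixed fraction of the sample, the $\sqrt{n_1}$ rate produced by Pollard coincides with $\sqrt{n}$ up to a constant absorbed into $K$). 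Because this bound is deterministic and uniform over $v$, it survives taking expectation over the training data and over the finitely many values of $B_n$, giving $E[\sqrt{n}\sup_{f\in\mathcal{F}_\xi(P^0_{n,B_n})}|(P^1_{n,B_n} - P^{\pi_b})f|] \leq K\xi L$.

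The remaining difficulty is that the radius $\eta_n$ in the statement is random, so Pollard cannot be invoked with it directly; I would handle this exactly as in lemma \ref{lemma_simple_algo_equicontinuity}, replacing the random radius by a deterministic one on a high-probability event. Fix $\kappa > 0$ and $\gamma \in (0,1/2)$ and set $\xi = \kappa\gamma/(2KL)$. Define $\mathcal{E}_2 = \{\sqrt{n}\sup_{f\in\mathcal{F}_\xi(P^0_{n,B_n})}|(P^1_{n,B_n} - P^{\pi_b})f| \leq \kappa\}$; by Markov's inequality and the expectation bound above, $P[\mathcal{E}_2^c] \leq K\xi L/\kappa = \gamma/2$. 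Define $\mathcal{E}_1 = \{\eta_n \leq \xi\}$; since $\eta_n = o_P(1)$, there is $n_0$ so that $P[\mathcal{E}_1] \geq 1 - \gamma/2$ for all $n \geq n_0$. On $\mathcal{E}_1 \cap \mathcal{E}_2$ we have $\mathcal{F}_{\eta_n}(P^0_{n,B_n}) \subseteq \mathcal{F}_\xi(P^0_{n,B_n})$, so the supremum over the smaller class is also $\leq \kappa$; a union bound then gives $P[\sqrt{n}\sup_{f\in\mathcal{F}_{\eta_n}(P^0_{n,B_n})}|(P^1_{n,B_n} - P^{\pi_b})f| > \kappa] \leq \gamma$ for $n \geq n_0$. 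As $\kappa$ and $\gamma$ were arbitrary, this is precisely the claimed $o_P(1)$ statement.

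The main obstacle I anticipate is the bookkeeping around the conditioning rather than any new analytic input: one has to verify that the entropy and envelope bounds are genuinely uniform over the realization of $\hat{Q}(P^0_{n,B_n})$, so that conditioning on the training data produces a \emph{deterministic} bound, and that averaging over the finitely many values of $B_n$ does not disturb the rate. Both are mild once the uniformity of $L$ in lemma \ref{lemma_bound_and_Lipschitzness_EIF} is noted, but they are exactly where the cross-validated argument differs from the single-split proof of lemma \ref{lemma_simple_algo_equicontinuity}.
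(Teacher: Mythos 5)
Your proposal is correct and takes essentially the same approach as the paper: the paper disposes of this lemma in one line, declaring it ``a direct corollary of lemma \ref{lemma_simple_algo_equicontinuity},'' and your argument is precisely the expanded justification of that claim, reusing the same covering-number, entropy-integral, and Pollard maximal-inequality machinery together with the same two-event localization of the random radius. The only content you add beyond the paper is the explicit bookkeeping (conditioning on the training split, uniformity of the constant $L=2MT$ in the initial estimator, and averaging over the finitely many values of $B_n$), which is exactly what the paper's one-line proof implicitly relies on.
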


\begin{proof}
This is a direct corollary of lemma \ref{lemma_simple_algo_equicontinuity}.
\end{proof}

\begin{proof}[Proof of theorem 2.]
Recall that $\hat{V}_1^{\pi_e, TMLE}(s_1) = E_{B_n}\hat{V}(P^0_{n,B_n})(\epsilon_{n,1})(s_1)$.
Therefore, from lemma \ref{lemma_simple_algo_1st_order_expansion},
\begin{align}
    &\hat{V}_1^{\pi_e, TMLE}(s_1) - V^{\pi_e}(s_1) \\
    =& -E_{B_n} P^{\pi_b} D(\hat{Q}(P^0_{n,B_n})(\epsilon_n), \hat{V}(P^0_{n,B_n})(\epsilon_n)). \label{pf_thm1_eq1}
\end{align}
Recall that from lemma \ref{lemma_targeting_solves_EICeq},
\begin{align}
    &\hat{V}_1^{\pi_e, TMLE}(s_1) - V^{\pi_e}(s_1) \\
    =& E_{B_n} P^1_{n,B_n} D(\hat{Q}(P^0_{n,B_n})(\epsilon_n), \hat{V}(P^0_{n,B_n})(\epsilon_n)). \label{pf_thm1_eq2}
\end{align}
Summing \eqref{pf_thm1_eq1} and \eqref{pf_thm1_eq2} yields
\begin{align}
    &\hat{V}_1^{\pi_e, TMLE}(s_1) - V^{\pi_e}(s_1) =\\
    &E_{B_n} (P^1_{n,B_n} -  P^{\pi_b}) D(\hat{Q}(P^0_{n,B_n})(\epsilon_n), \hat{V}(P^0_{n,B_n})(\epsilon_n)).
\end{align}
Using the notation $f_\epsilon$ introduced in the previous section, we can rewrite the above expression as
\begin{align}
&\hat{V}_1^{\pi_e, TMLE}(s_1) - V^{\pi_e}(s_1)\\
 &= E_{B_n} (P^1_{n,B_n} -  P^{\pi_b}) f_{\epsilon_\infty}(P^0_{n,B_n}) \\
&+ E_{B_n} (P^1_{n,B_n} -  P^{\pi_b}) (f_{\epsilon_n}(P^0_{n,B_n}) - f_{\epsilon_\infty}(P^0_{n,B_n})). \label{pf_thm1_eq3}
\end{align}
By the central limit theorem for triangular arrays  
\begin{align}
    &\sqrt{n}(P^1_{n,B_n} -  P^{\pi_b}) f_{\epsilon_\infty}(P^0_{n,B_n}) \\
    &\xrightarrow{d} \mathcal{N}(0, Var( D(Q_\infty(\epsilon_\infty), V_\infty(\epsilon_\infty)(H))).
\end{align}
Therefore, $\sqrt{n}(P^1_{n,B_n} -  P^{\pi_b}) f_{\epsilon_\infty}(P^0_{n,B_n}) = O_P(n^{-1/2})$.
The second term is the RHS of \label{pf_thm1_eq3} is $o_P(n^{-1/2})$ by lemma \ref{lemma_equicontinuity}.
Since $B_n$ takes values on a finite support, this implies that
\begin{align}
\hat{V}_1^{\pi_e, TMLE}(s_1) - V^{\pi_e}(s_1) = O_P(n^{-1/2}).
\end{align}

\end{proof}

\section{Experiment Details}
In this section, we provide full details of our experiments and utilized domains. In particular, we provide detailed descriptions of discrete-state domains ModelWin, ModelFail and Gridworld. 

\subsection{ModelWin}
The ModelWin environment was constructed in order to simulate situations in which the approximate model of the MDP will converge quickly to the truth. On the other hand, importance-sampling based methods might suffer from high variance. 

The ModelWin MDP consists of 3 states, and the agent always begins at state $s_1$. At $s_1$, the agent stochastically picks between two actions, $a_1$ and $a_2$. Under action $a_1$, the agent transitions to $s_2$ with probability 0.4 and $s_3$ with probability 0.6. On the other hand, under action $a_2$ the agent does the opposite- it transitions to $s_2$ and $s_3$ with probability 0.6 and 0.4, respectively. Under both actions, if the agent transitions to $s_2$, it gets a positive reward of +1. Consequently $s_1$ to $s_3$ transitions are penalized with -1 reward. In states $s_3$ and $s_2$, both actions $a_1$ and $a_2$ will take the agent back to $s_1$ with probability 1 and no reward. The horizon is set to $T=20$.  

The considered behavior policy takes action $a_1$ from $s_1$ with probability 0.73, and action $a_2$ with probability 0.27. The evaluation policy has the opposite behavior. Note that both the behavior and evaluation policies select actions uniformly at random while in states $s_1$ and $s_2$.

\subsection{ModelFail}

Unlike the ModelWin domain, the agent does not observe the true underlying states of the MDP in ModelFail. The purpose of this domain is to test environments are not known perfectly, and where the approximate model will fail to converge to the true MDP. ModelFail attempts to mimic partial observability, common in real applications. 

The actual MDP consists of 4 states, 3 states and a final absorbing state, however the agent is not able to distinguish between them. The agent always starts at the same state, $s_1$, where it has two actions available. With actions $a_1$ it transitions into the upper state ($s_2$), whereas with action $a_2$ it goes to the lower state ($s_3$). No matter which state the agent transitioned to, both $s_2$ and $s_3$ lead to the terminal absorbing state $s_4$. However, $s_2$ to $s_4$ transition carries reward +1, whereas $s_3$ to $s_4$ leads to reward of -1. The horizon is $T=2$. 

The considered behavior policy takes action $a_1$ with probability 0.88, and action $a_2$ with probability 0.22. The evaluation policy has the opposite behavior.

\subsection{Gridworld}

The last discrete-state environment used is a $4 \times 4$ gridworld domain with 4 actions (up, down, left, right) developed by \cite{ThomasThesis}. As emphasized by \cite{thomas2016}, this is a domain specifically developed for evaluation of OPE estimators. However, due to its deterministic nature, it will favor model-based approaches.

The horizon for GridWorld is $T=100$, after which the episode ends unless the terminal state of $s_{12}$ is reached before $T$. The reward is always -1, expect at states $s_8$ where it is +1, $s_{12}$ with +10, and $s_6$ where the agent is penalized with -10 reward. 

We used two different polices for the gridworld, as described in \cite{ThomasThesis}. In particular, policy $\pi_1$ selects each of the 4 actions with equal probability regardless of the observation. Intuitively this policy takes a long time to reach the goal, and potentially often visits the state with the maximum negative reward. In addition, we also considered the near-optimal+ policy $\pi_5$, which exemplifies a near-deterministic near-optimal policy that moves quickly to $s_8$ with reward +1, without visiting $s_6$ with -10 reward. At $s_8$ it chooses action down with high probability, collecting as many positive rewards as possible until the time limit runs out. Once it eventually chooses the right action, it moves almost deterministically to $s_{12}$ where it collects its final reward and end the episode. 

\end{document}